\numberwithin{equation}{section}
\providecommand{\lowh}{\underline{h}}
\begin{document}


\RUNAUTHOR{Mohsen Bayati, Junyu Cao, Wanning Chen}

 \RUNTITLE{Speed Up the Cold-Start Learning in Two-Sided Bandits with Many Arms}
\RUNTITLE{Speed Up the Cold-Start Learning in Two-Sided Bandits with Many Arms}

\TITLE{Speed Up the Cold-Start Learning in Two-Sided Bandits with Many Arms\thanks{Alphabetical author order}
}


\ARTICLEAUTHORS{%
	\AUTHOR{Mohsen Bayati}
	\AFF{
		Graduate School of Business, Stanford University, \EMAIL{bayati@stanford.edu}}
    \AUTHOR{Junyu Cao}
	\AFF{
		McCombs School of Business, The University of Texas at Austin, \EMAIL{junyu.cao@mccombs.utexas.edu}}
  \AUTHOR{Wanning Chen}
	\AFF{Foster School of Business, University of Washington, \EMAIL{wnchen@uw.edu}}

	} 

\ABSTRACT{Multi-armed bandit (MAB) algorithms are efficient approaches to reduce the opportunity cost of online experimentation and are used by companies to find the best product from periodically refreshed product catalogs. However, these algorithms face the so-called \emph{cold-start} at the onset of the experiment due to a lack of knowledge of customer preferences for new products, requiring an initial data collection phase known as the \emph{burn-in period}. During this period, standard MAB algorithms operate like randomized experiments, incurring large burn-in costs which scale with the large number of products. We attempt to reduce the burn-in by identifying that many products can be cast into \emph{two-sided products}, and then naturally model the rewards of the products with a matrix, whose rows and columns represent the two sides respectively. Next, we design \emph{two-phase} bandit algorithms that first use subsampling and low-rank matrix estimation to obtain a substantially smaller targeted set of products and then apply a \textsf{UCB} procedure on the target products to find the best one. We theoretically show that the proposed algorithms lower costs and expedite the experiment in cases when there is limited experimentation time along with a large product set. Our analysis also reveals three regimes of long, short, and ultra-short horizon experiments, depending on dimensions of the matrix. Empirical evidence from both synthetic data and a real-world dataset on music streaming services validates this superior performance.
}%

\KEYWORDS{online experimentation, multi-armed bandit with many arms, cold-start, low-rank matrix.}

\maketitle


\section{Introduction}\label{sec:intro}

Online experimentation has become a major force in assessing different versions of a product, finding better ways to engage customers and generating more revenue. These experiments happen on a recurring basis with periodically refreshed product catalogs. One family of online experimentation is the so-called multi-armed bandit (MAB) algorithms, where each arm models a version of the product. At each time period, the decision-maker tests out a version, learns from the observed outcome, and proceeds to the next time period. It is well-known that MAB algorithms balance exploration and exploitation to handle the lack of prior information at the onset of an experiment. They can dynamically allocate experimentation effort to versions that are performing well, in the meantime allocating less effort to versions that are underperforming. As a result, MAB algorithms can identify the optimal product under relatively low opportunity costs incurred from selecting suboptimal products. Consequently, fewer customers are subjected to potentially inferior experiences. We want to highlight that what is treated as product is not confined to merchandise in the e-commerce setting. It includes broader categories from ads campaign product in online advertisement to drug product in drug development.

At the onset of the experimentation, MAB algorithms face what is known as the \emph{cold-start problem} when there is a large product set: many versions of a product are brand new to the customers. It is not yet clear to the algorithms how to draw any useful inferences before gathering some information. Therefore, many MAB algorithms involve an initial data collection phase, known as the \emph{burn-in period} \citep{burning}\footnote{This term is well-known in MCMC literature as data-collection period \citep{brooks2011handbook} and hence practitioners also use it for data-collection period in MAB experiments.}. During this period, MAB algorithms operate as nearly equivalent to randomized experiments, incurring costs that scale with the number of products. Under such circumstances, the cumulative cost, or regret, of current practices is huge due to a costly and lengthy burn-in period\footnote{It should be noted that, while standard MAB algorithms are recognized for \emph{addressing} cold-start problems in certain studies \citep{ye2023cold} through effective exploration and exploitation, this capability is contingent on a relatively short \emph{burn-in period} compared to the total experimentation horizon. As we have discussed, given a substantial number of arms and a constrained time horizon, standard MAB algorithms cannot manage the cold-start problem effectively.}. 

Our goal is to come up with a framework that can be cost-effective, which is achieved by first identifying a two-sided structure of the products. In many applications, because the products have two major components, the bandit arms can be viewed as versions of a \emph{two-sided product}. To be more specific, let us provide two examples that will reappear in the later empirical studies.

\begin{example}
Consider an online platform that wants to select one user segment and one content creator segment that interact the most to target an advertisement campaign \citep{bhargava2022creator,lops2011content,geng2020online}. Such advertisement campaign is a two-sided product where user segment and content creator segment are its two sides. There are many user segments and many content creator segments. In each time period, the platform tests a pair of user segment and content creator segment and observes the amount of interaction between them, based on which the platform decides which pair to test next. The number of products scales quickly: ten user segments and ten content creator segments already give rise to one hundred pairs, and this is just an underestimate of how many segments there are for each side.
\label{ex:ads}
\end{example}

\begin{example}\label{ex: homepage}
Consider an online platform that wants to design a homepage with a headline and an image. The platform's goal is to select a pair of headline and image that attracts the most amount of user engagement. The homepage design is a two-sided product with headline and image as two sides. Similar to Example \ref{ex:ads}, the number of pairs can easily be over several hundreds.
\end{example}

Besides the two examples above, there are plenty of other two-sided products: Airbnb experiences (activities and locations), Stitch Fix personal styling (tops and pants), Expedia travel (hotels and flights), car sales (packages and prices), drug development (composition of ingredients and dosages) and so on. 

The task of finding the best version of a two-sided product can be translated into finding the maximum entry of a partially observed reward matrix, which can help accelerate the learning process. That is, we can model different choices of one side to be rows of the matrix, different choices of the other side to be the columns, and the unknown rewards for the corresponding row-column pairs (e.g. the amount of interaction in Example~\ref{ex:ads} and the amount of user engagement in Example~\ref{ex: homepage}) to be the matrix entries. Mathematically, suppose the first side of the product has $d_r$ number of choices and the second side has $d_c$ number of choices, then this matrix contains the rewards of $d_rd_c$ versions of the two-sided product as entries. As aforementioned, current algorithms will randomize on the initial $\Omega(d_rd_c)$ number of periods, leading to a very costly burn-in period as $d_r$ and $d_c$ are often very large. Furthermore, when the time horizon $T$ is rather small compared to the order $d_rd_c$, a large cumulative regret that scales with $d_rd_c$ would arise because of the costly burn-in period. 

By casting the rewards into a matrix and considering its low-rank structure, we can expedite the experiment and reduce the regret by orders of magnitudes. To understand such a structure, we note that the rewards for different versions of a two-sided product depend on the interactions between some latent features of the two sides. Low-rank simply means that we only need as few as $\rrank$ latent features to explain each side, such that $\rrank\ll\min(d_r,d_c)$, where $\rrank$ is the rank of the matrix. Because of such low-rank property and with the help from low-rank matrix estimators, we only need to collect very few data to have an estimate of the rewards for all arms of interest in the burn-in period that is accurate enough to help screen out majority of the sub-optimal arms, and hence requiring substantially less downstream exploration which greatly reduces the burn-in cost.
\subsection{Our Contributions}
\label{sec:contribution}
\textbf{Algorithm.} Specifically, we have designed a new bandit algorithm called the Low-Rank Bandit (\textsf{LRB}). The algorithm can efficiently choose from a large set of two-sided products given a short time horizon. It is split into two phases: ``pure exploration phases" and ``targeted exploration + exploitation phases". At each period, the algorithm follows a schedule to enter either of the two phases. If it is in a pure exploration phase, a forced sample is selected uniformly at random and is used for the low-rank matrix estimation (i.e., the forced-sample estimates). The forced-sample estimates are used to screen out arms that are far from the optimal arm. The remaining arms form a targeted set whose size is much smaller than the initial product set. If the algorithm is in a targeted exploration + exploitation phase, we choose a product based on an off-the-shelf stochastic multi-armed bandit algorithm (e.g. \textsf{UCB}, Thompson Sampling) applied on the current targeted set as a subroutine. For illustration purpose, we use \textsf{UCB} as the subroutine throughout the paper.

Moreover, we propose Submatrix-Sampled Low-Rank Bandit (\textsf{ss-LRB}) that adds a subsampling pre-step to our regular \textsf{LRB} algorithm when we are extremely sensitive about time. Given an ultra-short horizon (formally defined in Section \ref{sub:short_horizon}) in the face of a huge number of arms, we can instead sample a submatrix of smaller dimensions and apply \textsf{LRB} to it. The intuition behind this is as follows. Even though the best entry in the submatrix might be suboptimal, because the horizon is ultra-short, the cost we save from not exploring the much larger matrix compensates for the regret we incur from picking a suboptimal entry.

\textbf{Theory.} We establish non-asymptotic dependent and independent regret bounds of \textsf{LRB} as functions of a parameter we term as ``filtering resolution" and demonstrate how to optimize such bounds. Simply put, the filtering resolution controls the number of arms that will enter the targeted exploration + exploitation phase. Specifically, we show that our algorithm achieves a strictly better regret than the typical $O(\sqrt{d_r d_c T})$ for standard non-contextual bandits when $T$ is small. By leveraging a structure of the reward distribution in the low-rank matrix characterized by a ``shrinkage rate", we show a further improvement on the regret. For the \textsf{ss-LRB} algorithm we propose, we derive the optimal subsampling ratio and an improved regret bound, both of which depend on the form of a ``subsampling cost" function. Our theory sheds light on customized strategies given different time horizon lengths and the product set sizes, distinct from most existing bandit theoretical analysis which study dependence of the regret on the time horizon asymptotically and may not perform well when the time horizon is short.

\textbf{Empirics.} Empirical evidence from synthetic data validates the superior performance of our algorithm. In addition, we illustrate the practical relevance of our algorithms by proposing a data-driven approach to select experiment-dependent hyperparameters. We evaluate such an approach on an advertisement targeting problem for the music streaming service, where we want to learn the best combination of user group and creator group to target an advertisement campaign. We show that our algorithm significantly outperforms existing bandit algorithms in finding the best combination efficiently under various time horizon lengths. 

\textbf{A reduction model for contextual bandits.}
We propose a new modeling framework to transform a stochastic linear bandit setting (a general class of bandit problems that contains contextual bandits) to the scope of our low-rank bandit problem. This allows us to empirically showcase \textsf{LRB}'s effectiveness of reducing the burn-in in contextual settings by comparing it with a standard linear bandit method called \textsf{OFUL} \citep{abbasi2011improved}.
\subsection{Related Literature}\label{sec:lit}
We will first give an overview of the bandit literature since our work addresses the setting of online decision making under bandit feedback. Next, because our work evolves around low-rank models and matrix-shaped data, we will introduce relevant literature to those topics as well.

Our work focuses on the regret minimization objective for the non-contextual bandit problems, where algorithms such as Thompson Sampling \citep{thompson1933likelihood,agrawal2012analysis,gopalan2014thompson,russo2018tutorial} and Upper Confidence Bound \citep{lai1987adaptive,katehakis1995sequential,kaelbling1993learning,auer2002finite} are some canonical works. Like ours, these algorithms explore and exploit without contextual information, i.e., they make no assumptions on dependence of the arm rewards. There is also a large literature on bandit problems with contextual information, when the arm rewards are parametric function of the observed contexts. Linear bandit \citep{auer2002nonstochastic,dani2008stochastic,rusmevichientong2010linearly,li2010contextual,chu2011contextual,agrawal2014bandits, abbasi2011improved} and its extension under generalized linear models (GLM) \citep{filippi2010parametric,li2017provably,kveton2020randomized} have been under the spotlight of many works. As the name of linear bandit suggests, the arm rewards are linear functions of the observed contexts. We refer the readers to \cite{lattimore2020bandit} for more information on bandit algorithms. 

Classical theoretical analysis of the non-contextual bandits focuses on asymptotic behavior of the regret bound under a long enough experimentation horizon. That is, the number of arms is generally assumed to be much smaller than the horizon. Specifically, in the two-sided bandit scenario we are considering, we have $d_rd_c$ number of arms, which leads to $O(\sqrt{d_rd_cT})$ regret upper bound by standard non-contextual bandit analysis.

However, as aforementioned, under a short horizon, existing bandit algorithms perform very poorly when there are many arms due to the linear dependence of the regret incurred at the beginning of the experiment on the number of arms (i.e., each arm needs to be pulled at least once). Such a cold-start problem is commonly seen in online experimentation when there are many arms and a short experimentation horizon. Examples of existing works to overcome the cold-start problem involve combining clustering and MAB algorithms \citep{miao2022context,keskin2024data}, warm starting the bandit problem using historical data \citep{banerjee2022artificial}, or using an integer program to constrain the action set \cite{bastani2022learning}.

A separate stream of literature studying many-armed or infinitely-armed bandit problems \citep{berry1997bandit,bonald2013two} has addressed this cold-start problem by using subsampling to reduce the opportunity cost \citep{wang2009algorithms, carpentier2015simple,chaudhuri2018quantile,bayati2020unreasonable}. The main differentiating factor between our work and this line of literature is that the inherent matrix structure of the two-sided products
allows us to make a more informed selection of the arms to use for targeted exploration + exploitation. An alternative solution to tackle infinitely-armed bandit problems is to assume that the reward function is a linear function of actions that belong to a compact set, e.g., as in \citep{mersereau2009structured,rusmevichientong2010linearly}. In such cases, pulling \emph{any action} provides information for reward function of \emph{all actions}.

Low-rank matrix estimators have become established tools for solving offline cold-start problems in recommendation systems \citep{lika2014facing,zhang2014addressing,volkovs2017dropoutnet}, and they are building blocks for many applications that have inherent low-rank models \citep{athey2021matrix, farias2022uncertainty, agarwal2023causal}. For a survey paper on low-rank matrix estimators, we refer the readers to \cite{davenport2016overview} and \cite{yuxin} for a detailed discussion.

Incorporating the low-rank matrix structure in bandit settings has been seen in many other works, which all consider different settings than ours. For example, \cite{katariya2017stochastic, trinh2020solving} consider only rank-1 matrices whereas our algorithm works for general low-rank matrices; \cite{kveton2017stochastic} allow a decision-maker to noisily choose and observe every entry of a $\rrank \times \rrank$ submatrix (where $\rrank$ is the rank of the underlying unknown matrix) in each period, but we only allow the selection of one entry each time and we assume the rank $\rrank$ is unknown, which is a harder problem. \cite{lu2018efficient} provide an ensemble sampling based algorithm which lacks theoretical guarantees. \cite{jun2019bilinear} address the contextual bilinear bandit setting where reward function is a bilinear function of two feature vectors and an unknown low-rank parameter matrix. They propose an \textsf{OFUL}-based algorithm that combines with low-rank matrix estimation. \cite{lu2021low} propose a similar algorithm for low-rank generalized linear models. The \textsf{ESTR} algorithm in \cite{jun2019bilinear} and \textsf{LowESTR} algorithm in \cite{lu2021low} use a novel way of exploiting the subspace so as to reduce the problem to linear bandits after exploration. They consider a more general setting and their algorithms can be adapted to solve our problems, by taking the canonical basis vectors as the two feature vectors. We will show in Appendix \ref{app:comparison} that, when narrowing down to the specific non-contextual setting we focus on, our algorithm has better theoretical and empirical performance. \cite{lu2021low} in addition propose an exponentially weighted average forecaster-based algorithm, which is statistically efficient but not computationally efficient as they have put it. \cite{kallus2020dynamic} impose a low-rank structure to their underlying parameter matrix for the dynamic assortment personalization problem in which a decision-maker picks a best subset of products for a user. \cite{hamidi2019personalizing} also impose a low-rank structure on the arm parameter set to expedite the learning process, but in their setting, contexts are observed which is different from ours. \cite{nakamura2015ucb} proposed a heuristic \textsf{UCB}-like strategy of collaborative filtering for a recommendation problem that selects the best user-item pairs, which they called the direct mail problem, but no theoretical guarantee for that approach is known. \cite{sam2023overcoming} study a class of Markov decision processes that exhibits latent low-dimensional structure with respect to the relationship between large number of states and actions. \cite{zhu2022learning} consider learning the Markov transition matrix under the low-rank structure. \cite{jain2022online} propose an explore-then-commit (ETC) approach for the online low-rank matrix completion problem, where the algorithm recommends one item per user at each round. \cite{zhou2024stochastic} propose low-rank tensor bandit algorithms for tensors that are at least three-dimensional. For more low-rank bandit works, we refer the readers to the literature sections of these related works. 

Though the low-rank structure is utilized in all the above-mentioned works, we want to emphasize that our work focuses on a short-horizon learning problem with many arms, and the purpose of utilizing the low-rank property is to efficiently filter suboptimal arms in order to construct a more focused targeted set, under very limited experimentation time.

There are recent papers such as \citep{gupta2021small,gupta2022data} and \citep{allouah2023optimal,besbes2023big} that focus on practical small data regime with limited sample size: the former group studies how to optimize decison-making in a large number of small-data problems, while the latter group studies classical pricing and newsvendor problems and showed how a small number of samples can provide valuable information, leveraging structure of the problems. Even though we focus on the burn-in period with small data, our bandit formulation with two-sided products and low-rank structure is completely different from their offline set-ups. Our problem is also different from the recent work of \cite{xu2021learning} that studies how one can learn across many contextual bandit problems, because in our model the contexts are hidden and will be estimated through learning of the low-rank structure.

Finally, recent works such as \cite{bajari2021multiple} and \cite{johari2022experimental} also look at online experimentation for two-sided settings, but from a completely different angle. Specifically, their objective is to remove bias due to the spillover effect by using two-sided randomization. 

\subsection{Organization of the Paper}
The remainder of the paper is organized as follows. We describe the problem formulation in Section \ref{sec:problem}. We present the \textsf{LRB} and our main result on the algorithm's performance in Section \ref{sec:alg}. In Section \ref{sec:subsamp}, we present the submatrix sampling technique to further enhance \textsf{LRB}'s performance when the time horizon is ultra-short. Finally, empirical results on simulated data as well as our evaluation on real music streaming data for the task of advertisement targeting are presented in Section \ref{sec:data}. It also includes numerical comparison against linear bandits in contextual settings.

\section{Model and Problem}
\label{sec:problem}

In this section, we formulate our problem as a stochastic multi-armed bandit with (two-sided) arms. These arms are represented as row-column pairs of a matrix, and the arm mean rewards are represented as entries of the matrix. We explain what it means for the reward matrix to have a low-rank structure, paving our way to understand the Low-Rank Bandit algorithm. We also introduce some necessary notation and assumptions that will be used throughout the paper. We use bold capital letters (e.g. $\bB$) for matrices and non-bold capital letters for vectors (e.g., $V$, except for $T$, which denotes the total time horizon). For a positive integer $m$, we denote the set of integers $\{1,\ldots,m\}$ by $[m]$. For a matrix $\bB$, $\bB_{ij}$ refers to its entry $(i,j)$ and $\bB_j$ refers to its $j$-th row. $\|\bB\|_{\infty}$ denotes the largest entry of a matrix $\bB$, i.e., $\|\bB\|_{\infty} =\max_{(j,k)\in[d_r]\times [d_c]} |\bB_{jk}|$. $\|\bB\|_{2,\infty}$ denotes the largest $l_2$ norm of all rows of a matrix $\bB$. 
 
\subsection{Rewards as a Low-Rank Matrix}
We work with a $d_r\times d_c$ matrix $\bB^*$ and our goal is to find the entry with the biggest reward as we sequentially observe noisy version of picked entries. As introduced earlier, the rows of this matrix correspond to different choices for the first side of the two-sided products and columns correspond to different choices for the second side. Entries of $\bB^*$ correspond to the mean rewards for different combinations of the two sides and we assume they are bounded. We assume the entries are bounded as in the canonical bandit literature. This assumption is standard and makes sure that the maximum regret at any time step is bounded. This assumption is realistic since user preferences are bounded in practice. 

\begin{assumption}[Boundedness] Assume  $\|\bB^*\|_{\infty}\leq b^*$. Without loss of generality, we further assume $b^*\leq 1$. 
\label{assump:parameter_set}
\end{assumption}

We use the two-sided product in Example \ref{ex:ads}, ads campaign, as a running example. In this case, the marketer needs to pick a pair of user segment and content creator segment to target the ads. Then we can model user segments as rows of matrix $\bB^*$ and content creator segments as columns of matrix $\bB^*$. The amount of interaction between different combinations of user segments and content creator segments are the entries of $\bB^*$. The goal under this scenario is to find the pair that interacts the most to target the ads by experimenting with different pairs. 

We model the ground truth mean reward matrix as a low-rank matrix. That is, let $\rrank$ be the rank of $\bB^*$, we have $\rrank\ll \min\{d_r, d_c\}$. This means the reward for the entry corresponding to row $j$ and column $k$ is a dot product of two $\rrank$-dimensional latent feature vectors. Therefore, our reward model is a contextual reward with unobserved features. In Section \ref{subsec:contextual-model}, we show an alternative contextual bandit representation of our reward function that is motivated by the standard linear bandit setting framework.

Motivated by the literature on low-rank matrix completion, we introduce the incoherence parameter, which is known to be crucial for reliable recovery of the matrix \citep{keshavan2009matrix,candes2012exact,chen2015incoherence, yuxin}. It quantifies how much information observing an entry gives about other entries in a matrix. 
\begin{definition}
\label{def:row-inc}
For a matrix $\bB^*\in \mathbb{R}^{d_r\times d_c}$ of rank $\rrank$ with singular value decomposition $\bB^* = \bU^*\bD^*\bV^{*\top}$, we denote the \emph{incoherence} parameter as $\mu:=\mu(\bB^*)$ such that $\|\bU^*\|_{2,\infty} \leq \sqrt{\mu \rrank/d_r}$ and $\|\bV^*\|_{2,\infty}\leq \sqrt{\mu \rrank/d_c}$.
\end{definition}
Intuitively, smaller $\mu$ means that we can get more information about other products from observing just a few entries, and the matrix completion problem is more tractable. Consequently, less experimentation cost is incurred to explore among a large product set. In contrast, matrices with bigger $\mu$ are those which have most of their mass in a relatively small number of elements and are much harder to recover under limited observations. 

\begin{definition}\label{definition: condition number}
    For a matrix $\bB$ of rank $\rrank$ with $\rrank$ non-zero singular values $\sigma_{\max} = \sigma_1 > \sigma_2> \cdots > \sigma_{\rrank} = \sigma_{\min}$, we denote the \emph{condition number} as $\kappa:=\kappa(\bB)$ such that $\kappa = \sigma_{\max}/\sigma_{\min}.$
\end{definition}

Both $\mu$ and $\kappa$ are assumed to be constants.

\subsection{Bandit under the Matrix-Shaped Rewards}
Consider a $T$-period horizon for the experimentation; at each time step $t$, the marketer has access to $d_rd_c$ arms (ads campaigns) and each arm yields an uncertain reward (amount of interaction). At time $t$, if the marketer pulls arm $(j_t,k_t)\in [d_r]\times [d_c]$, it yields reward $ y_t :=\bB^*_{j_tk_t} + \epsilon_{t}\,,$ where we assume the observation noises to be independent sub-Gaussian
random variables with sub-Gaussian norm at most $\sigma$ (see Definition 5.7 in \cite{vershynin2010introduction}). Natural examples of sub-Gaussian random variables include any centered and bounded random variable, or a centered Gaussian.

Our task is to design a sequential decision-making policy $\pi$ that learns the underlying arm rewards (parameter matrix $\bB^*$) over time in order to maximize the expected reward. Let $\pi_t = (j_t,k_t)$, an element of $[d_r]\times [d_c]$, denote the arm chosen by policy $\pi$ at time $t\in [T]$. We compare to an oracle policy $\pi^*$ that already knows $\bB^*$ and thus always chooses the best arm (in expectation), $\pi^* = (j^*,k^*) = \argmax_{(j,k)\in [d_r]\times[d_c]}\bB^*_{jk}$. Thus, if the arm $\pi_t = (j_t,k_t)$ is chosen at time $t$, expected regret incurred is
 $\mathbb{E}[\bB^*_{j^*k^*} -\bB^*_{j_tk_t}],$
where the expectation is with respect to the randomness of $\epsilon_t$ and potential randomness introduced by the policy $\pi$. This is simply the difference in expected rewards of $\pi_t^*$ and $\pi_t$. We seek a policy $\pi$ that minimizes the cumulative expected regret 
\[
\regret_T = \sum_{t=1}^T \mathbb{E}_\pi\left[\bB^*_{j^*k^*} -\bB^*_{j_tk_t}\right]\,.
\]
Note that throughout this paper we assume the matrix $\bB^*$ is deterministic which means our notion of regret is \emph{frequentist}.

Together with policy $\pi$, the observation model can be characterized by a trace regression model \citep{trevor}. For a given subset of cardinality $n$, $\mathcal{I} = \{t_1, \cdots, t_n\}$ of $[T]$, we define $Y = \mathfrak{X}^{\pi}_{\mathcal{I}}(\bB^*) + E\,,$ with observation operator $\mathfrak{X}^{\pi}_{\mathcal{I}}$ (defined below), vector of observed values $Y\in \mathbb{R}^{n}$ equal to $[y_{t_1}, y_{t_2}, \cdots,y_{t_n}]^\top$, and noise vector $E\in \mathbb{R}^{n}$ equal to $[\epsilon_{t_1}, \epsilon_{t_2}, \cdots,\epsilon_{t_n}]^\top$. The observation operator $\mathfrak{X}^{\pi}_{\mathcal{I}}(\cdot)$ takes in a matrix $\bB$ and outputs a vector of dimension $n$ (i.e., $\mathfrak{X}^{\pi}_{\mathcal{I}}:\mathbb{R}^{d_r\times d_c}\rightarrow \mathbb{R}^n$). Elements of the output vector are the entries of $\bB$ at $n$ observed locations, determined by $\pi_{t_1},\cdots,\pi_{t_n}$. Specifically, $\mathfrak{X}^{\pi}_{\mathcal{I}}(\bB)$ is a vector in $\mathbb{R}^n$, and for each $i$ in $[T]$, its $i$-th entry is defined by $\left[\mathfrak{X}^{\pi}_{\mathcal{I}}(\bB)\right]_i := \langle \bB, \bX_{t_i}^{\pi} \rangle\,,$ where \emph{design matrix} $\bX_{t_i}^{\pi}\in \mathbb{R}^{d_r\times d_c}$
has all of its entries equal to $0$, except for the $\pi_{t_i} = (j_{t_i}, k_{t_i})$-th entry, which is equal to $1$. The notation $\langle\cdot, \cdot\rangle$ refers to the trace inner product of two matrices that is defined by $\langle \bB_1, \bB_2\rangle:= \Tr(\bB_1\bB_2^{\top})\,.$ The observation operator will appear in the matrix estimator we use.

\section{Low-Rank Bandit Algorithm}
\label{sec:alg}
We will first give an overview of our two-phase Low-Rank Bandit (\textsf{LRB}) algorithm before describing it in details in Section \ref{subsec:description}. Then in Section \ref{SS. regret analysis}, we carefully incorporate a matrix element-wise error bound borrowed from the literature to guide our regret analysis. In Section \ref{sec:balance}, we further explain how the so-called ``filtering resolution" $h$ balances regrets from the two phases to achieve tightened bounds, with a discussion on how \textsf{LRB} can be effective under different horizon lengths.

Intuitively, our \textsf{LRB} takes advantage of the low-rank reward structure and yields a good estimation of all the arms of interest by only observing a few arms. Thus, we can speed up the cold-start given a relatively short horizon in the face of many arms. As aforementioned, most prior literature operates under a typical but unrealistic formulation of this problem: the set of arms is assumed to be ``small" relative to the time horizon $T$. In particular, in standard asymptotic analysis of the MAB setting, the horizon $T$ scales to infinity while the number of arms remains constant. However, as discussed earlier, the number of arms could be large relative to the time horizon of interest in many practical situations. In such a short horizon, standard MAB algorithms will incur at least $\Omega(d_r d_c)$ regret in the initialization stage, by forced-sampling each arm once. Leveraging the low-rank matrix structure of the arms, our forced-sampling regret becomes $\tilde O(\rrank^2 (d_r+d_c)/h^2)$. By carefully selecting the filtering resolution $h$, we can show that the total regret upper bound can be strictly smaller than that of a standard non-contextual bandit for specified horizon lengths.

As mentioned in Section \ref{sec:contribution}, Low-Rank Bandit algorithm is constructed using a two-phase approach --- it is split into ``pure exploration phases" (also called the first phase) and ``targeted exploration + exploitation phases" (also called the second phase). This approach partially builds on the ideas in \cite{goldenshluger2013linear} that are later extended by \cite{hamidi2019personalizing,bastani2020online}. 
Specifically, those studies centered around the contextual bandit problem. They built two sets of estimators (namely, forced-sample and all-sample estimators) in such a way that the former estimator is used to construct a targeted set that filters sub-optimal arms which fall outside the confidence region of the optimal reward, and the latter estimator is used to select the best arm in the targeted set. The first step which is a pure exploration procedure aims at ensuring that during a constant proportion of time periods, only a single arm is left in the targeted set, so that the algorithm can collect i.i.d. samples in the second phase, despite the fact that the second phase is pure exploitation. 

However, in our study, the two-phase procedure serves a different purpose. More specifically, in the first phase, we utilize the low-rank structure to filter sub-optimal arms that are $h/2$ away from the highest estimated reward by forced-sample estimators. While our first step seems similar to the aforementioned papers, our second phase is itself a \textsf{UCB} algorithm that both explores and exploits the targeted set to find the optimal arm. This distinction results in a different requirement for the first phase: the targeted set does not need to contain just one element; rather, it only needs to be much smaller than $d_rd_c$ to meaningfully reduce the exploration cost of a regular \textsf{UCB} algorithm. This fact results in a different selection criteria for the \resolution{} $h$. As we discuss in detail later, the optimal value of $h$ is determined by both the matrix size and the total time horizon.

Furthermore, we want to emphasize that the two-phase approach in the aforementioned studies cannot be directly applied to our problem. They focused on long enough experimentation time horizons so that their algorithms can take time to learn arms well and leave only the optimal choice in the targeted set for some fixed $h$. However, as aforementioned, we face a short horizon under the many-arm circumstances. In our analysis, we will show that the filtering resolution $h$ controls not only the size of the targeted set, but also the estimation error of using the forced samples. We need smaller $h$ to filter out more arms so as to have a smaller targeted set, which in turn requires more forced-sampling rounds to achieve a good enough estimation so as to be more confident about the suboptimality of the filtered arms. Thus, it is unlikely to construct a targeted set with fine-enough \resolution{} that only contains the optimal choice as the single arm, because forced-sample estimation is not accurate enough under limited experimentation time to achieve that.

Finally, we note that one cannot use the low-rank estimator to construct confidence sets for the \textsf{UCB} algorithm because existing bounds for the low-rank estimators do not work in presence of adaptively collected data. The only exception (to the best of our knowledge) is the all-sampling estimator of \cite{hamidi2019personalizing}, but as discussed above, their approach would not work when the targeted set has more than one arm.

\subsection{Description of the Low-Rank Bandit Algorithm}
\label{subsec:description}

The main idea of our two-phase Low-Rank Bandit algorithm is to reduce the initially large set of arms via a low-rank estimator and pass the smaller set of arms to a \textsf{UCB} algorithm.
Specifically, we \emph{forced-sample} arms at prescribed times and use a low-rank estimator which has a sharp entry-wise error bound to obtain an estimation of each entry of $\bB^*$. Then we use that to select a targeted set of arms that with high probability contains the optimal arm.

The Low-Rank Bandit takes as input the matrix row (column) dimension $d$, the forced-sampling rule $f_{\cdot}$, and a \resolution{} $h > 0$ (discussed further in Section \ref{subsec:h} and Section \ref{sec:phi1andphi2}). The complete procedure is in Algorithm \ref{alg:low-rank}.

\paragraph{Forced-Sampling Rule:} Our forced-sampling rule is $f:\mathbb{N}^+\rightarrow ([d_r]\times [d_c])\cup \{\emptyset\}$. At time $t$, the forced-sampling rule decides between forcing the arm $f_t\in [d_r]\times [d_c]$ to be pulled or exploiting the past data, indicated by $f_t = \emptyset$. By $\mathcal{F}_t$, we denote the time periods that an arm is forced to be pulled, i.e. $\mathcal{F}_t:=\{t'\leq t: f_{t'}\in [d_r]\times [d_c]\}$. The forced-sampling rule that we use is a randomized function that picks an arm $(j,k)\in [d_r]\times [d_c]$ with probability

\begin{equation}
\mathbb{P}\big(f_t = (j,k)\big) = \begin{cases}
\frac{1}{d_rd_c} & \text{ if } t\leq 2\rho\log(\rho), \\
\frac{\rho}{d_rd_c[t-\rho\log(\rho) +1]} & \text{ if } t>2\rho\log(\rho),
\label{forced-sampling}
\end{cases}
\end{equation}
and $f_t = \emptyset$ otherwise. We call $\rho$ the forced-sampling parameter and $2\rho\log(\rho)$ the initialization period which we denote as $t_0$. In Lemma \ref{lemma:forced-sampling-rounds} shown in Appendix \ref{appendix:element-wise}, we prove that the number of forced-sampling rounds $|\mathcal{F}_t|$ is the same order of $O(\rho\log(t))$ with high probability. The choice of $\rho$ will be discussed in Section \ref{subsec:h}.

\paragraph{Estimators:} At any time $t$, \textsf{LRB} maintains two sets of parameter estimates for $\bB^*$:
\begin{enumerate}
    \item the forced-sample estimates $\widehat\bB^F$ based only on forced samples;
    \item the \textsf{UCB} estimator $\widehat\bB^{UCB}_{jk}$ based on all samples observed for each arm $(j,k)$, where $\widehat\bB^{UCB}_{jk} = \bar x_{(j,k)}+\sqrt{2\log w(t)/n_{(j,k)}}$ is a combination of the empirical mean reward estimation $\bar x_{(j,k)}$ and a term $\sqrt{2\log w(t)/n_{(j,k)}}$ that represents the uncertainty of the estimates. $n_{(j,k)}$ is the number of times entry $(j,k)$ has been played so far. The function $w(t)$ is defined by $w(t) = 1+ t\log^2(t)$ for the empirical experiment and $w(t) =t$ for the analysis part for simplicity.
\end{enumerate}

\paragraph{Low-Rank Estimator.} As mentioned before, our forced-sample estimator is based on a low-rank estimator. We use nuclear-norm penalized least square as our example for a low-rank estimator. 

\begin{definition}[Nuclear norm penalized least square]
\label{def:nuclear-norm}
Given a regularization parameter $\lambda\geq 0$, the \emph{nuclear norm penalized least square estimator} is
\[
\widehat\bB = \argmin_{\bB\in \mathbb{R}^{d_r\times d_c}}\left(\frac{1}{|\mathcal{F}_t|}\|Y - \opr^{\pi}_{\mathcal{F}_t}(\bB)\|_2^2 + \lambda \|\bB\|_*\right).
\]
\end{definition}
However, any low-rank estimator that satisfies the following tail bound can be used as our forced-sample estimator. For simplicity of presentation, we set $d_r=d_c=d$. 

\begin{proposition} [Tail bound for low-rank estimators \citep{yuxin}] 
Fix any $\varsigma>0$. By taking $\lambda=C_{\lambda}\sigma \sqrt{1/(nd)}$ where $n$ denotes the number of i.i.d. samples which are sampled uniformly for some large enough constant $C_\lambda>0$, it holds that
\[
\mathbb{P}\left(\|\widehat\bB-\bB^*\|_{\infty}\geq C_\varsigma\sqrt{\kappa^3\mu \rrank}\frac{\sigma}{\sigma_{\min}}\sqrt{\frac{d\log d}{n}}\right)\leq \frac{1}{d^\varsigma}\,,
\] 
when $n\geq C\kappa^4 {\mu}^2 \rrank^2 d{\log^3 d}$, and $C$ and $C_\varsigma$ are positive constants which are independent of $d$ and $n$. 
\label{prop:low-rank-estimators}
\end{proposition}

\begin{remark}
\citep{yuxin} provide the tail bound for square matrices to facilitate a clear presentation. In their paper, they note that it is straightforward to extend their discussions to general rectangular matrices of size $d_r \times d_c$. Similarly, we focus on square matrices for simplicity of presentation, whereas our proposed framework is not limited to square matrices. Given a tail bound for rectangular low-rank matrices of $d_r \times d_c$, all subsequent discussions can be easily extended accordingly.
In addition, Theorem 1 in \citep{yuxin} proves for $\varsigma=3$, but their result can be easily generalized to any large enough $\varsigma\geq 3$.
\end{remark}

\paragraph{Execution:}  We follow a \emph{two-phase} process: if the current time $t$ is in $\mathcal{F}_t$, then we are in the ``pure exploration phases'', and arm $f_t$ is played; otherwise, we are in the ``targeted exploration + exploitation phases", and this phase consists of the following two steps. As a pre-processing step, we use the forced-sample estimates to find the highest estimated reward achievable across all $d_rd_c$ entries. We select the subset of arms $\mathcal{C}\subset [d_r]\times [d_c]$ whose estimated rewards are within $h/2$ of the maximum achievable (in step 6 of Algorithm \ref{alg:low-rank}). After this pre-processing step, we apply \textsf{UCB} on the targeted set $\mathcal{C}$ by selecting the arm that maximizes the \textsf{UCB} value (in step 7 of Algorithm \ref{alg:low-rank}).

\begin{algorithm}
\caption{Low-Rank Bandit}
\label{alg:low-rank}
\begin{algorithmic}[1]
\State 
Input: Forced-sampling rule $f$, filtering resolution $h$
\For{$t = 1, 2,\cdots, T$}
\If{$f_t\neq \emptyset$} ``pure exploration phase"
\State $\pi_t \leftarrow f_t$ (forced-sampling)
\Else ``targeted exploration + exploitation phases"
\State Define the set of near-optimal arms $\mathcal{C}$ according to the forced-sample estimator:
\begin{equation}\label{eq: candidate set}
    \mathcal{C} = \Bigg\{(j,k)\in [d_r]\times [d_c] \mid \widehat \bB_{jk}^F(\mathcal{F}_{t-1})\geq \max_{(l,m)\in [d_r]\times [d_c]}\widehat\bB_{lm}^F(\mathcal{F}_{t-1}) -\frac{h}{2}\Bigg\}.
\end{equation}
\State Choose the best arm within $\mathcal{C}$ according to the \textsf{UCB} estimator, i.e.,
$$\pi_t\leftarrow (j_t, k_t) = \argmax_{(j,k)\in \mathcal{C}} \ \widehat\bB^{UCB}_{jk}\,,
$$
where $\widehat\bB^{UCB}_{jk} = \bar x_{(j,k)}+\sqrt{\frac{2\log w(t)}{n_{(j,k)}}}$ is such that $\bar x_{(j,k)}$ is the average observed value for entry $(j,k)$, and $n_{(j,k)}$ is the number of times entry $(j,k)$ has been played so far.
\EndIf
\State Play arm $\pi_t = (j_t, k_t)$, observe $y_t = \bB^*_{j_tk_t} + \epsilon_{t}$.
\EndFor
\end{algorithmic}
\end{algorithm}

\subsubsection{Filtering resolution $h$.}
\label{subsec:h}
In our pre-processing step described above, we have used $h$ to filter the arm set as well as control how accurate the forced-sample estimator needs to be. As mentioned earlier, smaller $h$ includes fewer arms in the targeted set, and thus more accurate forced-sample estimation is needed to have the optimal arm in the targeted set. Vice versa, bigger $h$ leads to a larger targeted set that allows for more crude forced-sample estimation. 

Recall that the parameter $\rho$ controls the quality of the forced-sample estimator. As the next lemma shows, the smaller $h$ is, the bigger $\rho$ needs to be to ensure more forced samples. Its proof (in Appendix \ref{appendix:element-wise}) builds on Proposition \ref{prop:low-rank-estimators} and Lemma \ref{lemma:forced-sampling-rounds} (in Appendix \ref{appendix:element-wise}, which characterizes how many forced samples we can have given the forced-sampling parameter $\rho$). In our subsequent analysis, we assume $T=d^\beta$ for some $\beta\in \mathbb{R}$, thus smaller values of $\beta$ correspond to shorter horizons.

\begin{lemma}\label{prop:forced-samp-est}
For all $t\geq 2\rho\log(\rho)$ where
$\rho\geq 160 \max\{C, C_\varsigma^2\}\kappa^4\mu{^2}\rrank^2(\sigma/\sigma_{\min})^2 d\log^3(d) h^{-2}$ with $\varsigma=3\beta$ where $T=d^\beta$, 
 it holds  with probability at least $1-2t^{-3}$ that, $\|\widehat \bB^F - \bB^*\|_{\infty}\leq h/4\,.$
\end{lemma}
The above result states that the forced-sample estimator $\widehat \bB^F$ satisfies $\|\widehat \bB^F - \bB^*\|_{\infty}\leq O(1)$ with probability at least $1-O(1/t)$ for all arms when the forced-sampling parameter $\rho$ is big enough. To this end, we set
\begin{align}
    \rho(h;d)=\alpha \rrank^2 d \log^3(d) h^{-2}\,,
    \label{eq:rho}
\end{align}
where $\alpha=160 \max\{C, C_\varsigma^2\}\kappa^4\mu{^2}(\sigma/\sigma_{\min})^2$, so that the sample size requirement for the low-rank estimator is satisfied and we do not over-explore. Based on the forced-sampling parameter $\rho(h;d)$ derived in Eqn. \eqref{eq:rho}, we need $T\geq d$ (i.e., $\beta\geq 1$) to make the low-rank estimator effective. We discuss the case when $\beta<1$ in Section \ref{sec:subsamp}.

For ease of notation, we use $G(\cdot)$ to refer to the event $G(\mathcal{F}_t):=\left\{\|\widehat \bB^F(\mathcal{F}_{t-1}) - \bB^*\|_{\infty} \leq \frac{h}{4} \right\}\,.$

\paragraph{Range of $h$.} Filtering resolution $h$ has a selection range. The upper bound of this range is controlled by the range of the arm rewards (entries of $\bB^*$) and is 1 per Assumption \ref{assump:parameter_set}. To derive the lower bound, we notice that, given the time horizon $T$, we have $O(\rho\log T)$
number of forced samples per Lemma \ref{lemma:forced-sampling-rounds}. Now, since $\rho\log\rho\leq T/2$ is a required condition for Lemma \ref{lemma:forced-sampling-rounds}, we can let $\rho$ be such that $\rho\log T\leq T/2$, so as to derive the minimum value that $h$ can take, denoted by $\lowh(T)$:
\begin{equation}\label{eq: lowh value}
\lowh(T) = \sqrt{\frac{2\alpha \rrank^2 d\log^3 d \log T}{T}}.
\end{equation}
 This shows that $\lowh(T)$ is of order $\Omega(\sqrt{d/T})$.

We notice that the range of $h$ gets bigger with the time horizon, since $\lowh(T)$ decreases in the time horizon. Hence, for shorter time horizons, the best achievable accuracy of the low-rank estimator is less refined than that for longer time horizons, but more arms will be included in the targeted set to ensure good arms are not filtered out due to a lower quality estimation. Such intuition is formalized in Proposition~\ref{prop: monotonic h}. We work further with the range of $h$ when comparing our bound to the existing literature in Section \ref{subsec:bound-compare}.

\paragraph{Near-Optimal Arms.} An important step in our regret analysis is to show: when the estimation is accurate enough, the targeted set will be a subset of ``near-optimal" arms. We next explain what we mean by ``near-optimal" arms, which is defined according to $h$.

We introduce the following near-optimal set of arms $\mathcal{S}_{opt}^h(\mathcal{I}_r, \mathcal{I}_c)$, parameterized by an arbitrary $h$, where $\mathcal{I}_r\subseteq [d_r]$ denotes a subset of row indices and $\mathcal{I}_c\subseteq [d_c]$ denotes a subset of column indices. $\mathcal{I}_r\times \mathcal{I}_c\subseteq [d_r]\times [d_c]$ denotes the set of entries of the submatrix constructed from $\mathcal{I}_r$ and $\mathcal{I}_c$.

\begin{definition}[Near-optimal set]\label{Def. NO}
We define the \emph{near-optimal set} as the following:
\[
\mathcal{S}^h_{opt}(\mathcal{I}_r,\mathcal{I}_c)= \{(j,k)\in \mathcal{I}_r\times \mathcal{I}_c  \mid \bB^*_{jk}\geq \max_{(j',k')\in \mathcal{I}_r\times \mathcal{I}_c} \bB^*_{j' k'} - h\}\,.
\]
That is, it contains elements in $\mathcal{I}_r\times \mathcal{I}_c$ that are at most $h$ smaller than the best arm in $\mathcal{I}_r\times \mathcal{I}_c$.
\end{definition}

We next introduce the near-optimal function  $g(h; \mathcal{I}_r, \mathcal{I}_c)$, which characterizes the cardinality of the near-optimal set.
\begin{definition}[Near-optimal function] The \emph{near-optimal function} $g(h; \mathcal{I}_r, \mathcal{I}_c) = |\mathcal{S}_{opt}^h(\mathcal{I}_r,\mathcal{I}_c)|$ denotes the number of elements that are at most $h$ smaller than the best arm in $\mathcal{I}_r\times \mathcal{I}_c$.
\label{def:candidate}
\end{definition}

In such cases, for brevity, we often use shorter notation $\mathcal{S}_{opt}^h$ and $g(h)$ for $\mathcal{S}^h_{opt}([d_r],[d_c])$ and $g(h; [d_r], [d_c])$ respectively. The more general case with $\mathcal{I}_r\times \mathcal{I}_c\neq [d_r]\times [d_c]$ will be considered in Section \ref{sec:subsamp}. When entries of $\bB^*$ are sampled from certain distributions, we can describe the expected value of the near-optimal function in a closed form (Appendix~\ref{app:near-opt}). We also provide empirical evidence for how the near-optimal functions look like for low-rank matrices in Appendix~\ref{sec:ghandpsi}.

The following lemma shows that a targeted set is a subset of the near-optimal set when the forced-sample estimation is accurate enough. Its proof is in Appendix \ref{app:target-set}.
\begin{lemma}
If $G(\mathcal{F}_{t-1}) = 1$, then the largest entry $(j^*, k^*)$ belongs to the targeted set $\mathcal{C}$ defined in Eqn. \eqref{eq: candidate set} of Algorithm \ref{alg:low-rank}. Furthermore,  $\mathcal{C}\subseteq \mathcal{S}_{opt}$.
\label{lemma:candidates}
\end{lemma}

Our next section provides the regret analysis of the Low-Rank Bandit algorithm.

\subsection{Regret Analysis of Low-Rank Bandit}\label{SS. regret analysis}

We present both a \emph{gap-dependent} bound and a \emph{gap-independent} bound on our Low-Rank Bandit Algorithm. The key idea of our regret analysis is to decompose the total regret into the regret incurred from the pure exploration phase and that incurred from the targeted exploration + exploitation phase, and bound each respectively. The \emph{gap-dependent bound} depends on the term $\Delta$, which is defined to be the difference between the biggest element and the second biggest element of $\bB^*$, i.e., $\Delta := \min_{(j,k)\in [d_r]\times [d_c]} \left\{(\bB^*_{j^*k^*} - \bB^*_{jk})\ind[ \bB^*_{j^*k^*} - \bB^*_{jk}>0]\right\}\,,$ such that $\ind[\cdot]$ is the indicator function. The independent bound does not depend on $\Delta^{-1}$ which would be helpful when $\Delta$ is very small.  
We present the main results below and relegate the detailed derivations to Appendix \ref{SS. Regret Analysis}.

\begin{theorem}[Gap-dependent Bound]
For any $h$ such that $h\geq\lowh(T)$, the cumulative regret of Algorithm~\ref{alg:low-rank}, represented by $\regret_T(h)$, is upper bounded by 
\begin{align}
\regret_T(h)\leq C_1 \rrank^2 d \log^3 (d) h^{-2}\log T +\min\left\{hT,\frac{8g(h)\log T}{\Delta} + \Bigg(1 + \frac{\pi^2}{3}\Bigg)hg(h)\right\} + C_2,
\label{ineq:dep-bd}
\end{align}
where $C_1=b^*\alpha(2t_0 +12)$ and $C_2=100 b^*$.
\label{thm:regret}
\end{theorem}

\begin{theorem}[Gap-independent Bound]
For any $h\geq \lowh(T)$, the cumulative regret at time $T$, represented by $\regret_T(h)$, is upper bounded by
 \[
\regret_T(h)\leq C_1 \rrank^2 d\log^3 (d) h^{-2}\log T +  \min\left\{hT, 8\sigmaepsilon\sqrt{2Tg(h)\log T}\right\}  +C_2,
\]
 where $C_1=b^*\alpha(2t_0 +12)$ and $C_2=100 b^*$. 
\label{thm:regret-gapind}
\end{theorem}

As emphasized earlier, we use low-rank estimators rather than the empirical mean of the arms to select the targeted arms in the first phase so that we can learn the arms faster and lower the regret. Indeed, in order to select the optimal arm into the targeted set, we need to control the estimation error within the order of $O(h)$ (Lemma \ref{lemma:candidates}). When there are $d^2$ number of arms, low-rank estimators only use $\tilde O(\rrank^2 d/h^2)$ number of explorations to achieve the aforementioned estimation accuracy (shown earlier by Lemma \ref{prop:forced-samp-est}), whereas empirical mean estimator needs $\tilde O(d^2/h^2)$ number of explorations. This is because empirical mean estimator for each arm only focuses on information for that particular arm, ignoring helpful information from other arms. We elaborate further on this point in Remark~\ref{rmk:benefit_low_rank} after more in-depth analysis.

\subsection{Balancing Regrets of Two Phases with $h$}
\label{sec:balance}

Our algorithm can adjust $h$ to balance the cost of achieving the desired filtering resolution via forced-sampling and the cost of targeted exploration + exploitation after the filtering. Finer filtering resolution incurs more costs in the first phase due to more forced-sampling rounds to achieve more accurate estimates, but alleviates the burden in the second phase since more sub-optimal arms have been filtered out. Vice versa, less refined filtering resolution results in a larger targeted set and thus incurs more costs in the second phase, but saves costs in the first phase.

To explain further mathematically, recall the regret decomposition we have shown in the beginning of Section \ref{SS. regret analysis}. According to the proof of Theorem \ref{thm:regret-gapind}, the regret from the pure exploration phase can be bounded above by $\phi_1$ and the regret from the targeted exploration + exploitation phase can be bounded above by $\phi_2$, where:
\[
\phi_1(h;d,t)= C_1 \rrank^2 d\log^3 (d) h^{-2}\log (t) \quad \text{and} \quad 
\phi_2(h;d,t) = \min\{8\sigmaepsilon\sqrt{2tg(h)\log(t)}, ht\} \,.
\]
Note that we ignore the constant term $C_2$ for $\phi_2$ above (and in subsequent expressions we omit this term for simplicity).
The upper bound of the regret $\overline{\regret}_T(h)$ can be written as 
\begin{equation}\label{eq: regret decompose}
\overline{\regret}_T(h)=\phi_1(h;d,T) + \phi_2(h;d,T)\geq \regret_T(h).
\end{equation}

The quantity $\phi_1(h)$ increases in $h$. The bigger the $h$, the fewer rounds of forced-sampling are needed to achieve the forced-sample estimation accuracy described in Lemma \ref{prop:forced-samp-est}. Consequently, less regret is incurred in the first phase which leads to smaller $\phi_1(h)$. The quantity $\phi_2(h)$ depends on $h$ via the term $g(h)$, which increases in $h$. The bigger the $h$, the more arms are included in the targeted set. Consequently, more regret is incurred in the second phase to find the best arm in the targeted set, which makes $\phi_2(h)$ bigger. Next, we discuss how to choose $h$ optimally.

\subsubsection{The optimality of $h$.}
\label{sec:phi1andphi2}

Based on the interaction between $\phi_1$ and $\phi_2$, we characterize the best filtering resolution that optimizes the regret bound in the following lemma.

\begin{lemma}[Optimal filtering resolution]
\label{prop:optimal_h} Given the total time horizon $T$, let the optimal $h$ be defined as $h^*=\argmin_{h\in[\lowh(T), 2b^*]}\overline{\regret}_T(h)\,.$ We consider the following three cases:
\begin{enumerate}[label=(\arabic*)]
\item \label{optimal_h_scenario_3} If $\phi_1(h;d,T)<\phi_2(h;d,T)$ for any $h\in [\lowh(T), 2b^*]$, then $\tilde h = \lowh(T)$ is optimal up to a factor of $2$, i.e., $\overline{\regret}_T(\tilde h)\leq 2 \overline{\regret}_T(h^*)$. 

    \item \label{optimal_h_scenario_2} If $\phi_1(h;d,T) = \phi_2(h;d,T)$ for some $h = \tilde h\in [\lowh(T), 2b^*]$, then $\tilde h$ is optimal up to a factor of $2$.
\item \label{optimal_h_scenario_1} If $\phi_1(h;d,T)>\phi_2(h;d,T)$ for any $h\in [\lowh(T), 2b^*]$, then $\tilde h = 2b^*$ is optimal up to a factor of $2$. 

\end{enumerate}

Similar arguments hold for the gap-dependent bound in (\ref{ineq:dep-bd}).
\end{lemma}
The proof of Lemma \ref{prop:optimal_h} can be found in Appendix \ref{app:filteringresolution}. Although Lemma \ref{prop:optimal_h} only analyzes the regret \emph{upper bounds} $\phi_1$ and $\phi_2$ rather than the actual regret of the two phases, we empirically show that the actual regret follows the same behavior in numerical simulations. In Appendix \ref{app:phi1andphi2}, we plot the cumulative regret from the two phases for low-rank matrices that are $100\times 100$ with rank 3 with respect to a sequence of $h$, and observe that $h = 1$ with 225 number of forced samples gives the lowest cumulative regret. Further, the regret curves of the two phases cross at around $h=1.4$, and the cumulative regret at $h=1.4$ is within factor of 2 of the lowest cumulative regret.

In Section \ref{subsec:h}, we have discussed that the range of $h$ becomes bigger in the time horizon $T$ due to the functional form of $\lowh(T)$. Next, we show that the optimal filtering resolution in fact monotonically decreases in the time horizon $T$.

\begin{proposition}\label{prop: monotonic h}
    The optimal filtering resolution $\tilde h$ monotonically decreases in the time horizon $T$ when $T\geq 10$.
\end{proposition}

The proof of Proposition \ref{prop: monotonic h} is in Appendix \ref{sec:phi1andphi2}. This proposition matches with our earlier analysis that we work with finer filtering resolution when the time horizon is longer. 

Next, we show that our upper regret bounds improve on those in the existing literature by balancing $\phi_1$ and $\phi_2$ through $h$.

\subsubsection{Benefits of low-rank structure and the targeted exploration + exploitation.} \label{subsec:bound-compare}

We compare the gap-independent bound in Theorem \ref{thm:regret-gapind} against the upper regret bounds for standard non-contextual bandits to show that our bound achieves better rates in Proposition~\ref{prop: compare i.i.d. bandits}. This analysis also holds for comparison against gap-dependent bounds in the literature.

From this section, we focus on the order of $d$ and $T$ and treat $\rrank$ as a constant  when we discuss the regret order since $\rrank \ll d$. We first show in Proposition \ref{prop: compare i.i.d. bandits}\eqref{prop: compare i.i.d. bandits1} that our regret bound under any time horizon is no worse than the regret bound $\min\{O(T), \tilde{O}(d\sqrt{T})\}$ of those standard non-contextual bandit algorithms (e.g. \textsf{UCB}), which would treat each entry as an independent arm. Intuitively, when $T\geq d^2$, the regret bound takes the  form $\tilde{O}(d\sqrt{T})$ since $T\geq d\sqrt{T}$. When $T\leq d^2$, since standard non-contextual bandits like \textsf{UCB} need to randomly sample each of $d_rd_c$ arms at least once to initialize, it will incur $O(T)$ regret. Secondly, we show in Proposition \ref{prop: compare i.i.d. bandits}\eqref{prop: compare i.i.d. bandits2} how to optimize for the choice of $h$ to achieve a strictly smaller regret upper bound under time horizons $d^2\leq T< d^4$.
\begin{proposition}\label{prop: compare i.i.d. bandits}
Recall that $T=d^\beta$. 
For any structure of $g(h)$:
    \begin{enumerate}
        \item For all values of $\beta$, Algorithm \ref{alg:low-rank} achieves no worse regret than $\min\{O(T), \tilde{O}(d\sqrt{T})\}$.\label{prop: compare i.i.d. bandits1}
        \item In particular, when $2\leq \beta<4$, Algorithm \ref{alg:low-rank} can achieve regret order of $o(d\sqrt{T})$. More specifically, it can achieve the regret order of $\tilde{O}(d^{\frac{1}{3}} T^{\frac{2}{3}})$. \label{prop: compare i.i.d. bandits2}
    \end{enumerate}
\end{proposition}

The proof of Proposition \ref{prop: compare i.i.d. bandits} is relegated to Appendix \ref{append:proofs-for-bound-comparison}. We want to emphasize that the improvement of our algorithm stems from leveraging the low-rank structure. Without assuming any special structure, no algorithm is able to achieve a lower regret than the regret lower bound of $d\sqrt{T}$ of the standard non-contextual bandit algorithms like \textsf{UCB}. Below, we further discuss how we benefit from a low-rank estimator, in comparison to using the empirical mean estimator.

\begin{remark}[Benefit of low-rank estimator]
\label{rmk:benefit_low_rank}
As aforementioned, without the low-rank structure, we need $d^2/h^2$ rounds to achieve an optimality gap of $h$ for $d^2$ entries with an empirical mean estimator. Therefore, the order of regret bound on the first phase increases from $\rrank^2 d$ to $d^2$ (ignoring the logarithmic order), so the gap-independent bound in Theorem \ref{thm:regret-gapind} becomes $C_1' d^2 h^{-2}+\min\{hT, 8\sqrt{2T g(h)}\}$. Adapting the analysis for Proposition \ref{prop: compare i.i.d. bandits} to the empirical mean estimator, we obtain a regret order of $\tilde{O}(d^{\frac{2}{3}} T^{\frac{2}{3}})$, which is strictly higher than $\tilde{O}(d^{\frac{1}{3}} T^{\frac{2}{3}})$ derived in Proposition \ref{prop: compare i.i.d. bandits} with the low-rank estimator for the first phase.
\end{remark}

In Proposition \ref{prop: compare i.i.d. bandits}, we have not yet considered any structure of $g(h)$ in our discussion and have only worked with $g(h) \leq d^2$ for any $h$, so as to show how the sole use of low-rank estimation and the element-wise error bound already helps us reduce regret upper bound of the existing literature. 

We illustrate how to further tighten the regret bound by leveraging the structure of $g(h)$ determined by its shrinkage rate $\zeta$ defined below:
\begin{definition}[Shrinkage rate of near-optimal functions] 
\label{def:gh}
The shrinkage rate of the near-optimal function $g(h; \mathcal{I}_r, \mathcal{I}_c)$ is $\zeta=\sup\{\zeta'\geq 0\mid \mathbb{E}_{|\mathcal{I}_r|=|\mathcal{I}_c|=m}[\sqrt{g(h; \mathcal{I}_r, \mathcal{I}_c)}] \leq m h^{\gparam'/2}\,,\forall m\leq d \}.$
\end{definition}

The inequality in the set constraint naturally holds when $\gparam=0$ because $g(h; \mathcal{I}_r, \mathcal{I}_c)\leq m^2$. Since $h\leq 1$, then $g(h; \mathcal{I}_r, \mathcal{I}_c)$ shrinks faster for larger values of $\zeta$, which implies that the low-rank structure can filter out more arms. When $\mathcal{I}_r =[d] =\mathcal{I}_c$, Definition \ref{def:gh} gives $g(h;[d],[d]) \leq d^2h^{\gparam}$. 

Theorem \ref{theorem: g improve} then works with Definition~\ref{def:gh} to give a tighter regret bound for our algorithm. We defer its proof to Appendix \ref{append:proofs-for-bound-comparison}.

\begin{theorem}\label{theorem: g improve}
   For $\threc=(\frac{2\sqrt{2}\gparam (2\alpha \rrank^2)^{\frac{\gparam+4}{4}}}{C_1\rrank^2})^{\frac{4}{\gparam+2}}$, it holds that
\begin{enumerate}
\item When $T\geq \threc d^{\frac{\gparam+4}{\gparam+1}}$,
$\regret_T=\tilde{O}(d T^{\frac{2}{\gparam+4}}).$
\label{theorem: g improve1}
\item When $T\leq \threc d^{\frac{\gparam+4}{\gparam+1}}$,
$\regret_T=\tilde{O}(d^{\frac{1}{3}}T^{\frac{2}{3}}).$
\end{enumerate}
\end{theorem}

We next present Example \ref{example:uniform} in which we plug in empirical estimated values of $\zeta$ to show more concrete improvement.
\begin{example} \label{example:uniform}
Let the low-rank matrices $\bB =\bU\bV^{\top}$ where the entries of $\bU, \bV\in\mathbb{R}^{d\times \rrank}$ follow uniform distribution on [0,1]. From the empirical evidence in Appendix \ref{sec:ghandpsi}, $g(h)\leq d^2 h^3$ when $d=100$, i.e., the shrinkage rate $\zeta = 3$. Then from Theorem \ref{theorem: g improve}, when $T\geq \threc d^{\frac{7}{4}}$, the regret is of order $\tilde{O}(dT^{\frac{2}{7}})$, which is lower than $\min\{\tilde{O}(d\sqrt{T}), T\}$; when $d\leq T\leq \threc d^{\frac{7}{4}}$, the regret is of order $\tilde{O}(d^{\frac{1}{3}}T^{\frac{2}{3}})$, lower than the regret needed by standard non-contextual bandit like \textsf{UCB}, which is at least $\min\{d^2, T\}$ so as to pull each arm once when $T\leq d^2$.
\end{example}

\begin{remark}
The above example shows that we can construct the empirical $g(h)$ and thus fit $\zeta$ if we have access to the reward distribution from abundant related historical data. Consequently, we can obtain the optimal $h$ via the recipe provided in the proof of Theorem~\ref{theorem: g improve}. A more data-driven approach for obtaining the optimal $h$ is given in Section \ref{sec: data-driven heuristics}, where we use the optimal values found for a historical dataset using a pre-specified grid.
\label{rmk:data-driven-h}
\end{remark}

\begin{remark}[Benefit of targeted exploration + exploitation]
 \label{rmk:benefit_ucb}
We use Theorem \ref{theorem: g improve} to highlight the benefit of our targeted exploration + exploitation in the second phase, by comparing against a naive explore-then-commit (ETC) approach that we call \textsf{Low-rank ETC}. As the name suggests and similar to ours, \textsf{Low-rank ETC} undergoes the ``pure exploration" phases and uses the forced samples to construct a low-rank estimator. However, it commits to the best estimated entry in the second phase rather than conducting targeted exploration + exploitation. Consequently, the regret upper bound on the second phase changes from $\min\{8\sigmaepsilon\sqrt{2Tg(h)\log(T)}, hT\}$ to $hT$, resulting in a potentially higher gap-independent regret bound of $C_1 \rrank^2 d\log^3 (d) h^{-2}\log T +  hT$. In particular, when $g(h)$ has a nice structure (i.e., $g(h)$ is small), our approach is strictly better. As shown in Theorem \ref{theorem: g improve}\eqref{theorem: g improve1}, the regret of our \textsf{LRB} achieves an order of $\tilde{O}(d T^{\frac{2}{\gparam+4}})$ for $T\geq \vartheta d^{\frac{\gparam+4}{\gparam+1}}$, which improves from \textsf{Low-rank ETC} whose regret is of order $\tilde{O}(d^{\frac{1}{3}} T^{\frac{2}{3}})$. A visualization of such a comparison in upper bounds can be found in Section~\ref{sec:visual}. We additionally show the superior empirical performance of our approach against \textsf{\textsf{Low-rank ETC}} in Appendix \ref{app:benefit_filtering}.
\end{remark}

\subsubsection{Effectiveness of \textsf{LRB} under different horizons.}
\label{sub:short_horizon}
Proposition~\ref{prop: compare i.i.d. bandits}, Theorem~\ref{theorem: g improve} and Example~\ref{example:uniform} bring an important implication: our algorithm \textsf{LRB} can adapt to different time horizons, even when the time horizon is short. 

When $T>d^4$, Proposition~\ref{prop: compare i.i.d. bandits} shows that our algorithm performs no worse than those standard non-contextual bandits, and we call this time regime the \emph{long horizon}. Furthermore, when we know the near-optimal function $g(h)$ as in Example \ref{example:uniform}, \textsf{LRB} is strictly better in the long horizon. 

When $d^2\leq T< d^4$, our algorithm achieves a regret upper bound that is strictly better than that of standard non-contextual bandits even without specifying $g(h)$ (as shown in Proposition~\ref{prop: compare i.i.d. bandits}), and we call such a time regime the \emph{short horizon}. In such a \emph{short horizon} regime, we have seen that \textsf{LRB} no longer sticks with pure exploration phase but activates the targeted exploration + exploitation phase with a proper selection of the filtering resolution $h$ (shown in the proof of Proposition~\ref{prop: compare i.i.d. bandits}). Such flexibility of our algorithm can be very useful, since in real-world practices, the product set size and the experimentation time vary a lot by applications. While some have a small product set and the experimentation can effectively occur over a long time horizon, others have a huge product set and the experimentation needs to be completed in a very short time. 

The remaining time horizons, i.e. $T\leq d^2$, will be called the \emph{ultra-short} regime. In this regime, \textsf{UCB} is known to only perform forced-sampling (i.e. pure exploration). Our algorithm \textsf{LRB} also only does forced-sampling when $T\leq d$.
When $d\leq T\leq d^2$, Example~\ref{example:uniform} shows that our algorithm \textsf{LRB} can perform better than pure exploration when $g(h)$ has a good structure. However, as in the general case shown in Proposition~\ref{prop: compare i.i.d. bandits}, while our algorithm is no worse than standard non-contextual bandits, a strictly better performance is not guaranteed under this regime. We have thus proposed a subsampling strategy for enhancing performance when the time horizon is very limited in Section \ref{sec:subsamp}. It is shown through Theorem \ref{thm: subsampling regret} and Example~\ref{ex:regret-uniform-subsample} that the subsampling variation can be effective even under longer periods beyond the ultra-short horizons.

\section{Low-Rank Bandit with Submatrix Sampling}
\label{sec:subsamp}
In this section, we propose to perform a subsampling pre-step that can potentially incur less cost than a regular Low-Rank Bandit policy when the horizon is (and possibly beyond) ultra-short (defined in Section \ref{sub:short_horizon}). Under such a horizon, subsampling allows our regular $\textsf{LRB}$ to leverage the low-rank structure of a smaller submatrix to effectively filter out enough sub-optimal arms. Building on Section~\ref{subsec:bound-compare}, we summarize and visualize the benefits of low-rank structure, targeted exploration + exploitation, and subsampling at the end of this section.

\subsection{Description of the Submatrix-Sampled Low-Rank Bandit Algorithm}
The subsampling pre-step samples a submatrix of $m_r\times m_c$ and our \textsf{LRB} policy is used as a subroutine to explore and exploit the low-rank structure of that submatrix. We call this subsampling version of \textsf{LRB} as Submatrix-Sampled Low-Rank Bandit (\textsf{ss-LRB}), and the specific procedures are summarized in Algorithm \ref{alg:two-stage}. 

\begin{algorithm}[H]
\caption{Submatrix-Sampled Low-Rank Bandit (\textsf{ss-LRB}) Algorithm}
\label{alg:two-stage}
\begin{algorithmic}[1]
\State Decide the submatrix size $m_r, m_c$ as a function of $d_r, d_c, \rrank, T$.
\State Draw a set of $m_r$ number of row indices, denotes as $\mathcal{I}_r$, and a set of $m_c$ number of column indices, denoted as $\mathcal{I}_c$, uniformly at random (without replacement) from $[d]$ respectively.
\State Run Low-Rank Bandit algorithm (Algorithm \ref{alg:low-rank}) on the submatrix indexed by ($\mathcal{I}_r$, $\mathcal{I}_c$).
\end{algorithmic}
\end{algorithm}
For simplicity of presentation, we set $m_r=m_c=m$, but we can optimize for selecting non-square submatrices. Note that \textsf{ss-LRB} only estimates the submatrix rather than the entire matrix, which is a much easier and quicker task if the submatrix dimension is small (smaller sample complexity). To see this, we notice that 
the forced-sampling parameter in \textsf{ss-LRB} becomes $\rho(h;m)=\alpha \rrank^2 m \log^3(m) h^{-2}$,
and the lower bound for $h$ becomes $\lowh(T) =\sqrt{2\alpha \rrank^2 m\log^3(m) \log T/T}$ according to Eqn. \eqref{eq: lowh value}, such that $\lowh(T)$ is of order $\tilde{\Omega}(\sqrt{m/T})$ and the range of $h$ becomes larger. Thus, the selection range of $h$ expands, which makes case (2) in Lemma~\ref{prop:optimal_h} more likely to happen. Consequently, for a fixed time horizon, the filtering resolution can be finer for a submatrix sampling scheme, leading to a more accurate low-rank estimation.

Our motivation comes from \cite{bayati2020unreasonable}, who suggest that subsampling may help when the time horizon is ultra-short. Their work states that when the number of arms $k\geq \sqrt{T}$, it is optimal to sample a subset of $\Theta(\sqrt{T})$ arms and execute \textsf{UCB} on that subset. However, directly adopting random subsampling from \cite{bayati2020unreasonable} will demolish the helpful matrix structure. To preserve the potential advantage of working with a low-rank matrix, we integrate a slightly different subsampling strategy into our \textsf{LRB}, which is the aforementioned submatrix sampling.

Noticeably, we cannot ensure that the biggest element in this submatrix is necessarily the biggest element of the entire matrix. If not, then in each period, we would at least incur a subsampling cost defined as the difference between the biggest element of the entire matrix and the biggest element of a random submatrix. Such a subsampling cost is a function of the matrix size, the submatrix size and the rank of the full matrix.

\begin{definition}[Subsampling Cost]\label{def: gap}
Let $\mathcal{I}_r$ ($\mathcal{I}_c$ respectively) be an index set of $i$ where $i$ is drawn uniformly from $[d]$ ($[d]$ respectively).
The \emph{subsampling cost} function is defined as: 
\[
\psi(m; d) := \max_{j,k} \  \bB^*_{jk} - \mathbb{E}_{\mathcal{I}_r, \mathcal{I}_c: |\mathcal{I}_r|= |\mathcal{I}_c|= m }\left[\max_{(j',k')\in \mathcal{I}_r\times\mathcal{I}_c} \bB^*_{j'k'}\right].
\]
\end{definition}
When the arm rewards follow certain common distributions, the subsampling cost function has closed forms, which is presented in Appendix \ref{app:near-opt}. We also provide empirical evidence for how the subsampling cost functions look like for low-rank matrices in Appendix~\ref{sec:ghandpsi}.

\subsection{Regret Analysis under Submatrix Sampling}
\label{subsec:regret-sub}
In this subsection, we first give a general regret upper bound analysis for \textsf{ss-LRB} in Theorem \ref{thm:regret_sub}. Then, we impose an assumption on the subsampling cost function as in Assumption \ref{assumption: subsampling} which subsumes common distributions and is supported by empirical evidence for low-rank matrices. We show in Theorem \ref{thm: subsampling regret} and Example \ref{ex:regret-uniform-subsample} that \textsf{ss-LRB} indeed achieves a smaller expected cumulative regret upper bound than the regular \textsf{LRB} under a possibly big range of time horizons. That is, even if the biggest entry in the submatrix might be suboptimal, the cost we save from not exploring the much larger matrix compensates for the regret incurred from picking a suboptimal entry. As a result, such a submatrix sampling procedure may be desirable beyond the ultra-short horizons, and we specify when that would be in our subsequent analysis.

Recall that
$\phi_1(h;m,t)= C_1 \rrank^2 m\log^3 (m) h^{-2}\log t$,
where $C_1>0$ is a constant. 
Similarly, we define
\[
\phi_2(h;m,t) = \min\{8\sigmaepsilon\sqrt{2 t \log t}\cdot \mathbb{E}_{|\mathcal{I}_r|=|\mathcal{I}_c|=m}[\sqrt{g(h; \mathcal{I}_r, \mathcal{I}_c)}], ht\},
\]
where the expectation is taken over all submatrices with both the number of rows and columns equal to $m$.
For a fixed $m$, we use $h^*_{m}$ (which is characterized in Lemma~\ref{prop:optimal_h}) to denote the optimal $h$ that minimizes the regret upper bound relative to the largest entry in the submatrix (not considering the subsampling cost) earlier defined in Eqn.~\eqref{eq: regret decompose}: $\overline{\regret}_T(h;m)=\phi_1(h;m,T) + \phi_2(h;m,T).$ The following theorem gives the gap-independent bound for our algorithm with submatrix sampling.
\begin{theorem}
\label{thm:regret_sub}
The total regret of \textsf{ss-LRB}, represented by $\regret_T(h^*_m;m)$, is bounded by:
\begin{align}\label{bound:subsampling}
 \regret_T(h^*_m;m)&\leq \psi(m; d) T +\phi_1(h^*_{m};m,T)+\phi_2(h^*_{m};m,T)\,.
\end{align}
\end{theorem}
The first term $\psi(m; d) T$ describes the regret incurred from the difference between the biggest element in the submatrix and the biggest element in the entire matrix, per the definition of the subsampling cost function $\psi$ (Definition \ref{def: gap}). The remaining sum, $\phi_1(h^*_{m};m,T)+\phi_2(h^*_{m};m,T)$, bounds the regret of running Low-Rank Bandit on the submatrix when the value of the filtering resolution is $h^*_{m}$. The gap-dependent bound for the submatrix sampling algorithm can be constructed similarly, such that the first term $\psi(m; d) T$ stays the same, and for the remaining terms, we use the gap-dependent regret bound on the submatrix. In the extreme case where $m=d$, the regret is reduced to that in Theorem \ref{thm:regret-gapind}. Hence, \textsf{LRB} can be viewed as a special case of \textsf{ss-LRB}. We visualize the benefits of subsampling based on the regret upper bound~\ref{bound:subsampling} in Section~\ref{sec:visual}.

To bring the unknown subsampling cost function $\psi(m;d)$ into the analysis, we let $m=\eta d$, where $\eta$ is called the subsampling ratio which is a decision variable. We assume a structure of the subsampling cost in Assumption \ref{assumption: subsampling} such that the subsampling cost is equal to 0 when $\eta=1$, i.e., there is no subsampling cost if the submatrix is the entire matrix. Further, the subsampling cost increases when $\eta$ reduces.
\begin{assumption}[Subsampling cost]\label{assumption: subsampling} Suppose that
$\psi(\eta d;d)\leq c_s (1-\eta)^{\gamma_1} \eta^{\gamma_2} d^\iota$, where  $\gamma_1\geq 1$, $\gamma_2\leq 0$, and $c_s\geq 0$. 
\end{assumption}

We first provide two examples where the matrix entries (not necessarily low-rank) follow common distributions, and show that the subsampling costs satisfy Assumption~\ref{assumption: subsampling}. We provide empirical evidence in Appendix~\ref{sec:ghandpsi} to show that Assumption~\ref{assumption: subsampling} holds for a variety of low-rank matrices.

\begin{example}
\label{ex:submatrix-exponential}
For a matrix whose entries are exponentially distributed with mean $1$, the subsampling cost function is $\psi(\eta d;d)=-2\log(\eta)\,,$ as derived in Appendix \ref{app:near-opt}.
Since $\log(1/\eta^2)\leq 1/\eta^2-1=(1-\eta^2)/\eta^2=(1+\eta)(1-\eta/\eta^2\leq 2(1-\eta) \eta^{-2},$ then $c_s=2$, $\gamma_1=1$, $\gamma_2=-2$, and $\iota=0$.
\end{example} 
\begin{example}
    For a matrix whose entries are distributed approximately as uniform $[0,1]$, the subsampling cost function is derived in Appendix \ref{app:near-opt} as $\psi(\eta d;d) \approx 1/(\eta d)^2-1/d^2\leq (1/\eta^2-1)/d^2\leq 2(1-\eta)\eta^{-2}d^{-2}.$ Therefore, $c_s=2$, $\gamma_1=1$, $\gamma_2=-2$, and $\iota=-2$.
\label{rmk:sub-uniform-in-main}
\end{example}

Under Assumption \ref{assumption: subsampling}, we derive Theorem \ref{thm: subsampling regret} to provide regret upper bounds of the subsampling version of \textsf{LRB}, namely the \textsf{ss-LRB}. From the analysis, we show that it can be more efficient to work with a submatrix in (and possibly beyond) the ultra-short horizon regime, even though the first term grows linearly with time horizon $T$ in Eqn. \eqref{bound:subsampling}.

\begin{theorem}\label{thm: subsampling regret}
Suppose Assumption \ref{assumption: subsampling} holds. Recall that $\threc$ is defined in Theorem~\ref{theorem: g improve}. 
\begin{enumerate}
    \item When $T\geq \threc d^{\frac{\gparam+4}{\gparam+1}}$, $ \regret_T^{SS}=\tilde{O}(\min\{d^{\iota}T, dT^{\frac{2}{\gparam+4}}\}).$
\item When $T\leq \threc d^{\frac{\gparam+4}{\gparam+1}}$, $\regret_T^{SS}=\tilde{O}(\min \{
 d^\iota T,d^{\frac{1}{3}} T^{\frac{2}{3}}\}).$
\end{enumerate}
Moreover, to achieve this regret order, we select the subsampling ratio in the following way:
\begin{enumerate}[label=(\roman*)]
    \item When $T\geq \threc d^{\frac{\gparam+4}{\gparam+1}}$:
    \begin{enumerate}
        \item if $T^{\frac{\gparam+2}{\gparam+4}}\leq d^{1-\iota}$, we choose $\eta=d^{\iota-1} T^{\frac{\gparam+2}{\gparam+4}}$;
    \item  if $T^{\frac{\gparam+2}{\gparam+4}}\geq d^{1-\iota}$, we choose $\eta$ such that
    $(1-\eta)^{\gamma_1}\eta^{\gamma_2}  = d^{1-\iota} T^{-\frac{\gparam+2}{\gparam+4}}$.\label{thm: subsampling 1b}
    \end{enumerate}
    \item When $T\leq \threc d^{\frac{\gparam+4}{\gparam+1}}$:
    \begin{enumerate}
        \item if 
        $T\geq d^{1-3\iota}$, we choose $\eta$ such that $(1-\eta)^{\gamma_1}\eta^{\gamma_2}=d^{\frac{1}{3}-\iota}T^{-\frac{1}{3}}$;
        \item if 
        $T\leq d^{1-3\iota}$, we choose $\eta= d^{\frac{1}{4}\iota-1} T^{\frac{1}{4}}$.
    \end{enumerate}
\end{enumerate}
\end{theorem}

The proof of Theorem \ref{thm: subsampling regret} is relegated to Appendix \ref{appendix: optimal submatrix}. Compared with Theorem \ref{theorem: g improve}, the subsampling strategy has reduced the regret from $\regret_T$ in Theorem \ref{theorem: g improve} to $\regret_T^{SS}=\min\{\regret_T, \tilde{O}(d^{\iota}T)\}$. We notice that, while the optimal subsampling ratio $\eta$ depends on all parameters in Assumption \ref{assumption: subsampling} (i.e.,$\gamma_1$, $\gamma_2$, $\iota$, and $\gparam$), the improvement of the subsampling strategy depends solely on the value of $\iota$. Next, we provide an example to present more concrete improvement.
\begin{example}
\label{ex:regret-uniform-subsample}
Let the low-rank matrices $\bB =\bU\bV^{\top}$ where the entries of $\bU, \bV\in\mathbb{R}^{d\times \rrank}$ follow uniform distribution on [0,1]. From the empirical evidence in Appendix \ref{sec:ghandpsi}, $c_s\approx 2$, $\gamma_1\approx 1$, $\gamma_2\approx -0.5$, and $\iota\approx -\frac{1}{4}$. In this case, $\psi(\eta d; d)\leq 2 (1-\eta)\eta^{-\frac{1}{2}} d^{-\frac{1}{4}}$. Moreover, $\gparam\approx 3$ as discussed in Example \ref{example:uniform}, i.e., $\mathbb{E}_{|\mathcal{I}_r| = |\mathcal{I}_c|=m}\sqrt{g(h;\mathcal{I}_r, \mathcal{I}_c)}\leq mh^{\zeta/2}$. By applying Theorem \ref{thm: subsampling regret}, when $T\geq \threc d^{\frac{7}{4}}$, the regret is of order $\min\{\tilde{O}(d^{-\frac{1}{4}} T), \tilde{O}(d T^{\frac{2}{7}})\}$; when $T\leq \threc d^{\frac{7}{4}}$, the regret is of order $\min\{\tilde{O}(d^{-\frac{1}{4}}T), \tilde{O}(d^{\frac{1}{3}}T^{\frac{2}{3}})\}$. Comparing with Example \ref{example:uniform}, when $T\leq \threc d^{\frac{7}{4}}$, the regret can be reduced from $\tilde{O}(d^{\frac{1}{3}}T^{\frac{2}{3}})$ to $\tilde{O}(d^{-\frac{1}{4}}T)$.
\end{example}

The above example shows that \textsf{ss-LRB} performs better than \textsf{LRB} under the ultra-short horizons. In addition, Theorem \ref{thm: subsampling regret} implies that \textsf{ss-LRB} can possibly improve \textsf{LRB} under longer horizons when $\iota$ is small. The algorithm trades off between having a smaller subsampling cost with larger submatrix size and having a more efficient low-rank matrix estimation with smaller submatrix size. As the experimentation horizon increases, the algorithm wants to work with a larger submatrix so that it is more likely to include the largest element of the entire matrix to reduce the subsampling cost which grows linearly in time. Eventually, the algorithm will choose to work with the entire product set and utilize the low-rank structure on the entire matrix to filter out suboptimal arms when the time horizon is long enough. Such an intuition is implied by Theorem \ref{thm: subsampling regret}\eqref{thm: subsampling 1b}, where we see that, as $T$ becomes large enough, the optimal $\eta$ approaches 1.

\begin{remark}
    Similar to the comments we have made in Remark \ref{rmk:data-driven-h}, we see via Example \ref{ex:regret-uniform-subsample} that we can estimate $\psi(\eta d;d)$ if we are given  historical data and fit $c_s, \gamma_1, \gamma_2$ and $\iota$ so as to decide the optimal subsampling ratio $\eta$. A more data-driven approach elaborated in Section~\ref{sec: data-driven heuristics} is to use the optimal subsample size found for a historical dataset by searching over a pre-specified grid.
\end{remark}

\subsection{Visualizing the Benefits of Low-Rank Structure, Targeted Exploration + Exploitation and Subsampling}
\label{sec:visual}
We visualize the potential benefits of leveraging low-rank structure, using targeted exploration + exploitation phase, and subsampling, by plotting the sum of dominating terms (ignoring the constants) in the regret bound of several approaches, namely, $\psi(m;d)T + m/h^2 + \min\{\sqrt{T}\cdot\mathbb{E}_{|\mathcal{I}_r|=|\mathcal{I}_c|=m}[\sqrt{g(h;\mathcal{I}_r, \mathcal{I}_c)}], hT\}$ for \textsf{ss-LRB}, $d/h^2 + \min\{\sqrt{Tg(h)}, hT\}$ for \textsf{LRB}, $dh^{-2} + hT$ for \textsf{Low-rank ETC}, and $d\sqrt{T}$ for \textsf{UCB}, as they vary with $h$ in Figure~\ref{fig:upper_bound_comparison}.  We set $T = 3000$ and $d =100$ (more experiment details can be found in Appendix~\ref{app:visual_additional}). Note that the $y$-axis is the sum of the dominating terms, not the actual regret upper bounds or the actual regrets. This plot is intended to convey what we would expect from the theoretical bounds at a high level. We show with other empirical experiments that our methods outperform in terms of the actual regrets.

First of all, the curve for \textsf{UCB} hovers above all other approaches, showing that ignoring the low-rank structure can lead to substantially worse results.

Next, when comparing \textsf{LRB} and \textsf{Low-Rank ETC}, the gap between the two curves shows the value of bringing in the targeted exploration + exploitation in the second phase. We elaborate more on different ranges of $h$ under $T = 3000$ as in Figure \ref{fig:upper_bound_comparison_T3000} first. When $h$ is in the large range, the targeted set is too large to undergo exploration + exploitation efficiently. Hence, targeted exploration + exploitation does not bring in additional value comparing to committing to the best estimated arm under \textsf{Low-rank ETC}, leading to zero gap between the two curves. When $h$ is in the middle range, the value of having the targeted exploration + exploitation becomes evident with the increasing gap. When an appropriate $h$ is reached, we achieve the lowest value for \textsf{LRB} as the estimation accuracy and targeted set size are balanced. In other words, the time we spend in the pure exploration phase is balanced with the time we spend in the targeted exploration + exploitation phase. When $h$ is in the small range, more forced-sampling periods are needed to achieve the required estimation accuracy determined by $h$. With better low-rank estimation, the value that targeted exploration + exploitation brings in becomes smaller, as shown by smaller gap between the two curves as they go back up. Wasting many forced-sampling periods to learn a very accurate low-rank estimator is not necessary, as our \textsf{LRB} can afford a larger $h$ in the middle range so as to have more targeted exploration + exploitation periods and achieve an overall lower regret upper bound. When $T = 1500$ as in Figure~\ref{fig:upper_bound_comparison_T1500}, because of a shorter horizon, the range in which targeted exploration + exploitation brings in benefits shrinks. As we plot for even shorter horizons shown in Appendix~\ref{app:visual_additional}, \textsf{LRB} becomes ineffective and coincides with \textsf{Low-rank ETC}, which sets the stage for our \textsf{ss-LRB} under ultra-short horizons.

Finally, the gap between \textsf{ss-LRB} and \textsf{LRB} shows the benefit of subsampling. For \textsf{ss-LRB}, we plot the regret given by the optimal submatrix size
for each $h$, where the submatrix size\footnote{We treat \textsf{LRB} as a special case of \textsf{ss-LRB} by setting $m=d=100$.} is chosen from 5 to 100 with step size 5. The gap is zero when the full matrix size is the best submatrix size. In Figure \ref{fig:upper_bound_comparison_T1500}, the lowest value of \textsf{ss-LRB} is achieved when it subsamples, showing that, in shorter time horizons, it is better to subsample first rather than working with regular LRB directly. In Figure \ref{fig:upper_bound_comparison_T3000}, the lowest value of \textsf{ss-LRB} is achieved when it does not subsample, which is expected since the time horizon is relatively large. In both figures, when $h$ is very small, it is less costly to work with a submatrix than a full one due to a potentially smaller targeted set and fewer forced-sampling rounds. When $h$ is very large, \textsf{ss-LRB} chooses smaller submatrix size so that it spends less time in pure exploration phase and undergoes exploration + exploitation phase more efficiently.
\begin{figure}
    \centering
    \begin{subfigure}[b]{0.45\textwidth}
    \centering
        \includegraphics[width=\textwidth]{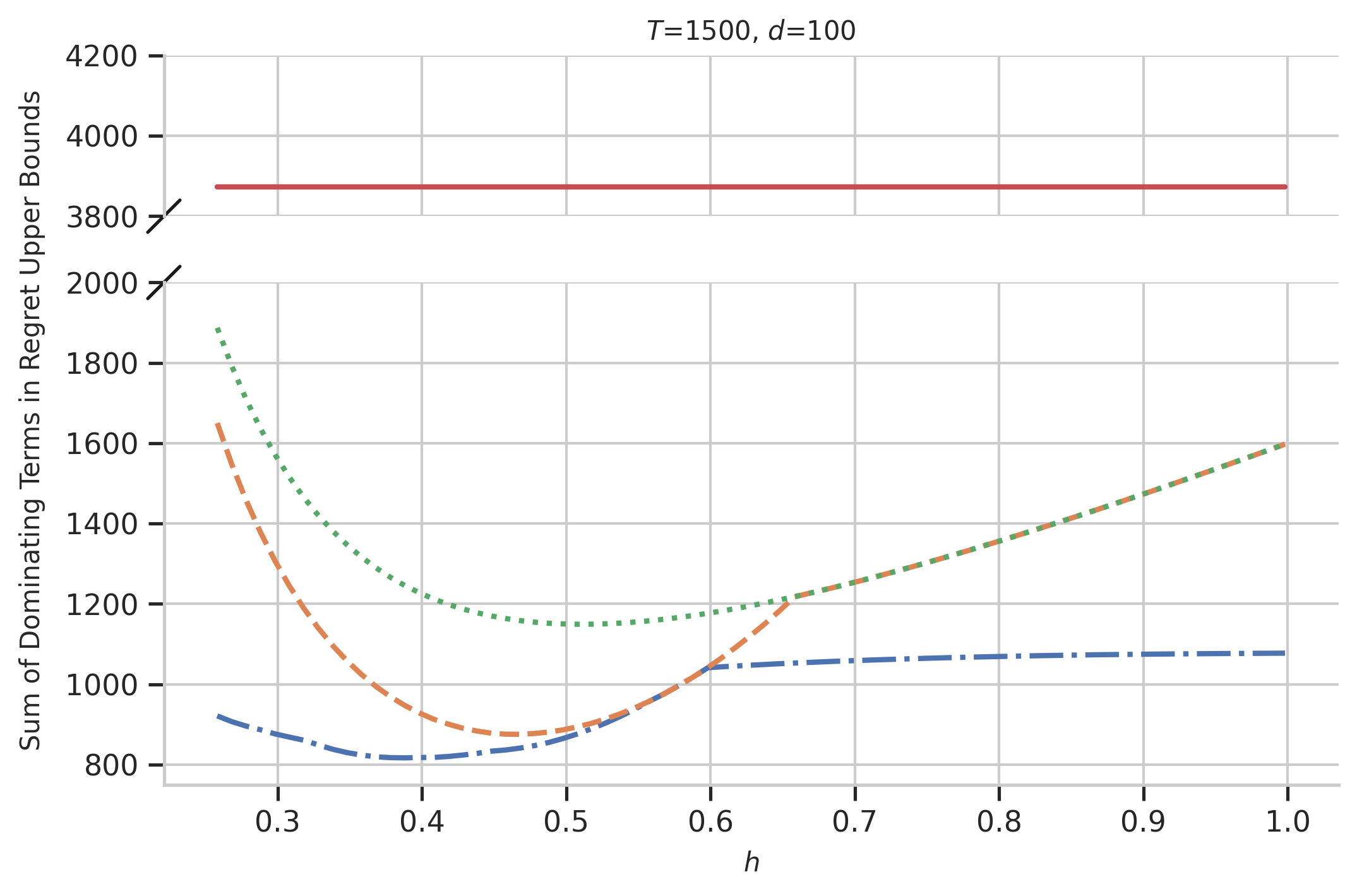}
            \caption{}
    \label{fig:upper_bound_comparison_T1500}
    \end{subfigure}
    \hfill
        \begin{subfigure}[b]{0.54\textwidth}
        \centering
        \includegraphics[width=\textwidth]{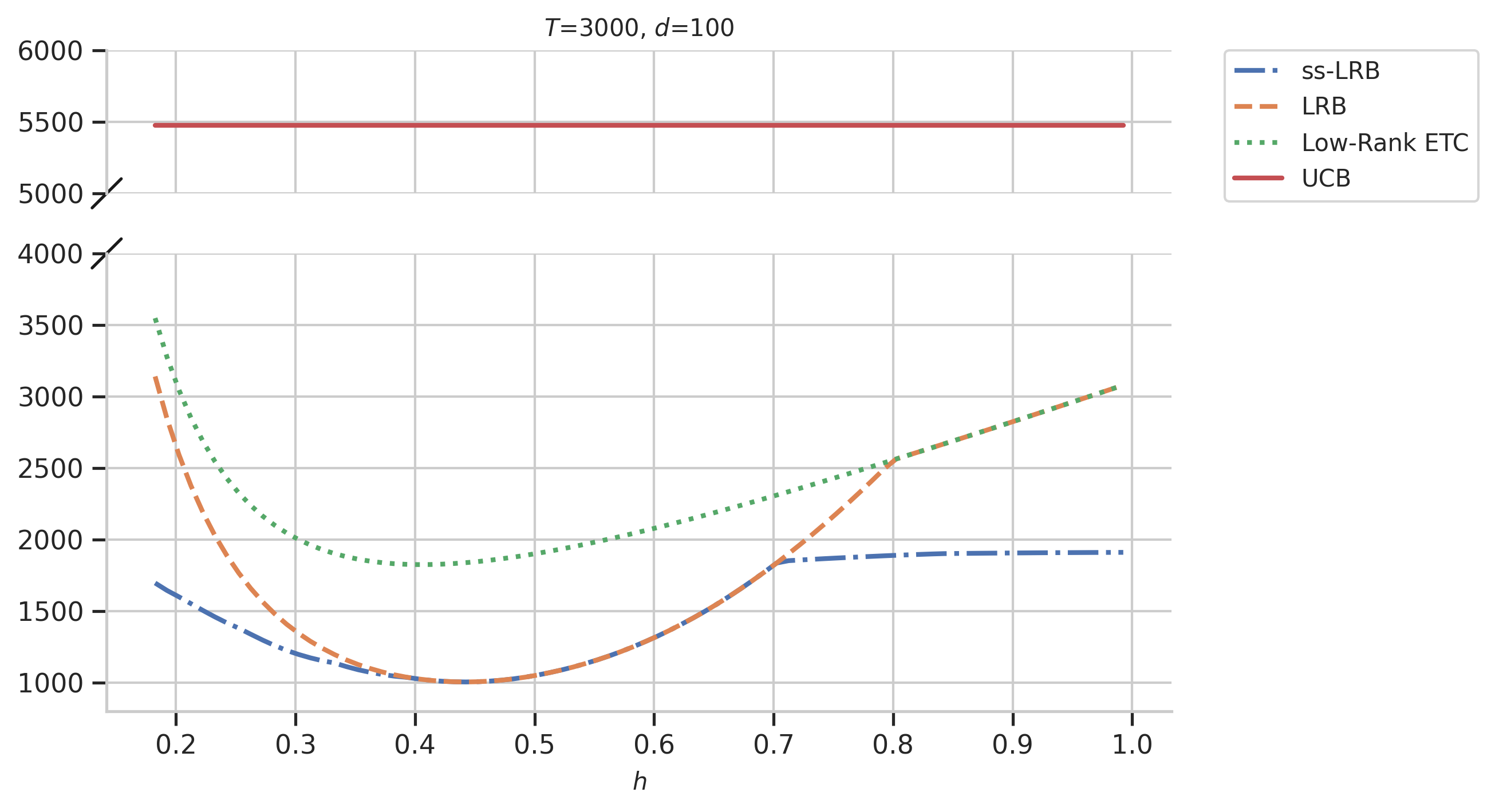}
            \caption{}
    \label{fig:upper_bound_comparison_T3000}
    \end{subfigure}
    \caption{Regret upper bound comparison among \textsf{ss-LRB}, \textsf{LRB}, \textsf{Low-rank ETC}, and \text{UCB} under different values of $h$ for time horizons (a) $T = 1500$ and (b) $T= 3000$.}
    \label{fig:upper_bound_comparison}
\end{figure}
\section{Empirical Results}
\label{sec:data}
We compare the performance of \textsf{LRB} and \textsf{ss-LRB} against existing algorithms in this section. First, we present two sets of empirical results evaluating our algorithm on both synthetic data and a real world dataset on NetEase Cloud Music App's impression-level data. Then, we consider a contextual bandit setting and show simulations supporting fast learning of \textsf{LRB} compared to an optimism-based contextual bandit algorithm.

\subsection{Synthetic Data}
\label{sec:synth}
\paragraph{Synthetic Data Generation.} In Figure \ref{fig:synth_fig}, we evaluate \textsf{LRB} and \textsf{ss-LRB} over 200 simulations, for two sets of parameters $T, d$: a) $d=100$, $\rrank{} = 3$, $T = 1000$ and b) $d=100$, $\rrank{} = 3$, $T = 2000$. In each case, we consider $d^2$ number of arms that form a $d^2$ matrix $\bB^*$ of rank $\rrank{}$. We compare the cumulative regret at period $T$. The error bars are 95\% confidence intervals.

\paragraph{Results.} We compare the Low-Rank Bandit algorithm and its submatrix sampling version against \textsf{ss-UCB} from \citep{bayati2020unreasonable}, which lowers cost from \textsf{UCB} notably due to the presence of many arms. Our results demonstrate that the Low-Rank Bandit and its submatrix sampling version significantly outperform the benchmark given different time horizon lengths. Specifically, in the shorter time horizon $T=1000$, we observe that subsampling improves the performance of \textsf{LRB}. In particular, \textsf{ss-LRB} with submatrix size equal to 40 achieves the lowest regret and reduces the \textsf{LRB} regret by 10\%. Further, by utilizing the low-rank structure of the subsample matrix, \textsf{ss-LRB} with submatrix size equal to 40 reduces \textsf{ss-UCB} regret by 21\%. In the longer time horizon $T=2000$, \textsf{LRB} algorithms perform better than the \textsf{ss-LRB} algorithm. In addition, \textsf{LRB} algorithm cuts the regret of \textsf{ss-UCB} by 31\%, which shows that the benefit from the low-rank structure outweighs the benefit from subsampling. We specify the algorithm inputs and other details in Appendix \ref{app:synth_param}.
\begin{figure}
    \centering
    \includegraphics[scale = 0.6]{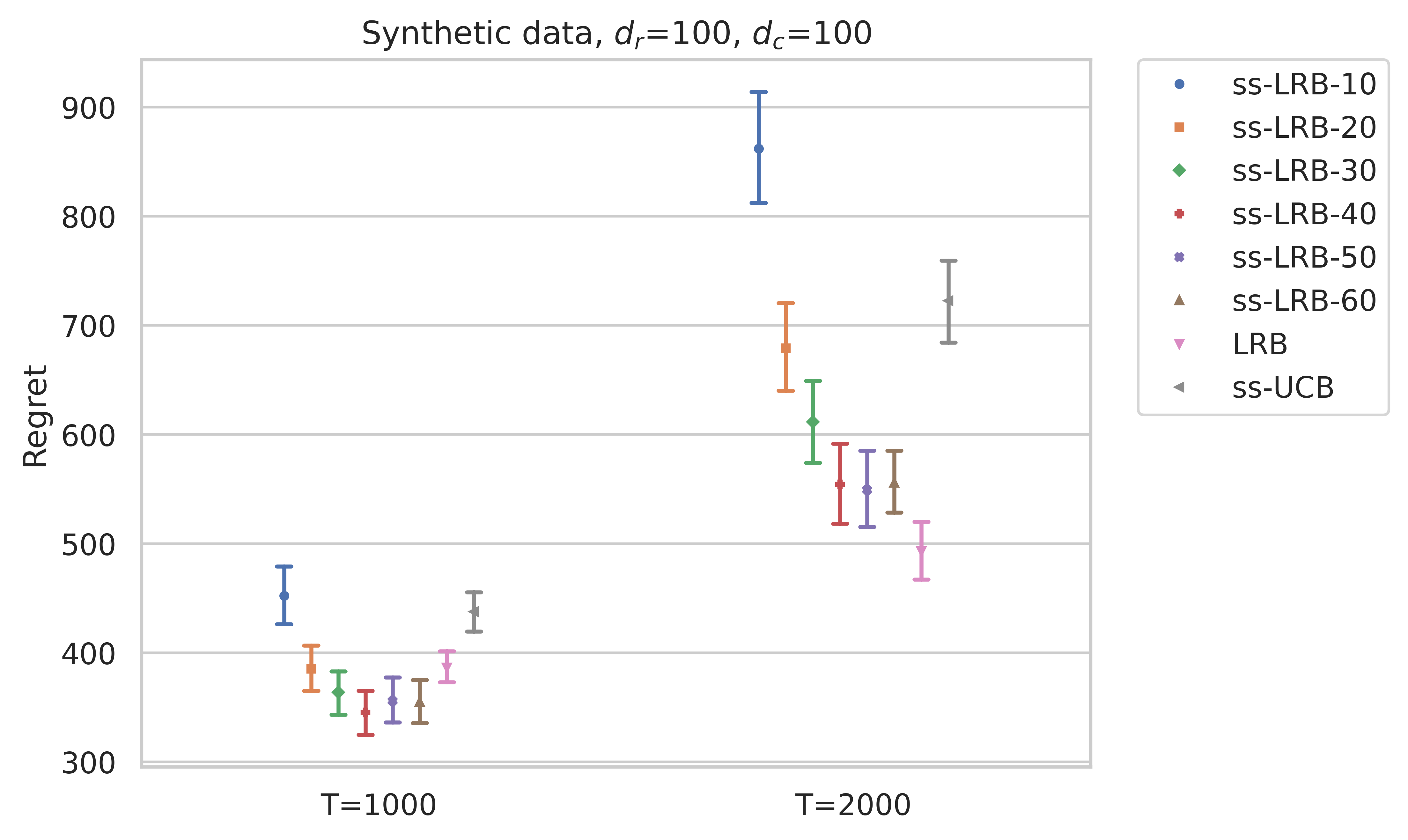}
    \caption{Distribution of cumulative regrets at $T=1000$ and at $T=2000$ for (1) \textsf{ss-LRB} Algorithm \ref{alg:two-stage} with submatrix size=10, (2) \textsf{ss-LRB} Algorithm \ref{alg:two-stage} with submatrix size=20, (3) \textsf{ss-LRB} Algorithm \ref{alg:two-stage} with submatrix size=30, (4) \textsf{ss-LRB} Algorithm \ref{alg:two-stage} with submatrix size=40, (5) \textsf{ss-LRB} Algorithm \ref{alg:two-stage} with submatrix size=50, (6) \textsf{ss-LRB} Algorithm \ref{alg:two-stage} with submatrix size=60, (7) \textsf{LRB} Algorithm \ref{alg:low-rank}, (8) \textsf{ss-UCB} with subsampling size=$\lfloor4\sqrt{T}\rfloor$.}
    \label{fig:synth_fig}
\end{figure}

\subsection{Data-Driven Approach to Select Experimentation Parameters for a Music Streaming App}
\label{sec: data-driven heuristics}
In a real-world setting, we can tune the experimentation parameters in a data-driven manner by using related historical data. To illustrate, we utilize a real-world dataset to simulate an advertising campaign setting as described in Example \ref{ex:ads}. That is, an advertiser needs to specify which pair of user group and content creator group to target for the advertisement campaign to be most effective\footnote{In Appendix~\ref{app:real-data}, we discuss other related modes of advertisement campaigns which are not necessarily two-sided products.}. The number of combinations of a user group and a creator group can be large, as we can distinguish user groups with many different information such as demographics, geographic information and behavior. Likewise, we can distinguish creator groups with many distinct features. Our algorithm (and other benchmark algorithms) would seek to find the best combination of a user group and a creator group to maximize overall interaction between the two groups. As such, we choose an application in the music streaming industry, where we have access to user and creator interaction data. We suppress the interaction data information and let different algorithms learn such information over time. We provide background on music streaming services in Appendix \ref{app:real-data}.

\paragraph{Our Problem.} The company has been providing services to older-aged users and now wants to expand their services to younger-aged ones. It has also attracted new creators to the platform and plans to run an experiment to find a younger-aged user group and a new creator group that interact the most to target an advertisement campaign. The experimentation parameters can be found via historical data on interaction between older-aged user groups and seasoned creator groups.

\paragraph{Bandit Formulation.} To simulate such a setting, we gather the data that correspond to the older-aged users and seasoned creators as a ``historical" dataset, and keep the data that correspond to the younger-aged users and new creators on the side as a ``test" dataset. For each dataset, we cluster the samples into 53 user groups and 39 creator groups, so that we have a 2067-armed stochastic bandit. We bucket users using different locations (provinces) and activity intensity levels. We bucket creators using the anonymous creator types and activity intensity levels. We obtain the same user groups and creator groups for the two datasets. For each (user group, creator group)-pair, we take the average of all interaction data between this user group and this creator group (including clicks, likes, shares, whether the user commented, whether the user viewed the comments and whether the user visited the creator's homepage) and treat it as reward for this pair. 

\paragraph{Selecting Experimentation Parameters.} The experimentation on finding a pair of younger-aged user group and new creator group needs input parameters such as the number of forced samples, denoted as $f$, the filtering resolution $h$ and the submatrix sampling size $m$ for different experimentation horizon length $T$. We work with a pre-specified grid of parameters (listed in Appendix~\ref{app:real_param}) and simulate a bandit experiment for each combination of the parameters using the historical dataset. We pick the set of parameters that achieves the lowest regret on the historical dataset averaged over 30 trials. Essentially speaking, we assume that the historical data and test data share similar reward distributions, hence the optimal parameters learned for the historical data can be used for the experiment we want to run for the test data.

\paragraph{Results and Evaluation.} In Figure \ref{fig:real_figure_final_reg}, we report the regret of using test data to conduct experiments using our algorithm (ss-)LRB versus the benchmark algorithm \textsf{ss-UCB}. We consider 300 runs. The error bars are 95\% confidence intervals. Our algorithm consistently reduces the regret of the benchmark \textsf{ss-UCB} by around 10\% across all time horizons.

In this real data experiment, without knowing whether the data generation produces low-rank matrix of rewards, we still see the effectiveness of our algorithm. Further, even though the historical data distribution may differ from the test data, our experiment results show that using the optimal parameters learned from the historical data still yields better performance than the benchmark. This shows that our data-driven approach of selecting the experiment parameters provides a practical solution to real-world scenarios, such that we can train the parameters on related historical datasets prior to starting an experiment. In practice, when no past data is available to tune these parameters, one can select parameters using the insights provided in Section \ref{sec:alg}-\ref{sec:subsamp}.
 \begin{figure}
     \centering
     \includegraphics[scale = 0.6]{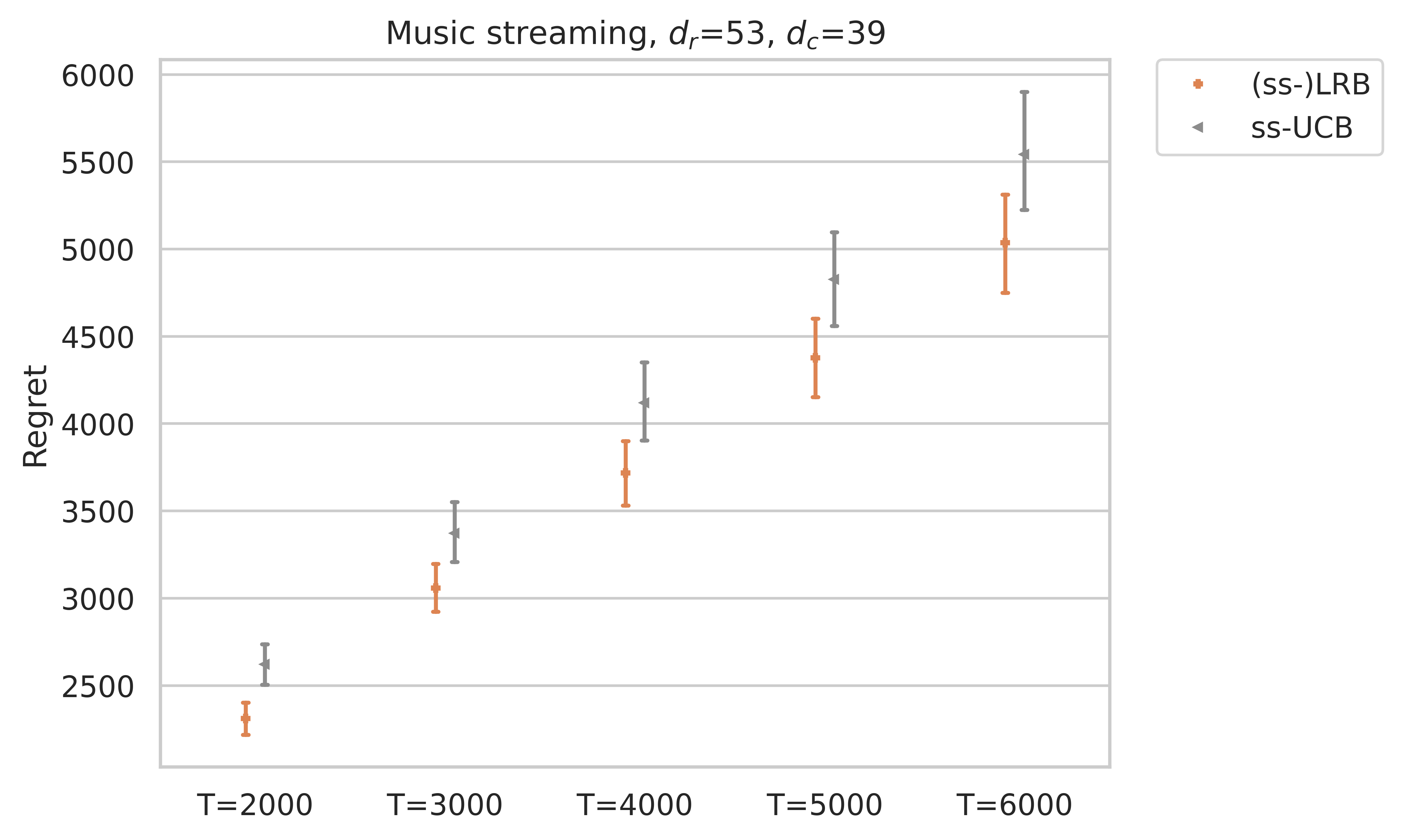}
     \caption{Distribution of the per-instance regret under different time horizons. Parameters of our algorithm (ss-)LRB are selected based on historical data; \textsf{ss-UCB} of \cite{bayati2020unreasonable} has sub-arm size=$4\sqrt{T}$.}
     \label{fig:real_figure_final_reg}
 \end{figure}

So far, we did not use any contextual information. In the next section, we show that \textsf{LRB} performs better than linear bandits which use contextual information, when the time horizon is short, because linear bandits take a long time to learn all the contextual information to initialize.

\subsection{Fast Learning in the Contextual Setting}\label{subsec:contextual-model}

In this section, we consider a classical contextual bandit setting and show how we use \textsf{LRB} to lower the cost in a synthetic experiment. We compare to a bandit algorithm, \textsf{OFUL}, that is designed to learn all the contextual information from \cite{abbasi2011improved}. This comparison is motivated by the observation that the regret of linear bandits at the initial rounds of an experiment increases almost linearly since the algorithm is essentially collecting initial data to perform a baseline estimate of the model parameters. Our algorithm deals with this so-called cold-start learning problem by identifying the low-rank structure behind the arm rewards, ignoring the learning of a large number of parameters. Even though our algorithm does not observe the contextual information, it learns latent feature vectors of the two-sided products quickly and incurs less cost. 

\paragraph{Contextual Bandit Setting.} 

We consider a bandit setting with $d_rd_c$ number of arms whose parameters are unknown. We index the $d_rd_c$ arms by $(j,k)$ for every $j\in [d_r]$ and every $k\in [d_c]$. We denote each arm feature with a $p$-dimensional vector $A^*_{jk}\in \mathbb{R}^p$. At time $t$, by selecting arm $(j,k)$, we observe a linear reward $A^{*\top}_{jk}X + \epsilon_t$, where $X\in \mathbb{R}^p$ is the (population-level) known context. The expected regret incurred at period $t$ is $\max_{A^{*}_{j'k'}}[A^{*\top}_{j'k'}X - A^{*\top}_{jk}X].$ We want to minimize the cumulative expected regret and find the arm that corresponds to the highest reward.

To match this bandit formulation with an example, let us consider the case of tailoring a homepage of an app to a user segment with a known context $X$ mentioned in Example \ref{ex: homepage}. The home page consists of a welcome text message and a picture background, and thus can be considered as a two-sided product. We have $d_r$ choices of welcome text messages and $d_c$ choices for picture backgrounds. $A_{jk}^*$ is the unknown ground truth arm feature of homepage indexed by $(j,k)$.

\paragraph{Low-Rank Reward Matrix.} The mean rewards can be shaped into the following matrix $\bB^*$: 
\[
\bB^* = \begin{bmatrix}\bA^*_{1}X | \bA^*_{2}X | \cdots | \bA^*_{d_c}X\end{bmatrix}\in \mathbb{R}^{d_r\times d_c}, \quad \text{where } \bA^*_{k} = \begin{bmatrix} A^{*\top}_{1k} \\ A^{*\top}_{2k} \\ \vdots \\ A^{*\top}_{d_rk}
\end{bmatrix} \in \mathbb{R}^{d_r\times p}.
\]
The low-rank structure of $\bB^*$ attributes to the modeling detailed next. Let $\bA^*_k = \bU\bV_{k}^{\top}$ where $\bU\in \mathbb{R}^{d_r\times \rrank}$ and $\bV_k\in \mathbb{R}^{p\times \rrank}$. Then $\bB^* = \begin{bmatrix}\bU\bV_{1}^{\top}X  | \bU\bV_{2}^{\top}X | \cdots | \bU\bV_{d_c}^{\top}X\end{bmatrix} = \bU \begin{bmatrix}\bV_{1}^{\top}X  | \bV_{2}^{\top}X | \cdots | \bV_{d_c}^{\top}X\end{bmatrix}.$ Under this design, the mean reward matrix $\bB^*$ is of rank $\rrank$ since $\bU$ is in $\mathbb{R}^{d_r\times \rrank{}}$ and $[\bV_{1}^{\top}X | \bV_{2}^{\top}X | \cdots | \bV_{d_c}^{\top}X]$ is in $\mathbb{R}^{\rrank{}\times d_c}$. We assume $\rrank$ is small, so the mean reward matrix is of low-rank. This is because we only need very few features to explain each welcome text message and each picture background.

In the example of designing homepage, each row of $\bU$ can be interpreted as the feature vector of a text message, and each column of $\begin{bmatrix}\bV_{1}^{\top}X  | \bV_{2}^{\top}X | \cdots | \bV_{d_c}^{\top}X\end{bmatrix}$ can be interpreted as the feature vector of a picture background. Welcome text messages are more straightforward and the interpretation would be the same across different user segments, so they are assumed to be independent of $X$. Picture backgrounds, on the other hand, are more subjective and up to each user segment's interpretation. Thus the representation depends on $X$.

\paragraph{LRB Policy.} To use our \textsf{LRB} method in Algorithm \ref{alg:low-rank}, we can treat the observed reward $A^{*\top}_{jk}X + \epsilon_t$ as the noisy observation of entry $(j,k)$ in $\bB^*$. Note that $A^{*\top}_{jk}X = \bB^*_{jk}$.

\paragraph{OFUL Policy.} To apply \textsf{OFUL} in this setting, we need the following transformation. We work with an arm set that is established by the known context $X$ and an unknown vector $\Theta\in \mathbb{R}^{d_rd_cp}$ that is constructed by using $A^*_{jk}$. The arm set is:
$\Big\{\tilde{A}_{jk}\in \mathbb{R}^{d_rd_cp} \mid j\in[d_r], k\in[d_c]\Big\}$ such that
$\tilde{A}_{jk} = [X_{11}^\top , X_{12}^\top, \cdots, X_{jk}^\top ,\cdots ,X_{d_rd_c}^\top]^\top\,,$ where $X_{jk} = X$ and, for $(j',k')\ne (j, k)$, $X_{j'k'}$ is the vector of all zeros in $\mathbb{R}^p$. The parameter $\Theta$ is defined by $\Theta = [A^{*\top}_{11}, A^{*\top}_{12},\cdots, A^{*\top}_{jk},\cdots, A^{*\top}_{d_r d_c}]^\top\,,$ which gives $\tilde{A}^{\top}_{jk}\Theta=A^{*\top}_{jk}X$. At time $t$, by pulling arm $(j,k)$ with known feature $\tilde{A}_{jk}$, it is equivalent to say that we observe a linear reward $\tilde{A}^{\top}_{jk}\Theta + \epsilon_t$.

\paragraph{Synthetic Experiment Set-up.} Our goal is to seek a policy that minimizes the cumulative expected regret over time horizon $T = 500$. We set $d_r = 8$, $d_c = 10$, $\rrank = 3$, $p = 7$. We evaluate different policies over 10 simulations. The error bars are 95\% confidence intervals. Other experiment details can be found in Appendix~\ref{app:context}.

\paragraph{Results.} In Figure \ref{fig:cumulative}, focusing on time $T=500$, the average cumulative regret of \textsf{OFUL} is 5662 and the average cumulative regret of \textsf{LRB} is 723. \textsf{LRB} lowers the regret of \textsf{OFUL} by nearly 87\%. This result shows that, while \textsf{OFUL} takes time to learn each dimension of $\Theta$ with $d_r\times d_c\times p = 8\times 10\times 7 = 560$ dimensions, \textsf{LRB} efficiently transfers knowledge learned for some arms to other arms by taking advantage of the low-rank structure and thus learns much quicker and incurs much smaller costs. We show in Appendix \ref{app:context} that our algorithm consistently outperforms \textsf{OFUL} by a big margin under all combinations of the experiment parameters we have tried.
\begin{figure}
         \centering
         \includegraphics[scale=0.6]{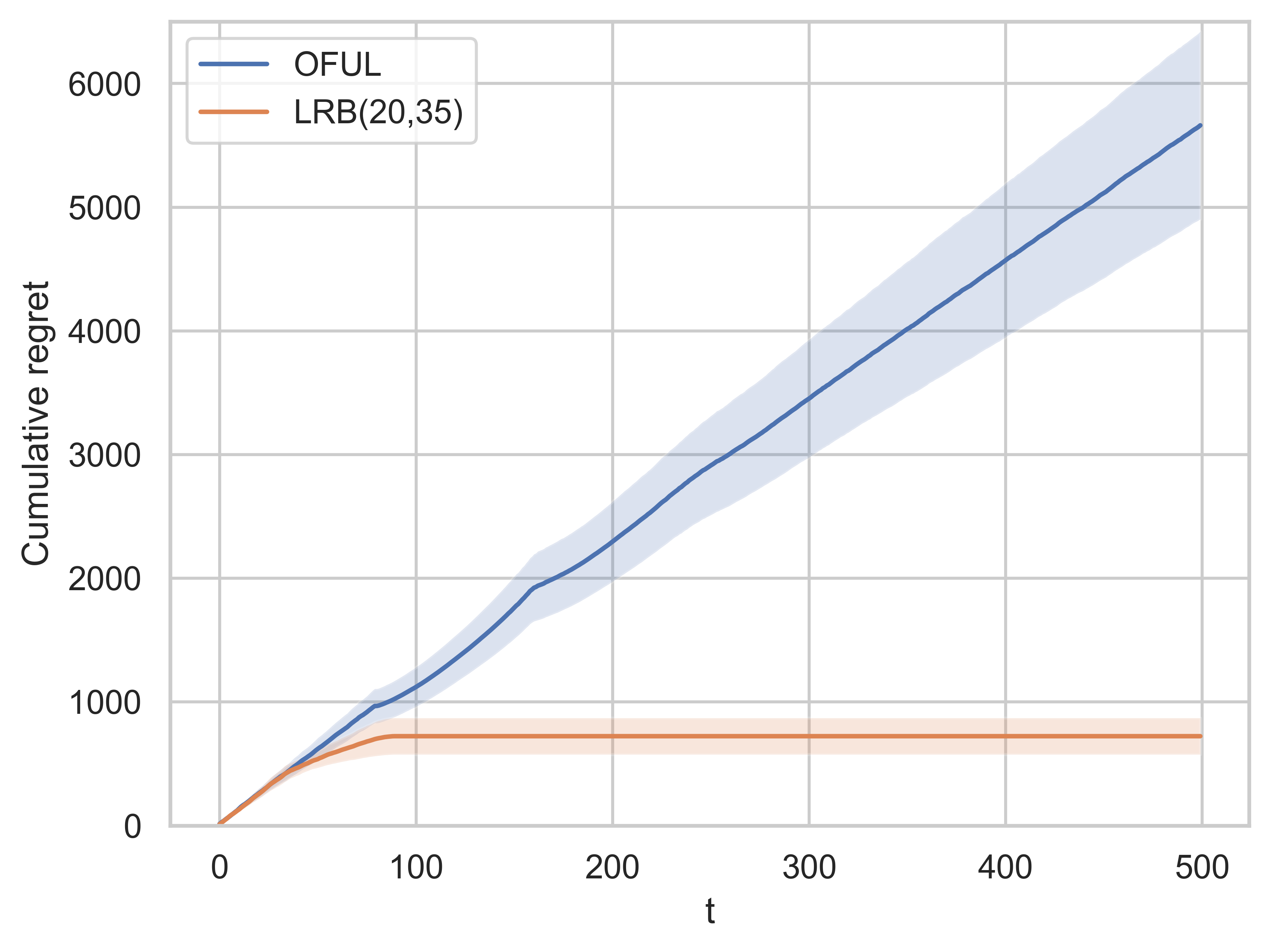}
        \caption{Cumulative regrets of \textsf{OFUL} and our algorithm \textsf{LRB} with number of forced samples equal to 20 and the filtering resolution equal to 35 under the contextual setting.}
        \label{fig:cumulative}
\end{figure}

It is possible to design a variation of \textsf{OFUL} as a benchmark that takes advantage of the special structure of the transformed arms (most dimensions of which are zeros). For example, the \textsf{OFUL} can learn a subset of dimensions in $\Theta$ or it can learn a sparse model, in similar spirit to Lasso Bandit. However, the objective here was to show the advantage of looking at the low-rank reward structure, compared to a standard linear bandit implementation.


\section{Conclusion}
This work introduced novel two-sided bandit algorithms, \textsf{LRB} and \textsf{ss-LRB}, to speed up the cold-start learning when there are many arms and the experimentation horizon is relatively short. Our theoretically and empirically efficient solution features leveraging the low-rank structure to shorten the \emph{burn-in} period or the exploration phase, a filtering mechanism to filter out suboptimal arms, and the submatrix sampling to adjust to ultra-short horizons.

\paragraph{Future Directions.}
Besides the low-rank estimator, prediction methods that leverage other problem structures may benefit from pairing with our targeted exploration + exploitation and/or subsampling strategy for enhanced performance as well. Such a heuristic may be of independent interest.
For example, when we want to leverage more dimensions to characterize a product, we can formulate a tensor version of our \textsf{LRB}. As a real-life application, Airbnb Experiences can be modeled as three-sided products by including a temporal side in addition to the content and the location of one activity as mentioned in Section \ref{sec:intro}. On a related note, for settings such as Stitch Fix personalized styling, we have a unique reward profile for each user and different user reward matrices might share similarity. To find the best outfit for each user, we can form a tensor version of our \textsf{LRB} that allows information-sharing among different users, or we can transfer latent contextual information learned across different users.
In addition, it would be interesting to see how our algorithm can be modified to tackle an online product assortment experiment: we have users and products as rows and columns, upon a user's arrival, we want to recommend a product (or an array of products) he/she likes the best. 

Our work gives new perspectives on how to leverage potential low-rank matrix structure for product bundling \citep{stigler1963united}, since the basic two-product bundling case considered in canonical works \citep{adams1976commodity,schmalensee1984gaussian,mcafee1989multiproduct} can be viewed as two-sided products. 
The entries of the matrix can be the expected rewards or purchase probabilities of bundles of two products. Through customer's purchase behaviors, the preference regarding each bundle can be learned, which ultimately informs the pricing scheme.

Finally, our work can be extended to use offline collected historical data (if any) to warm start our two-sided bandit under limited online experiment horizon. For example, we can leverage the offline data for low-rank matrix estimation\footnote{Depending on whether the offline samples are uniformly or non-uniformly sampled, we can keep using the entry-wise bound from \cite{yuxin} or switch to \cite{xi2023entry} who have derived an entry-wise bound for sampling a matrix under a non-uniform pattern.}, by adopting a meta-algorithm called Artificial Replay proposed by \cite{banerjee2022artificial} and previously cited in Section~\ref{sec:lit} to incorporate selected historical data into our bandit algorithm.

\newpage
\bibliography{refs}

\begin{thebibliography}{78}
\expandafter\ifx\csname natexlab\endcsname\relax\def\natexlab#1{#1}\fi
\expandafter\ifx\csname url\endcsname\relax
  \def\url#1{{\tt #1}}\fi
\expandafter\ifx\csname urlprefix\endcsname\relax\def\urlprefix{URL }\fi
\expandafter\ifx\csname urlstyle\endcsname\relax
  \expandafter\ifx\csname doi\endcsname\relax
  \def\doi#1{doi:\discretionary{}{}{}#1}\fi \else
  \expandafter\ifx\csname doi\endcsname\relax
  \def\doi{doi:\discretionary{}{}{}\begingroup \urlstyle{rm}\Url}\fi \fi

\bibitem[{Abbasi-Yadkori et~al.(2011)Abbasi-Yadkori, P{\'a}l, and
  Szepesv{\'a}ri}]{abbasi2011improved}
Abbasi-Yadkori, Yasin, D{\'a}vid P{\'a}l, Csaba Szepesv{\'a}ri. 2011.
\newblock Improved algorithms for linear stochastic bandits.
\newblock {\it Advances in neural information processing systems\/} {\bf 24}.

\bibitem[{Adams and Yellen(1976)}]{adams1976commodity}
Adams, William~James, Janet~L Yellen. 1976.
\newblock Commodity bundling and the burden of monopoly.
\newblock {\it The quarterly journal of economics\/}  475--498.

\bibitem[{Agarwal et~al.(2023)Agarwal, Dahleh, Shah, and
  Shen}]{agarwal2023causal}
Agarwal, Anish, Munther Dahleh, Devavrat Shah, Dennis Shen. 2023.
\newblock Causal matrix completion.
\newblock {\it The thirty sixth annual conference on learning theory\/}. PMLR,
  3821--3826.

\bibitem[{Agrawal and Devanur(2014)}]{agrawal2014bandits}
Agrawal, Shipra, Nikhil~R Devanur. 2014.
\newblock Bandits with concave rewards and convex knapsacks.
\newblock {\it Proceedings of the fifteenth ACM conference on Economics and
  computation\/}. 989--1006.

\bibitem[{Agrawal and Goyal(2012)}]{agrawal2012analysis}
Agrawal, Shipra, Navin Goyal. 2012.
\newblock Analysis of thompson sampling for the multi-armed bandit problem.
\newblock {\it Conference on learning theory\/}. 39--1.

\bibitem[{Allouah et~al.(2023)Allouah, Bahamou, and
  Besbes}]{allouah2023optimal}
Allouah, Amine, Achraf Bahamou, Omar Besbes. 2023.
\newblock Optimal pricing with a single point.
\newblock {\it Management Science\/} {\bf 69}(10) 5866--5882.

\bibitem[{Aswad(2019)}]{RIAA}
Aswad, Jem. 2019.
\newblock Music streaming soared from 7\% to 80\% of u.s. market in the 2010s,
  riaa stats show.

\bibitem[{Athey et~al.(2021)Athey, Bayati, Doudchenko, Imbens, and
  Khosravi}]{athey2021matrix}
Athey, Susan, Mohsen Bayati, Nikolay Doudchenko, Guido Imbens, Khashayar
  Khosravi. 2021.
\newblock Matrix completion methods for causal panel data models.
\newblock {\it Journal of the American Statistical Association\/} {\bf
  116}(536) 1716--1730.

\bibitem[{Auer et~al.(2002{\natexlab{a}})Auer, Cesa-Bianchi, and
  Fischer}]{auer2002finite}
Auer, Peter, Nicolo Cesa-Bianchi, Paul Fischer. 2002{\natexlab{a}}.
\newblock Finite-time analysis of the multiarmed bandit problem.
\newblock {\it Machine learning\/} {\bf 47}(2) 235--256.

\bibitem[{Auer et~al.(2002{\natexlab{b}})Auer, Cesa-Bianchi, Freund, and
  Schapire}]{auer2002nonstochastic}
Auer, Peter, Nicolo Cesa-Bianchi, Yoav Freund, Robert~E Schapire.
  2002{\natexlab{b}}.
\newblock The nonstochastic multiarmed bandit problem.
\newblock {\it SIAM journal on computing\/} {\bf 32}(1) 48--77.

\bibitem[{Bajari et~al.(2021)Bajari, Burdick, Imbens, Masoero, McQueen,
  Richardson, and Rosen}]{bajari2021multiple}
Bajari, Patrick, Brian Burdick, Guido~W Imbens, Lorenzo Masoero, James McQueen,
  Thomas Richardson, Ido~M Rosen. 2021.
\newblock Multiple randomization designs.
\newblock {\it arXiv preprint arXiv:2112.13495\/} .

\bibitem[{Banerjee et~al.(2022)Banerjee, Sinclair, Tambe, Xu, and
  Yu}]{banerjee2022artificial}
Banerjee, Siddhartha, Sean~R Sinclair, Milind Tambe, Lily Xu, Christina~Lee Yu.
  2022.
\newblock Artificial replay: a meta-algorithm for harnessing historical data in
  bandits.
\newblock {\it arXiv preprint arXiv:2210.00025\/} .

\bibitem[{Bastani and Bayati(2020)}]{bastani2020online}
Bastani, Hamsa, Mohsen Bayati. 2020.
\newblock Online decision making with high-dimensional covariates.
\newblock {\it Operations Research\/} {\bf 68}(1) 276--294.

\bibitem[{Bastani et~al.(2022)Bastani, Harsha, Perakis, and
  Singhvi}]{bastani2022learning}
Bastani, Hamsa, Pavithra Harsha, Georgia Perakis, Divya Singhvi. 2022.
\newblock Learning personalized product recommendations with customer
  disengagement.
\newblock {\it Manufacturing \& Service Operations Management\/} {\bf 24}(4)
  2010--2028.

\bibitem[{Bayati et~al.(2020)Bayati, Hamidi, Johari, and
  Khosravi}]{bayati2020unreasonable}
Bayati, Mohsen, Nima Hamidi, Ramesh Johari, Khashayar Khosravi. 2020.
\newblock Unreasonable effectiveness of greedy algorithms in multi-armed bandit
  with many arms.
\newblock {\it Advances in Neural Information Processing Systems\/} {\bf 33}
  1713--1723.

\bibitem[{Berry et~al.(1997)Berry, Chen, Zame, Heath, and
  Shepp}]{berry1997bandit}
Berry, Donald~A, Robert~W Chen, Alan Zame, David~C Heath, Larry~A Shepp. 1997.
\newblock Bandit problems with infinitely many arms.
\newblock {\it The Annals of Statistics\/}  2103--2116.

\bibitem[{Besbes and Mouchtaki(2023)}]{besbes2023big}
Besbes, Omar, Omar Mouchtaki. 2023.
\newblock How big should your data really be? data-driven newsvendor: Learning
  one sample at a time.
\newblock {\it Management Science\/} {\bf 69}(10) 5848--5865.

\bibitem[{Bhargava(2022)}]{bhargava2022creator}
Bhargava, Hemant~K. 2022.
\newblock The creator economy: Managing ecosystem supply, revenue sharing, and
  platform design.
\newblock {\it Management Science\/} {\bf 68}(7) 5233--5251.

\bibitem[{Bonald and Proutiere(2013)}]{bonald2013two}
Bonald, Thomas, Alexandre Proutiere. 2013.
\newblock Two-target algorithms for infinite-armed bandits with bernoulli
  rewards.
\newblock {\it Advances in Neural Information Processing Systems\/}.
  2184--2192.

\bibitem[{Brooks et~al.(2011)Brooks, Gelman, Jones, and
  Meng}]{brooks2011handbook}
Brooks, Steve, Andrew Gelman, Galin Jones, Xiao-Li Meng. 2011.
\newblock {\it Handbook of markov chain monte carlo\/}.
\newblock CRC press.

\bibitem[{Candes and Recht(2012)}]{candes2012exact}
Candes, Emmanuel, Benjamin Recht. 2012.
\newblock Exact matrix completion via convex optimization.
\newblock {\it Communications of the ACM\/} {\bf 55}(6) 111--119.

\bibitem[{Carpentier and Valko(2015)}]{carpentier2015simple}
Carpentier, Alexandra, Michal Valko. 2015.
\newblock Simple regret for infinitely many armed bandits.
\newblock {\it International Conference on Machine Learning\/}. 1133--1141.

\bibitem[{Chaudhuri and Kalyanakrishnan(2018)}]{chaudhuri2018quantile}
Chaudhuri, Arghya~Roy, Shivaram Kalyanakrishnan. 2018.
\newblock Quantile-regret minimisation in infinitely many-armed bandits.
\newblock {\it UAI\/}. 425--434.

\bibitem[{Chen et~al.(2020)Chen, Chi, Fan, Ma, and Yan}]{yuxin}
Chen, Y., Y.~Chi, J.~Fan, C.~Ma, Y.~Yan. 2020.
\newblock Noisy matrix completion: Understanding statistical guarantees for
  convex relaxation via nonconvex optimization.
\newblock {\it SIAM Journal on Optimization\/} {\bf 30}(4) 3098–3121.

\bibitem[{Chen(2015)}]{chen2015incoherence}
Chen, Yudong. 2015.
\newblock Incoherence-optimal matrix completion.
\newblock {\it IEEE Transactions on Information Theory\/} {\bf 61}(5)
  2909--2923.

\bibitem[{Chu et~al.(2011)Chu, Li, Reyzin, and Schapire}]{chu2011contextual}
Chu, Wei, Lihong Li, Lev Reyzin, Robert Schapire. 2011.
\newblock Contextual bandits with linear payoff functions.
\newblock {\it Proceedings of the Fourteenth International Conference on
  Artificial Intelligence and Statistics\/}. 208--214.

\bibitem[{Dani et~al.(2008)Dani, Hayes, and Kakade}]{dani2008stochastic}
Dani, Varsha, Thomas~P Hayes, Sham~M Kakade. 2008.
\newblock Stochastic linear optimization under bandit feedback.
\newblock {\it Conference on Learning Theory\/} .

\bibitem[{Davenport and Romberg(2016)}]{davenport2016overview}
Davenport, Mark~A, Justin Romberg. 2016.
\newblock An overview of low-rank matrix recovery from incomplete observations.
\newblock {\it IEEE Journal of Selected Topics in Signal Processing\/} {\bf
  10}(4) 608--622.

\bibitem[{Farias et~al.(2022)Farias, Li, and Peng}]{farias2022uncertainty}
Farias, Vivek, Andrew~A Li, Tianyi Peng. 2022.
\newblock Uncertainty quantification for low-rank matrix completion with
  heterogeneous and sub-exponential noise.
\newblock {\it International Conference on Artificial Intelligence and
  Statistics\/}. PMLR, 1179--1189.

\bibitem[{Filippi et~al.(2010)Filippi, Cappe, Garivier, and
  Szepesv{\'a}ri}]{filippi2010parametric}
Filippi, Sarah, Olivier Cappe, Aur{\'e}lien Garivier, Csaba Szepesv{\'a}ri.
  2010.
\newblock Parametric bandits: The generalized linear case.
\newblock {\it Advances in Neural Information Processing Systems\/}. 586--594.

\bibitem[{Geng et~al.(2020)Geng, Lin, and Nair}]{geng2020online}
Geng, Tong, Xiliang Lin, Harikesh~S Nair. 2020.
\newblock Online evaluation of audiences for targeted advertising via bandit
  experiments.
\newblock {\it Proceedings of the AAAI Conference on Artificial
  Intelligence\/}, vol.~34. 13273--13279.

\bibitem[{Goldenshluger and Zeevi(2013)}]{goldenshluger2013linear}
Goldenshluger, Alexander, Assaf Zeevi. 2013.
\newblock A linear response bandit problem.
\newblock {\it Stochastic Systems\/} {\bf 3}(1) 230--261.

\bibitem[{Gopalan et~al.(2014)Gopalan, Mannor, and
  Mansour}]{gopalan2014thompson}
Gopalan, Aditya, Shie Mannor, Yishay Mansour. 2014.
\newblock Thompson sampling for complex online problems.
\newblock {\it International Conference on Machine Learning\/}. 100--108.

\bibitem[{Gupta and Kallus(2022)}]{gupta2022data}
Gupta, Vishal, Nathan Kallus. 2022.
\newblock Data pooling in stochastic optimization.
\newblock {\it Management Science\/} {\bf 68}(3) 1595--1615.

\bibitem[{Gupta and Rusmevichientong(2021)}]{gupta2021small}
Gupta, Vishal, Paat Rusmevichientong. 2021.
\newblock Small-data, large-scale linear optimization with uncertain
  objectives.
\newblock {\it Management Science\/} {\bf 67}(1) 220--241.

\bibitem[{Hamidi et~al.(2019)Hamidi, Bayati, and
  Gupta}]{hamidi2019personalizing}
Hamidi, Nima, Mohsen Bayati, Kapil Gupta. 2019.
\newblock Personalizing many decisions with high-dimensional covariates.
\newblock {\it Advances in Neural Information Processing Systems\/}.
  11469--11480.

\bibitem[{Hastie et~al.(2015)Hastie, Tibshirani, and Wainwright}]{trevor}
Hastie, Trevor, Robert Tibshirani, Martin Wainwright. 2015.
\newblock {\it Statistical Learning with Sparsity: The Lasso and
  Generalizations\/}.
\newblock Taylor \& Francis, 2015.

\bibitem[{Johari et~al.(2022)Johari, Li, Liskovich, and
  Weintraub}]{johari2022experimental}
Johari, Ramesh, Hannah Li, Inessa Liskovich, Gabriel~Y Weintraub. 2022.
\newblock Experimental design in two-sided platforms: An analysis of bias.
\newblock {\it Management Science\/} {\bf 68}(10) 7069--7089.

\bibitem[{Jun et~al.(2019)Jun, Willett, Wright, and Nowak}]{jun2019bilinear}
Jun, Kwang-Sung, Rebecca Willett, Stephen Wright, Robert Nowak. 2019.
\newblock Bilinear bandits with low-rank structure.
\newblock {\it International Conference on Machine Learning\/}. PMLR,
  3163--3172.

\bibitem[{Kaelbling(1993)}]{kaelbling1993learning}
Kaelbling, Leslie~Pack. 1993.
\newblock {\it Learning in embedded systems\/}.
\newblock MIT press.

\bibitem[{Kallus and Udell(2020)}]{kallus2020dynamic}
Kallus, Nathan, Madeleine Udell. 2020.
\newblock Dynamic assortment personalization in high dimensions.
\newblock {\it Operations Research\/} {\bf 68}(4) 1020--1037.

\bibitem[{Katariya et~al.(2017)Katariya, Kveton, Szepesvari, Vernade, and
  Wen}]{katariya2017stochastic}
Katariya, Sumeet, Branislav Kveton, Csaba Szepesvari, Claire Vernade, Zheng
  Wen. 2017.
\newblock Stochastic rank-1 bandits.
\newblock {\it Artificial Intelligence and Statistics\/}. PMLR, 392--401.

\bibitem[{Katehakis and Robbins(1995)}]{katehakis1995sequential}
Katehakis, Michael~N, Herbert Robbins. 1995.
\newblock Sequential choice from several populations.
\newblock {\it Proceedings of the National Academy of Sciences of the United
  States of America\/} {\bf 92}(19) 8584.

\bibitem[{Keshavan et~al.(2010)Keshavan, Montanari, and
  Oh}]{keshavan2009matrix}
Keshavan, Raghunandan~H, Andrea Montanari, Sewoong Oh. 2010.
\newblock Matrix completion from a few entries.
\newblock {\it IEEE Trans. Informs. Theory\/} {\bf 56} 2980--2998.

\bibitem[{Keskin et~al.(2024)Keskin, Li, and Sunar}]{keskin2024data}
Keskin, N~Bora, Yuexing Li, Nur Sunar. 2024.
\newblock Data-driven clustering and feature-based retail electricity pricing
  with smart meters.
\newblock {\it Operations Research\/} .

\bibitem[{Kveton et~al.(2017)Kveton, Szepesv{\'a}ri, Rao, Wen, Abbasi-Yadkori,
  and Muthukrishnan}]{kveton2017stochastic}
Kveton, Branislav, Csaba Szepesv{\'a}ri, Anup Rao, Zheng Wen, Yasin
  Abbasi-Yadkori, S~Muthukrishnan. 2017.
\newblock Stochastic low-rank bandits.
\newblock {\it arXiv preprint arXiv:1712.04644\/} .

\bibitem[{Kveton et~al.(2020)Kveton, Zaheer, Szepesvari, Li, Ghavamzadeh, and
  Boutilier}]{kveton2020randomized}
Kveton, Branislav, Manzil Zaheer, Csaba Szepesvari, Lihong Li, Mohammad
  Ghavamzadeh, Craig Boutilier. 2020.
\newblock Randomized exploration in generalized linear bandits.
\newblock {\it International Conference on Artificial Intelligence and
  Statistics\/}. 2066--2076.

\bibitem[{Lai(1987)}]{lai1987adaptive}
Lai, Tze~Leung. 1987.
\newblock Adaptive treatment allocation and the multi-armed bandit problem.
\newblock {\it The Annals of Statistics\/}  1091--1114.

\bibitem[{Lattimore and Szepesv{\'a}ri(2020)}]{lattimore2020bandit}
Lattimore, Tor, Csaba Szepesv{\'a}ri. 2020.
\newblock {\it Bandit algorithms\/}.
\newblock Cambridge University Press.

\bibitem[{Li et~al.(2010)Li, Chu, Langford, and Schapire}]{li2010contextual}
Li, Lihong, Wei Chu, John Langford, Robert~E Schapire. 2010.
\newblock A contextual-bandit approach to personalized news article
  recommendation.
\newblock {\it Proceedings of the 19th international conference on World wide
  web\/}. 661--670.

\bibitem[{Li et~al.(2017)Li, Lu, and Zhou}]{li2017provably}
Li, Lihong, Yu~Lu, Dengyong Zhou. 2017.
\newblock Provably optimal algorithms for generalized linear contextual
  bandits.
\newblock {\it Proceedings of the 34th International Conference on Machine
  Learning-Volume 70\/}. JMLR.org, 2071--2080.

\bibitem[{Lika et~al.(2014)Lika, Kolomvatsos, and
  Hadjiefthymiades}]{lika2014facing}
Lika, Blerina, Kostas Kolomvatsos, Stathes Hadjiefthymiades. 2014.
\newblock Facing the cold start problem in recommender systems.
\newblock {\it Expert Systems with Applications\/} {\bf 41}(4) 2065--2073.

\bibitem[{Lops et~al.(2011)Lops, Gemmis, and Semeraro}]{lops2011content}
Lops, Pasquale, Marco~de Gemmis, Giovanni Semeraro. 2011.
\newblock Content-based recommender systems: State of the art and trends.
\newblock {\it Recommender systems handbook\/}  73--105.

\bibitem[{Lu(2019)}]{burning}
Lu, Shaw. 2019.
\newblock Beyond a-b testing: Multi-armed bandit experiments.

\bibitem[{Lu et~al.(2018)Lu, Wen, and Kveton}]{lu2018efficient}
Lu, Xiuyuan, Zheng Wen, Branislav Kveton. 2018.
\newblock Efficient online recommendation via low-rank ensemble sampling.
\newblock {\it Proceedings of the 12th ACM Conference on Recommender
  Systems\/}. 460--464.

\bibitem[{Lu et~al.(2021)Lu, Meisami, and Tewari}]{lu2021low}
Lu, Yangyi, Amirhossein Meisami, Ambuj Tewari. 2021.
\newblock Low-rank generalized linear bandit problems.
\newblock {\it International Conference on Artificial Intelligence and
  Statistics\/}. PMLR, 460--468.

\bibitem[{McAfee et~al.(1989)McAfee, McMillan, and
  Whinston}]{mcafee1989multiproduct}
McAfee, R~Preston, John McMillan, Michael~D Whinston. 1989.
\newblock Multiproduct monopoly, commodity bundling, and correlation of values.
\newblock {\it The Quarterly Journal of Economics\/} {\bf 104}(2) 371--383.

\bibitem[{Mersereau et~al.(2009)Mersereau, Rusmevichientong, and
  Tsitsiklis}]{mersereau2009structured}
Mersereau, Adam~J, Paat Rusmevichientong, John~N Tsitsiklis. 2009.
\newblock A structured multiarmed bandit problem and the greedy policy.
\newblock {\it IEEE Transactions on Automatic Control\/} {\bf 54}(12)
  2787--2802.

\bibitem[{Miao et~al.(2022)Miao, Chen, Chao, Liu, and Zhang}]{miao2022context}
Miao, Sentao, Xi~Chen, Xiuli Chao, Jiaxi Liu, Yidong Zhang. 2022.
\newblock Context-based dynamic pricing with online clustering.
\newblock {\it Production and Operations Management\/} {\bf 31}(9) 3559--3575.

\bibitem[{Nakamura(2015)}]{nakamura2015ucb}
Nakamura, Atsuyoshi. 2015.
\newblock A ucb-like strategy of collaborative filtering.
\newblock {\it Asian Conference on Machine Learning\/}. PMLR, 315--329.

\bibitem[{Pal and Jain(2023)}]{jain2022online}
Pal, Soumyabrata, Prateek Jain. 2023.
\newblock Online low rank matrix completion.
\newblock {\it The Eleventh International Conference on Learning
  Representations\/}.

\bibitem[{Rusmevichientong and Tsitsiklis(2010)}]{rusmevichientong2010linearly}
Rusmevichientong, Paat, John~N Tsitsiklis. 2010.
\newblock Linearly parameterized bandits.
\newblock {\it Mathematics of Operations Research\/} {\bf 35}(2) 395--411.

\bibitem[{Russo et~al.(2018)Russo, Van~Roy, Kazerouni, Osband, Wen
  et~al.}]{russo2018tutorial}
Russo, Daniel~J, Benjamin Van~Roy, Abbas Kazerouni, Ian Osband, Zheng Wen,
  et~al. 2018.
\newblock A tutorial on thompson sampling.
\newblock {\it Foundations and Trends{\textregistered} in Machine Learning\/}
  {\bf 11}(1) 1--96.

\bibitem[{Sam et~al.(2023)Sam, Chen, and Yu}]{sam2023overcoming}
Sam, Tyler, Yudong Chen, Christina~Lee Yu. 2023.
\newblock Overcoming the long horizon barrier for sample-efficient
  reinforcement learning with latent low-rank structure.
\newblock {\it Proceedings of the ACM on Measurement and Analysis of Computing
  Systems\/} {\bf 7}(2) 1--60.

\bibitem[{Schmalensee(1984)}]{schmalensee1984gaussian}
Schmalensee, Richard. 1984.
\newblock Gaussian demand and commodity bundling.
\newblock {\it Journal of business\/}  S211--S230.

\bibitem[{Stigler(1963)}]{stigler1963united}
Stigler, George~J. 1963.
\newblock United states v. loew's inc.: A note on block-booking.
\newblock {\it The Supreme Court Review\/} {\bf 1963} 152--157.

\bibitem[{Thompson(1933)}]{thompson1933likelihood}
Thompson, William~R. 1933.
\newblock On the likelihood that one unknown probability exceeds another in
  view of the evidence of two samples.
\newblock {\it Biometrika\/} {\bf 25}(3/4) 285--294.

\bibitem[{Trinh et~al.(2020)Trinh, Kaufmann, Vernade, and
  Combes}]{trinh2020solving}
Trinh, Cindy, Emilie Kaufmann, Claire Vernade, Richard Combes. 2020.
\newblock Solving bernoulli rank-one bandits with unimodal thompson sampling.
\newblock {\it Algorithmic Learning Theory\/}. PMLR, 862--889.

\bibitem[{Vershynin(2010)}]{vershynin2010introduction}
Vershynin, Roman. 2010.
\newblock Introduction to the non-asymptotic analysis of random matrices.
\newblock {\it arXiv preprint arXiv:1011.3027\/} .

\bibitem[{Volkovs et~al.(2017)Volkovs, Yu, and
  Poutanen}]{volkovs2017dropoutnet}
Volkovs, Maksims, Guangwei Yu, Tomi Poutanen. 2017.
\newblock Dropoutnet: Addressing cold start in recommender systems.
\newblock {\it Advances in neural information processing systems\/} {\bf 30}.

\bibitem[{Wang et~al.(2009)Wang, yves Audibert, and Munos}]{wang2009algorithms}
Wang, Yizao, Jean yves Audibert, R\'{e}mi Munos. 2009.
\newblock Algorithms for infinitely many-armed bandits.
\newblock D.~Koller, D.~Schuurmans, Y.~Bengio, L.~Bottou, eds., {\it Advances
  in Neural Information Processing Systems 21\/}. 1729--1736.

\bibitem[{Xi et~al.(2023)Xi, Yu, and Chen}]{xi2023entry}
Xi, Xumei, Christina~Lee Yu, Yudong Chen. 2023.
\newblock Entry-specific bounds for low-rank matrix completion under highly
  non-uniform sampling.
\newblock {\it 2023 IEEE International Symposium on Information Theory
  (ISIT)\/}. IEEE, 2625--2630.

\bibitem[{Xu and Bastani(2021)}]{xu2021learning}
Xu, Kan, Hamsa Bastani. 2021.
\newblock Learning across bandits in high dimension via robust statistics.
\newblock {\it arXiv preprint arXiv:2112.14233\/} {\bf 52}(7).

\bibitem[{Ye et~al.(2023)Ye, Zhang, Zhang, Zhang, Chen, and Xu}]{ye2023cold}
Ye, Zikun, Dennis~J Zhang, Heng Zhang, Renyu Zhang, Xin Chen, Zhiwei Xu. 2023.
\newblock Cold start to improve market thickness on online advertising
  platforms: Data-driven algorithms and field experiments.
\newblock {\it Management Science\/} {\bf 69}(7) 3838--3860.

\bibitem[{Zhang et~al.(2022)Zhang, Hu, Liu, Wu, and Li}]{zhang2022netease}
Zhang, Dennis~J, Ming Hu, Xiaofei Liu, Yuxiang Wu, Yong Li. 2022.
\newblock Netease cloud music data.
\newblock {\it Manufacturing \& Service Operations Management\/} {\bf 24}(1)
  275--284.

\bibitem[{Zhang et~al.(2014)Zhang, Tang, Zhang, and Xue}]{zhang2014addressing}
Zhang, Mi, Jie Tang, Xuchen Zhang, Xiangyang Xue. 2014.
\newblock Addressing cold start in recommender systems: A semi-supervised
  co-training algorithm.
\newblock {\it Proceedings of the 37th international ACM SIGIR conference on
  Research \& development in information retrieval\/}. 73--82.

\bibitem[{Zhou et~al.(2024)Zhou, Hao, Wen, Zhang, and Sun}]{zhou2024stochastic}
Zhou, Jie, Botao Hao, Zheng Wen, Jingfei Zhang, Will~Wei Sun. 2024.
\newblock Stochastic low-rank tensor bandits for multi-dimensional online
  decision making.
\newblock {\it Journal of the American Statistical Association\/}  1--14.

\bibitem[{Zhu et~al.(2022)Zhu, Li, Wang, and Zhang}]{zhu2022learning}
Zhu, Ziwei, Xudong Li, Mengdi Wang, Anru Zhang. 2022.
\newblock Learning markov models via low-rank optimization.
\newblock {\it Operations Research\/} {\bf 70}(4) 2384--2398.

\end{thebibliography}
\bibliographystyle{ormsv080}

\newpage

\renewcommand{\theHsection}{A\arabic{section}}
\begin{APPENDICES}

\section{Proof for Element-wise Guarantees of Forced-Sample Estimator}
\label{appendix:element-wise}

We first introduce a lemma that quantifies the forced sample size at time $t$. It states that with probability at most $1/t$, the forced samples at time $t$ will be bigger than $6\rho\log t$, and with probability at most $1/{t^3}$, the forced samples at time $t$ will be smaller than $\rho\log t/2$.

\begin{lemma}[Lemma 1 in \cite{hamidi2019personalizing}]
\label{lemma:forced-sampling-rounds}
The forced-sampling sets created by the forced-sampling rule (\ref{forced-sampling}) satisfy the following inequalities, for all $t\geq 2\rho\log(\rho)$, provided that $\rho\geq 24$,
\[
\mathbb{P}\big(|\mathcal{F}_t|\geq 6\rho\log (t)\big)\leq t^{-1}\quad \text{ and } \quad \mathbb{P}\big(|\mathcal{F}_t|\leq \rho\log (t)/2\big)\leq t^{-3}.
\]
\end{lemma}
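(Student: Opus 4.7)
The plan is to reduce the claim to a multiplicative Chernoff bound for a sum of independent Bernoulli random variables. First I would write $|\mathcal{F}_t| = \sum_{s=1}^{t} Z_s$ where $Z_s := \mathbf{1}\{f_s \neq \emptyset\}$. Summing the per-arm probabilities in \eqref{forced-sampling} over $(j,k) \in [d_r]\times[d_c]$, one sees that $Z_s = 1$ deterministically for $s \leq 2\rho\log\rho$, while for $s > 2\rho\log\rho$ the $Z_s$ are mutually independent Bernoullis with parameter $p_s = \rho/(s - \rho\log\rho + 1)$. The independence across times is inherited directly from the definition of the forced-sampling rule.

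Next I would sandwich the mean $\mu := \mathbb{E}[|\mathcal{F}_t|]$ by an integral/harmonic-number comparison. Writing $\mu = \lfloor 2\rho\log\rho \rfloor + \rho\bigl(H_{t-\rho\log\rho+1} - H_{\rho\log\rho+1}\bigr)$ with $H_n = \sum_{k=1}^n 1/k$, and using $\log(n+1) \leq H_n \leq 1 + \log n$ together with the standing assumptions $t \geq 2\rho\log\rho$ and $\rho \geq 24$, I would deduce two-sided bounds of the form
\[
\rho \log t \;\leq\; \mu \;\leq\; 3\rho \log t.
\]
The lower bound uses that $\lfloor 2\rho\log\rho \rfloor$ alone already contributes a logarithmic multiple of $\rho$ and that $H_{t-\rho\log\rho+1} - H_{\rho\log\rho+1}$ is non-negative for $t \geq 2\rho\log\rho$; the upper bound uses $H_n \leq 1 + \log n \leq 2\log t$ in the regime where $\rho$ is not too small.

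Then I would invoke the standard multiplicative Chernoff inequalities: for any $\delta \in (0,1]$,
\[
\mathbb{P}\bigl(|\mathcal{F}_t| \geq (1+\delta)\mu\bigr) \leq \exp(-\delta^2\mu/3), \qquad \mathbb{P}\bigl(|\mathcal{F}_t| \leq (1-\delta)\mu\bigr) \leq \exp(-\delta^2\mu/2).
\]
For the upper tail I take $\delta = 1$ so that $(1+\delta)\mu \leq 6\rho\log t$ by the mean bound, and the tail is $\exp(-\mu/3) \leq t^{-\rho/3} \leq t^{-1}$, which holds as soon as $\rho \geq 3$. For the lower tail I use the inclusion $\{|\mathcal{F}_t| \leq \rho\log t /2\} \subseteq \{|\mathcal{F}_t| \leq \mu/2\}$ (valid since $\mu \geq \rho\log t$) and then apply Chernoff with $\delta = 1/2$ to get a bound $\exp(-\mu/8) \leq t^{-\rho/8} \leq t^{-3}$ exactly when $\rho \geq 24$.

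The main obstacle is the bookkeeping in the sandwich bound on $\mu$: the constants must be tight enough that $\delta = 1$ and $\delta = 1/2$ give precisely the factor $6$ on the upper side and $1/2$ on the lower side, and the threshold $\rho \geq 24$ must line up exactly so that $\rho/8 \geq 3$. Everything else is mechanical once the independent-Bernoulli decomposition and the harmonic-sum estimate are in place.
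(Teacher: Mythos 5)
Your overall route is exactly the paper's: write $|\mathcal{F}_t|$ as a sum of independent Bernoulli indicators, sandwich the mean $\mu=\mathbb{E}[|\mathcal{F}_t|]$ between $\rho\log t$ and $3\rho\log t$, and then apply the multiplicative Chernoff bounds with $\delta=1$ for the upper tail (giving $e^{-\mu/3}\leq t^{-\rho/3}\leq t^{-1}$) and $\delta=1/2$ for the lower tail (giving $e^{-\mu/8}\leq t^{-\rho/8}\leq t^{-3}$ once $\rho\geq 24$). The paper's proof (Appendix B, proof of Lemma~\ref{lemma:forced-sampling-rounds}) does precisely this, with the harmonic sum estimated by the same $\log$-comparison you propose.

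There is one spot where your stated justification does not actually deliver the bound you need. For the lower tail you require $\mu\geq\rho\log t$, but you justify it by saying that the deterministic initial phase contributes $\lfloor 2\rho\log\rho\rfloor$ and that $H_{t-\rho\log\rho+1}-H_{\rho\log\rho+1}$ is merely non-negative. That only yields $\mu\geq 2\rho\log\rho$, which is strictly weaker than $\rho\log t$ as soon as $t>\rho^{2}$ --- and the lemma must hold for all $t\geq 2\rho\log\rho$, including arbitrarily large $t$. You genuinely need the quantitative contribution of the harmonic tail: $H_{t-\rho\log\rho+1}-H_{\rho\log\rho+1}\geq\log\bigl((t-\rho\log\rho)/(\rho\log\rho+1)\bigr)$, which combined with the $2\rho\log\rho$ head term gives $\mu\geq\rho\log\bigl(\rho^{2}(t-\rho\log\rho)/(\rho\log\rho)\bigr)\geq\rho\log t$ whenever $t(\rho-\log\rho)\geq\rho^{2}\log\rho$, and this last condition is implied by $t\geq 2\rho\log\rho$ and $\rho\geq 24$. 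This is the estimate the paper uses (its lower bound is $\theta\geq a+\rho\log(t-\rho\log\rho)$). The fix lives entirely inside your own harmonic-number setup, so the gap is one of bookkeeping rather than of approach, but as written the lower-tail half of the argument is not closed.
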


\begin{proof}{Proof of Lemma \ref{lemma:forced-sampling-rounds}.}
Let $a:=2\rho\lfloor\log(\rho)\rfloor$. We have that
\begin{align*}
    \theta:&=\mathbb{E}[|\mathcal{F}_t|] = a+\sum_{i = a+1}^t\frac{\rho}{i - \rho\log(\rho)+1} \\
    &\leq a+\rho(\log(t-a)+1)\\
    &\leq 3\rho\log(t).
\end{align*}

Similarly, we get that 
\[
\theta\geq a+\rho\log(t-\rho\log(\rho))\geq \rho\log(t).
\]

Applying Chernoff inequality, we get that 
\begin{align*}
    \mathbb{P}(|\mathcal{F}_t|\geq 6\rho\log(t))&\leq \mathbb{P}(|\mathcal{F}_t|\geq 2\theta)\\
    &\leq \exp(-\frac{\theta}{3})\\
    &\leq \frac{1}{t}.
\end{align*}
The other inequality can be obtained by applying lower tail Chernoff bound:
\begin{align*}
    \mathbb{P}\Bigg(|\mathcal{F}_t|\leq \frac{\rho\log(t)}{2}\Bigg)&\leq \mathbb{P}\left(|\mathcal{F}_t|\leq \frac{\theta}{2}\right)\\
    &\leq \exp\left(-\frac{\theta}{8}\right)\\
    &\leq \frac{1}{t^3}.
\end{align*}
The last step follows because 
\[
\exp\left(-\frac{\theta}{8}\right)\leq \exp\left(-\frac{\rho\log(t)}{8}\right)\leq \exp(-3\log t) = \exp(\log(t^{-3})) = \frac{1}{t^3}.
\]

\QED
\end{proof}

Equipped with Proposition \ref{prop:low-rank-estimators} and Lemma \ref{lemma:forced-sampling-rounds}, we are ready to prove Lemma \ref{prop:forced-samp-est} below.
\begin{proof}{Proof of Lemma \ref{prop:forced-samp-est}.}
According to the condition that $\rho\geq 160 \max\{C, C_\varsigma^2\}\kappa^4\mu{^2}\rrank^2(\sigma/\sigma_{\min})^2 d\log^3(d) h^{-2}$, we have $\rho\geq 32 \max\{C, C_\varsigma^2\}\kappa^4\mu{^2}\rrank^2 (\sigma/\sigma_{\min})^2 d\log^3(d)(1+h^{-2})$ since $h^{-2}\geq 1/(2b^*)^2\geq 1/4$, which implies that $\rho\geq 2C\kappa^4\mu^2 \rrank^2 d\log^3 (d)$ and $\rho\geq 32 C_\varsigma^2\kappa^3\mu \rrank (\sigma/\sigma_{\min})^2 d\log (d)/h^2$.

By Lemma \ref{lemma:forced-sampling-rounds}, we have $|\mathcal{F}_t|\geq \frac{\rho\log t}{2}$ with probability at least $1-\frac{1}{t^3}$.

When $|\mathcal{F}_t|\geq \frac{\rho\log t}{2}$, we have $$|\mathcal{F}_t| \geq C\kappa^4 \mu^2 \rrank^2 d{\log^3 (d)},$$
where the condition holds because $\rho\geq 2C\kappa^4\mu^2 \rrank^2 d\log^3 (d)$. Thus, the sample complexity of Proposition \ref{prop:low-rank-estimators} holds. 

Furthermore, since $\rho\geq 32 C_\varsigma^2 \kappa^3\mu \rrank (\sigma/\sigma_{\min})^2 d\log d/h^2$, we therefore have
\[
|\mathcal{F}_t|\geq\frac{\rho\log t}{2} \geq \frac{32 C_\varsigma^2\kappa^3\mu \rrank(\sigma/\sigma_{\min})^2 d\log d}{2h^2},
\]
so $$\frac{h}{4}\geq C_\varsigma\sqrt{\kappa^3\mu \rrank}(\sigma/\sigma_{\min})\sqrt{\frac{d\log d}{|\mathcal{F}_t|}}.$$

Hence, by plugging $\varsigma=3\beta$ in Proposition \ref{prop:low-rank-estimators}, since $\beta$ satisfies that $d^{-3\beta}= T^{-3}\leq t^{-3}$, we can conclude that, with probability at least $1-2t^{-3}$, 
\[ 
\|\widehat \bB^F - \bB^*\|_{\infty}\leq h/4\,.
\]
\QED
\end{proof}

\section{Proofs for Cumulative Regret}
\label{SS. Regret Analysis}
As mentioned in Section \ref{SS. regret analysis}, our key idea of the regret analysis is to decompose the total regret into the regret incurred from the pure exploration phase and that incurred from the targeted exploration + exploitation phase, and bound each respectively. In the regret decomposition below, we involve the initialization period $t_0$, defined in Section \ref{subsec:description} for clarity. Note that, we further decompose the regret from the targeted exploration + exploitation phase into two parts with respect to the event $G(\mathcal{F}_{t})$. The detailed regret decomposition is as follows:
\begin{enumerate}[label=(\arabic*)]
    \item Pure exploration phase: Initialization ($t\leq t_0$), and other forced-sampling rounds; \label{part 1 step}
    \item targeted exploration + exploitation phase: \label{part 2 step}
    \begin{enumerate}
    \item Times $t>t_0$ when the event $G(\mathcal{F}_{t})$ does not hold; \label{part 2 step a}
    \item Times $t>t_0$ when the event $G(\mathcal{F}_{t})$ holds but the Low-Rank Bandit plays a suboptimal arm chosen by the \textsf{UCB} algorithm from the targeted set $\mathcal{C}$. \label{part 2 step b}
    \end{enumerate}
\end{enumerate}

We point out the important lemmas we have derived to bound each part in the regret decomposition below.

The cumulative expected regret from ``pure exploration phases'' (part \ref{part 1 step}) at time $T$ is bounded by at most $2b^*(t_0 + 6\rho\log T)$ (Lemma \ref{lemma:part-a}), i.e. $\tilde O(d\rrank^2/h^2)$. This follows from the fact that the worst-case regret at any time step is at most $2b^*$ (Assumption \ref{assump:parameter_set}), while there are only $t_0$ initialization samples and at most $6\rho \log T$ forced samples up to time $T$ (Lemma \ref{lemma:forced-sampling-rounds}).

Next, the cumulative expected regret of having sub-optimal arms as the best arm in the targeted set due to an inaccurate forced-sample estimator such that $G(\mathcal{F}_t)$ does not hold (part \ref{part 2 step a}) is bounded by at most $100b^*$ (Lemma \ref{lemma:part-b}).
This follows from the tail inequality for the forced-sample estimator (Lemma \ref{prop:forced-samp-est}), which bounds the probability that event $G(\mathcal{F}_t)$ does not hold at time $t$ by at most $10/{t^3}$. The result follows from summing this quantity over time periods $t_0<t\leq T$.

Finally, the cumulative expected regret of exploring inside the targeted set when the estimator is accurate enough such that $G(\mathcal{F}_t)$ holds (part \ref{part 2 step b}) is no larger than the regret of applying an off-the-shelf non-contextual bandit on the set of ``near-optimal" arms, since the targeted set will be a subset of the ``near-optimal" arms under an accurate enough forced-sample estimator (as the previous Lemma \ref{lemma:candidates} shows). Further, if event $G(\mathcal{F}_t)$ holds, then the set $\mathcal{C}$ (chosen by the forced-sample estimator) contains the optimal arm $(j^*,k^*) = \argmax_{(j,k)\in [d_r]\times [d_c]}\bB^*_{jk}$. Then the \textsf{UCB} subroutine will work with $g(h)$ number of near-optimal arms and pick the best among them as the arm for the current round. The derivations of the gap-dependent bound and the gap-independent bound differ on this \textsf{UCB} subroutine. 

The details for bounding each part can be found below.
\subsection{Part (1)}
\begin{lemma}
\label{lemma:part-a}
\begin{align}
\regret_T^{(1)}\leq 2b^*\mathbb{E}[t_0 + |\mathcal{F}_T|]\leq 2b^*(t_0 + 6\rho\log T).
\label{eq:regretA}
\end{align}
\end{lemma}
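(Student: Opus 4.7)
The plan is to decompose the regret from part (a) into two contributions that are easy to control individually, then collect them and invoke Lemma \ref{lemma:forced-sampling-rounds} to close out the expectation.

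First, I would argue the per-round regret is uniformly bounded. By Assumption \ref{assump:parameter_set}, each row of $\bB^*$ has $\ell_2$-norm at most $b^*$, so in particular each entry satisfies $|\bB^*_{jk}|\leq b^*$, and therefore the instantaneous regret $\bB^*_{j^*k^*} - \bB^*_{j_t k_t}$ is bounded above by $2b^*$. This already handles the hard part of the bound: in part (a) the algorithm plays arbitrarily (either the fixed initialization sample at $t\leq t_0$, or a uniformly forced arm at $t>t_0$), so we cannot do better than the worst-case $2b^*$ round-by-round.

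Next, I would count rounds. The rounds contributing to part (a) are exactly the initialization rounds $t\leq t_0$ together with any forced-sampling rounds that happen after $t_0$, i.e., those indices in $\mathcal{F}_T\setminus[t_0]$. Bounding the latter by $|\mathcal{F}_T|$ (a harmless overcount), the total number of part-(a) rounds is at most $t_0 + |\mathcal{F}_T|$. Combining with the per-round bound gives
\[
\regret_T^{(a)} \leq 2b^*\,\mathbb{E}\bigl[t_0 + |\mathcal{F}_T|\bigr],
\]
which is the first inequality in \eqref{eq:regretA}.

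For the second inequality, I would invoke Lemma \ref{lemma:forced-sampling-rounds}, which controls $|\mathcal{F}_T|$. The cleanest route is the expectation bound $\mathbb{E}[|\mathcal{F}_T|]\leq 3\rho\log T$ that is actually established in the proof of that lemma (the quantity $\theta$ there), yielding $\mathbb{E}[t_0+|\mathcal{F}_T|]\leq t_0+3\rho\log T\leq t_0+6\rho\log T$. Alternatively, one can use only the stated tail bound $\mathbb{P}(|\mathcal{F}_T|\geq 6\rho\log T)\leq 1/T$ together with the trivial upper bound $|\mathcal{F}_T|\leq T$ to split $\mathbb{E}[|\mathcal{F}_T|]$ into the two events and absorb the tail. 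There is no real obstacle here; the content of the lemma is essentially the union of Assumption \ref{assump:parameter_set} and Lemma \ref{lemma:forced-sampling-rounds}, and the only mild subtlety is being careful that $t_0$ and $|\mathcal{F}_T|$ may overlap, which is fine because the stated bound is stated as an upper bound and the overcount only helps.
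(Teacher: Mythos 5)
Your proposal is correct and follows essentially the same route as the paper: bound the per-round regret by $2b^*$ via Assumption \ref{assump:parameter_set}, count at most $t_0+|\mathcal{F}_T|$ part-(a) rounds, and control $\mathbb{E}[|\mathcal{F}_T|]$ via Lemma \ref{lemma:forced-sampling-rounds}. If anything, you are slightly more careful than the paper in noting that the stated form of Lemma \ref{lemma:forced-sampling-rounds} gives only a tail bound, and that the clean way to get the expectation bound is the estimate $\mathbb{E}[|\mathcal{F}_T|]=\theta\leq 3\rho\log T$ established inside its proof.
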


\begin{proof}{Proof for Lemma \ref{lemma:part-a}.}

Note that, for each suboptimal choice, the regret incurred at each step is at most $\bB^*_{j^*k^*} -\bB^*_{jk}$ for some $(j,k)$. In addition, since $\|\bB^*\|_{\infty,2}\leq b^*$, then we must have $\|\bB^*\|_{\infty}\leq b^*$. In turn, we have 
\[
\bB^*_{j^*k^*} -\bB^*_{jk}\leq |\bB^*_{j^*k^*} -\bB^*_{jk}|\leq |\bB^*_{j^*k^*}| +|\bB^*_{jk}|\leq 2b^*.
\]
Next, the number of times that each suboptimal arm is pulled is less than or equal to $t_0+|\mathcal{F}_T|$. Putting together and using Lemma \ref{lemma:forced-sampling-rounds}, we get
\begin{align}
\regret_T^{(1)}\leq 2b^*\mathbb{E}[t_0 + |\mathcal{F}_T|]\leq 2b^*(t_0 + 6\rho\log T).
\label{eq:regretA}
\end{align}

\QED
\end{proof}

\subsection{Part \ref{part 2 step a}}

\begin{lemma}
\label{lemma:part-b}
\begin{align}
\regret_T^{(2a)}\leq 100b^*.
\label{eq:regretB}
\end{align}
\end{lemma}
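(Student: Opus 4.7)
The plan is to apply the high-probability bound for the forced-sample estimator (Lemma \ref{prop:forced-samp-est}) and sum the resulting tail probabilities over $t$. The reason this gives a $T$-independent constant times $b^*$ (rather than something like $\log T$) is that the failure probability of $G(\mathcal{F}_t)$ is $O(1/t^3)$, which is summable.

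First, I would bound the worst-case instantaneous regret. By Assumption \ref{assump:parameter_set}, the $\infty,2$-norm of $\bB^*$ is at most $b^*$, so in particular each entry $\bB^*_{jk}$ satisfies $|\bB^*_{jk}|\leq b^*$; therefore the single-round regret $\bB^*_{j^*k^*}-\bB^*_{j_tk_t}$ is at most $2b^*$, as was already used in the proof of Lemma \ref{lemma:part-a}. Thus, isolating the rounds $t>t_0$ on which $G(\mathcal{F}_t)$ fails,
\[
\regret_T^{(b)} \;=\; \sum_{t=t_0+1}^{T} \mathbb{E}\!\left[(\bB^*_{j^*k^*}-\bB^*_{j_tk_t})\,\ind\{G(\mathcal{F}_t)^c\}\right] \;\leq\; 2b^* \sum_{t=t_0+1}^{T} \mathbb{P}\!\left(G(\mathcal{F}_t)^c\right).
\]

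Second, I would invoke Lemma \ref{prop:forced-samp-est}: since the forced-sampling parameter in Theorem \ref{thm:regret} is set to $\rho=2\alpha_2(1+\gamma(h)^{-2})\rrank(d_r+d_c)$, which satisfies $\rho\geq\max\{32,\,32 d_r/\alpha_3,\,\alpha_4/\gamma(h)^{2},\,\alpha_5/h\}$ whenever $h\geq\lowh(T)$, a union bound over $j\in[d_r]$ inside the lemma gives $\mathbb{P}(G(\mathcal{F}_t)^c)\leq 9t^{-3}$ for every $t\geq t_0=2\rho\log\rho$. Plugging this in,
\[
\regret_T^{(b)} \;\leq\; 2b^* \sum_{t=t_0+1}^{T} \frac{9}{t^{3}} \;\leq\; 18 b^* \sum_{t=1}^{\infty} \frac{1}{t^{3}} \;=\; 18\,\zeta(3)\,b^* \;\leq\; 100 b^*,
\]
which is the stated bound (with substantial slack, since $18\,\zeta(3)<22$).

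There is no real obstacle here: the only thing to be careful about is ensuring the hypotheses of Lemma \ref{prop:forced-samp-est} hold for every $t>t_0$ (so that the $9/t^3$ tail bound applies uniformly), and that the range of $h$ specified in Theorem \ref{thm:regret} indeed makes the prescribed $\rho$ meet the four lower bounds required by that lemma. Once those conditions are in place, the argument reduces to the telescoped observation that $\sum_{t\geq 1} t^{-3}$ is a finite universal constant, which is what makes part (b) contribute only an $O(b^*)$ term to the total regret.
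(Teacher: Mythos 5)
Your proof is correct and follows essentially the same route as the paper's: bound the per-round regret by a constant multiple of $b^*$ via Assumption \ref{assump:parameter_set}, and sum the $9t^{-3}$ failure probabilities of $G(\mathcal{F}_t)$ from Lemma \ref{prop:forced-samp-est}, which is summable and yields a $T$-independent $O(b^*)$ contribution. The only cosmetic difference is that the paper bounds the per-round loss by $2b^*+2b^*=4b^*$ (splitting it into the sub-optimality of the filtered set plus the UCB loss within it) whereas you use the direct bound $2b^*$; both land comfortably under $100b^*$.
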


\begin{proof}{Proof for Lemma \ref{lemma:part-b}.}
Next, it follows from the definition of $G(\mathcal{F}_{t})$ and Lemma \ref{prop:forced-samp-est} that the number of times that $G(\mathcal{F}_{t}) = 0$ is controlled by
\[
\mathbb{E}\Bigg[\sum_{t = t_0+1}^T[1-G(\mathcal{F}_{t})]\Bigg] = \sum_{t = t_0+1}^T\mathbb{P}(G(\mathcal{F}_{t}) = 0)\leq \sum_{t = t_0+1}^T 2 t^{-3}\leq 15.
\]
Each round can be bounded by $2b^*$. Hence, $\regret_T^{(2a)}\leq 2b^*\mathbb{E}\Bigg[\sum_{t = t_0+1}^T[1-G(\mathcal{F}_{t})]\Bigg]$, we have
\begin{align}
\regret_T^{(2a)}\leq 100b^*.
\label{eq:regretB}
\end{align}
\QED
\end{proof}

\subsection{Part \ref{part 2 step b}}

\subsubsection{Gap-dependent bound for Part \ref{part 2 step b}}

As mentioned before, to bound part (2b) in the cumulative regret, we build on the standard proof for the \textsf{UCB} algorithm \citep[Chapter 7]{lattimore2020bandit}, with the help from Lemma \ref{lemma:candidates}. Under the condition for Lemma \ref{lemma:candidates}, the biggest entry is for sure in the targeted set $\mathcal{C}$. So we only incur a cost when some element in $\mathcal{S}_{opt}^h$ other than the biggest entry is selected. Further, since $g(h) = |\mathcal{S}_{opt}^h|$, we can use $g(h)$ as an upper bound on the number of arms in the targeted set $\mathcal{C}$. We first show the gap-dependent bound for part (2b).

\begin{lemma}
\begin{align}
    \regret_T^{(2b)} &\leq \min\left\{hT, \frac{8 g(h)\log T}{\Delta} + \Bigg(1 + \frac{\pi^2}{3}\Bigg)hg(h)\right\}.
    \label{eq:regretC}
\end{align}
\label{lemma:regretCgap}
\end{lemma}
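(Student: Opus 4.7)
{Proof sketch (proposal) for Lemma \ref{lemma:regretCgap}.}
The plan is to reduce the analysis to a standard UCB regret bound restricted to the (random but well-contained) targeted set. The key enabling fact is a companion result (Lemma \ref{lemma:candidates}, stated earlier in the paper): whenever $G(\mathcal{F}_t)$ holds, the optimal arm $(j^*,k^*)$ belongs to the targeted set $\mathcal{C}_t$, and $\mathcal{C}_t\subseteq \mathcal{S}_{opt}^h$. In particular $|\mathcal{C}_t|\leq g(h)$, and any arm pulled in part (c) has gap $\Delta_{jk}$ satisfying $\Delta\leq \Delta_{jk}\leq h$.

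First, I would bookkeep the regret by arms rather than by rounds. Let $N_T^{(c)}(j,k)$ denote the number of times arm $(j,k)$ is played during the rounds counted in part (c). Then
\[
\regret_T^{(c)}=\sum_{(j,k)\neq(j^*,k^*)}\Delta_{jk}\,\mathbb{E}[N_T^{(c)}(j,k)],
\]
and only arms in $\mathcal{S}_{opt}^h\setminus\{(j^*,k^*)\}$ can contribute, since any arm played under $G(\mathcal{F}_t)$ lies in $\mathcal{C}_t\subseteq \mathcal{S}_{opt}^h$.

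Second, I would invoke the textbook UCB peeling/concentration argument (e.g.\ Chapter 7 of \citet{lattimore2020bandit}) to bound $\mathbb{E}[N_T(j,k)]$, and hence $\mathbb{E}[N_T^{(c)}(j,k)]\leq \mathbb{E}[N_T(j,k)]$, by
\[
\mathbb{E}[N_T^{(c)}(j,k)]\;\leq\; \frac{8\log T}{\Delta_{jk}^2}+1+\frac{\pi^2}{3}.
\]
The standard derivation applies verbatim because (i) the UCB index uses all samples observed for $(j,k)$ with the same confidence radius $\sqrt{2\log w(t)/n_{(j,k)}}$ and $w(t)=t$ in the analysis, so subgaussian tail bounds on $\bar x_{(j,k)}$ apply regardless of forced-sampling contributions, and (ii) whenever UCB in part (c) compares $(j,k)$ with the best arm in $\mathcal{C}_t$, that best arm is precisely $(j^*,k^*)$, giving the usual one-versus-optimal comparison needed to conclude that an arm whose gap exceeds its confidence width cannot be selected.

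Multiplying by $\Delta_{jk}$, summing over $(j,k)\in \mathcal{S}_{opt}^h\setminus\{(j^*,k^*)\}$ (at most $g(h)$ terms), and finally using the uniform bounds $\Delta_{jk}\geq \Delta$ in the first term and $\Delta_{jk}\leq h$ in the second term yields
\[
\regret_T^{(c)}\;\leq\; \sum_{(j,k)\in \mathcal{S}_{opt}^h\setminus\{(j^*,k^*)\}}\!\!\!\left[\frac{8\log T}{\Delta_{jk}}+\Delta_{jk}\Bigl(1+\tfrac{\pi^2}{3}\Bigr)\right]\;\leq\;\frac{8 g(h)\log T}{\Delta}+\Bigl(1+\tfrac{\pi^2}{3}\Bigr)h\,g(h),
\]
which is the claimed bound.

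The main obstacle is verifying that the classical UCB concentration argument transfers cleanly even though (a) the set of arms under consideration, $\mathcal{C}_t$, is random and time-varying, and (b) forced samples interleave with UCB pulls. Both are handled by the observation that, on $G(\mathcal{F}_t)$, the optimal arm is always in $\mathcal{C}_t$ (so a standard optimal-versus-suboptimal comparison is valid), and by noting that adding forced-sample observations to $\bar x_{(j,k)}$ only tightens the subgaussian concentration of the empirical mean used in the UCB index. A minor additional care is needed to ensure that the union bounds over time in the peeling argument still give the $1+\pi^2/3$ constant; this follows from the fact that the UCB confidence region in Algorithm \ref{alg:low-rank} is indexed by the global clock $t$.
\QED
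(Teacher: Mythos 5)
Your proposal matches the paper's proof essentially step for step: the same reduction to a per-arm count bound $\mathbb{E}[\xi_{(j,k)}(T)]\leq 8\log T/\Delta_{jk}^2+1+\pi^2/3$ via the standard UCB peeling argument, the same use of Lemma \ref{lemma:candidates} to ensure $(j^*,k^*)\in\mathcal{C}_t$ and $\mathcal{C}_t\subseteq\mathcal{S}_{opt}^h$ under $G(\mathcal{F}_t)$, and the same final summation using $\Delta_{jk}\geq\Delta$ and $\Delta_{jk}\leq h$ over at most $g(h)$ arms. The handling of forced samples (bounding the conditional count by the total count $\xi^A_{(j,k)}$) is also how the paper deals with interleaved pulls, so the argument is correct and not a different route.
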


\begin{proof}{Proof of Lemma \ref{lemma:regretCgap}.}
We start by finding a bound on the number of times each non-best entry in $\mathcal{S}^h_{opt}$ is pulled when $G(\mathcal{F}_{t}) = 1$. Let $\xi_{(j,k)}(n)$  denote the number of times we have played action $(j,k)\in \mathcal{S}^h_{opt}$ in rounds $1\leq t\leq n$ when $G(\mathcal{F}_{t}) = 1$. Let $\xi^A_{(j,k)}(n)$ denote the total number of times we have played action $(j,k)$ in rounds $1\leq t\leq n$. Let $a(y, T) = \sigmaepsilon\sqrt{2\log T/y}$. We know that, for some arbitrary $m\geq 0$, 
\begin{align*}
    \xi_{(j,k)}(T) = \sum_{t=1}^T\mathbb{I}\Big((j_t, k_t) = (j,k) \text{ and } G(\mathcal{F}_{t}) = 1\Big)&\leq m+\sum_{t=1}^T\mathbb{I}\Big((j_t, k_t) = (j,k) \text{ and }\xi_{(j,k)}(t-1)\geq m \Big) \\
    &\leq m+\sum_{t=1}^T\mathbb{I}\Big((j_t, k_t) = (j,k) \text{ and }\xi^A_{(j,k)}(t-1)\geq m \Big).
\end{align*}

Since $G(\mathcal{F}_{t}) = 1$, we must have $(j^*, k^*)$ in the targeted set $\mathcal{C}(t)$ by Lemma \ref{lemma:candidates}. Therefore, when $(j,k)$ is pulled at $t$ when $G(\mathcal{F}_{t}) = 1$, we must have
\[
\bar x_{(j,k)} + a(\xi^A_{(j,k)}(t), t-1)\geq \bar x^* + a(\xi^A_{(j^*,k^*)}(t), t-1)
\]

Therefore, we arrive at 
\[
\xi_{(j,k)}(T) \leq m+\sum_{t=1}^T\mathbb{I}\Big(\bar x_{(j,k)} + a(\xi^A_{(j,k)}(t), t-1)\geq \bar x^* + a(\xi^A_{(j^*,k^*)}(t), t-1) \text{ and }\xi^A_{(j,k)}(t-1)\geq m \Big).
\]
Denote by $\bar x_{(j,k), s}$ the random variable for the empirical mean after playing action $(j,k)$ a total of $s$ times, and $\bar x^*_s$ the corresponding quantity for the best entry. Then we must have 
\[
\xi_{(j,k)}(T) \leq m+\sum_{t=1}^T\mathbb{I}\Big(\max_{m\leq s < t}\bar x_{(j,k), s} + a(s, t-1)\geq \min_{0<s'<t}\bar x^*_{s'} + a(s', t-1) \text{ and }\xi^A_{(j,k)}(t-1)\geq m \Big).
\]
It follows that
\[
\xi_{(j,k)}(T) \leq m+\sum_{t=1}^T\sum_{s=m}^{t-1}\sum_{s'=1}^{t-1}\mathbb{I}\Big(\bar x_{(j,k), s} + a(s, t-1)\geq \bar x^*_{s'} + a(s', t-1)\Big).
\]
So, 
\[
\xi_{(j,k)}(T) \leq m+\sum_{t=1}^{\infty}\sum_{s=m}^{t}\sum_{s'=1}^{t}\mathbb{I}\Big(\bar x_{(j,k), s} + a(s, t)\geq \bar x^*_{s'} + a(s', t)\Big).
\]
Suppose that the event $\bar x_{(j,k), s} + a(s, t)\geq \bar x^*_{s'} + a(s', t)$ happens. Then we claim that at least one of the following events must happen:
\begin{enumerate}
    \item $\bar x^*_{s'}\leq \bB^*_{j^* k^*} - a(s', t)$,
    \item $\bar x_{(j,k), s}\geq \bB^*_{jk} + a(s, t)$,
    \item $\bB^*_{j^*k^*} < \bB^*_{j k}+2a(s,t)$.
\end{enumerate}
We prove this claim by contradiction. Suppose that none of the three events are true. Then we have 
\[
\bar x_{(j,k), s} + a(s, t)\geq \bar x^*_{s'} + a(s', t)>\bB^*_{j^*k^*} - a(s', t)+a(s', t) = \bB^*_{j^*k^*}
\]
and 
\[
\bB^*_{jk} + 2a(s, t) > \bar x_{(j,k), s} + a(s, t).
\]
Putting the above two inequalities together, we get $\bB^*_{j^*k^*} < \bB^*_{jk}+2a(s,t)$. Here we reached a contradiction as this is exactly event 3. So we conclude that at least one of the three events is true. 

Note that, if we take $m = 8\log T/(\bB^*_{j^*k^*} - \bB^*_{jk})^2$, then $2a(s,t)\leq \bB^*_{j^*k^*} - \bB^*_{jk}$ for any $s\geq m$. Hence, $\bB^*_{j^*k^*} - \bB^*_{j k} -2a(s,t)\geq 0$. So when $m= 8\log T/(\bB^*_{j^*k^*} - \bB^*_{jk})^2$, the probability that at least one of the three events above happens is $2t^{-4}$. This is based on the Chernoff-Hoeffding inequality that $\mathbb{P}\Big(\bar x_{(j,k), s}\geq \bB^*_{jk} + a(s, t)\Big)\leq t^{-4}$ and $\mathbb{P}\Big(\bar x^*_{s'}\leq \bB^*_{j^* k^*} - a(s', t)\Big)\leq t^{-4}$. 
Therefore, 
\begin{align}
    \mathbb{E}[\xi_{(j,k)}(T)] &\leq m+\sum_{t=1}^{\infty}\sum_{s=m}^{t}\sum_{s'=1}^{t}\mathbb{P}\Big(\bar x_{(j,k), s} + a(s, t)\geq \bar x^*_{s'} + a(s', t)\Big) \nonumber\\ 
    &\leq \frac{8\log T}{(\bB^*_{j^*k^*} - \bB^*_{jk})^2}  +\sum_{t=1}^{\infty}\sum_{s=m}^{t}\sum_{s'=1}^{t}2t^{-4} \nonumber\\ 
    &\leq \frac{8\log T}{(\bB^*_{j^*k^*} - \bB^*_{jk})^2} + 1 +\sum_{t=1}^{\infty}\sum_{s=m}^{t}\sum_{s'=1}^{t}2t^{-4} \nonumber\\ 
    &\leq \frac{8\log T}{(\bB^*_{j^*k^*} - \bB^*_{jk})^2} + 1 +2\sum_{t=1}^{\infty}t^{-2} \nonumber\\ 
    &\leq \frac{8\log T}{(\bB^*_{j^*k^*} - \bB^*_{jk})^2} + 1 + \frac{\pi^2}{3}. \label{eq:numpicks}
\end{align}
It follows that
\begin{align*}
    \regret_T^{(2b)} 
    &\leq\sum_{(j,k)\in \mathcal{S}_{opt}^h}(\bB^*_{j^*k^*} - \bB^*_{jk}) \mathbb{E}[\xi_{(j,k)}(T)]\\
    &\leq \min\left\{hT, \sum_{(j,k)\in \mathcal{S}_{opt}^h}\Bigg[\frac{8\log T}{\bB^*_{j^*k^*} - \bB^*_{jk}} + \Bigg(1 + \frac{\pi^2}{3}\Bigg)(\bB^*_{j^*k^*} - \bB^*_{jk})\Bigg]\right\}\\
    &\leq \min\left\{hT, \sum_{(j,k)\in \mathcal{S}_{opt}^h}\Bigg[\frac{8\log T}{\Delta} + \Bigg(1 + \frac{\pi^2}{3}\Bigg)h\Bigg]\right\}\\
    &\leq \min\left\{hT, \frac{8 g(h)\log T}{\Delta} + \Bigg(1 + \frac{\pi^2}{3}\Bigg)hg(h)\right\}.
\end{align*}
\QED
\end{proof}

\subsubsection{Gap-independent bound for Part (2b)}
Now we tackle the gap-independent bound for part (2b).
\begin{lemma} let $\Delta_{jk} = \bB^*_{j^*k^*} - \bB^*_{jk}$, for any given $(j,k)$,
\label{lemma:regretC}
\begin{align}
\regret_T^{(2b)}\leq \min\left\{hT,8\sqrt{2Tg(h)\log T} \right\}.
    \label{eq:regretC}
\end{align}
\end{lemma}

\begin{proof}{Proof for Lemma \ref{lemma:regretC}.} From (\ref{eq:numpicks}), we have the following inequality:
\begin{align*}
    \mathbb{E}[\xi_{(j,k)}(T)] &\leq \frac{8\log T}{(\bB^*_{j^*k^*} - \bB^*_{jk})^2} + 1 + \frac{\pi^2}{3}.
\end{align*}

Therefore, using the basic regret decomposition, we have
\begin{align*}
    \regret_T^{(2b)} &\leq \sum_{(j,k)\in \mathcal{S}_{opt}^h}(\bB^*_{j^*k^*} - \bB^*_{jk}) \mathbb{E}[\xi_{(j,k)}(T)] \\
    &= \sum_{(j,k)\in \mathcal{S}_{opt}^h, \Delta_{jk} \leq \chi}\Delta_{jk} \mathbb{E}[\xi_{(j,k)}(T)] +
    \sum_{(j,k)\in \mathcal{S}_{opt}^h,\Delta_{jk} \geq \chi}\Delta_{jk} \mathbb{E}[\xi_{(j,k)}(T)]\\
    &\leq \min\left\{hT, T\chi + \sum_{(j,k)\in \mathcal{S}_{opt}^h, \Delta_{jk} \geq \chi} \Bigg(\frac{8\log T}{\Delta_{jk}} + \Delta_{jk}(1 + \frac{\pi^2}{3})\Bigg)\right\} \\
    &\leq \min\left\{hT,T\chi + \frac{8g(h)\log T}{\chi} + (1 + \frac{\pi^2}{3}) \sum_{(j,k)\in \mathcal{S}_{opt}^h}\Delta_{jk}\right\} \\
    &\leq \min\left\{hT,4\sqrt{2Tg(h)\log(T)}+(1 + \frac{\pi^2}{3}) hg(h)\right\} ,
\end{align*}
where we choose $\chi = \sqrt{8g(h)\log(T)/T}$.

We further simplify the upper bound. 

If $g(h)\geq T$, we have 
\[
\min\left\{hT,4\sqrt{2Tg(h)\log T}+(1 + \frac{\pi^2}{3}) hg(h)\right\}=hT.
\]
Thus, in this case,
\[
\regret_T^{(2b)}\leq  \min\left\{hT,4\sqrt{2Tg(h)\log(T)}\right\}.
\]

Otherwise, if $g(h)\leq T$, we have 
\[
4\sqrt{2Tg(h)\log T} \geq 2(1 + \frac{\pi^2}{3})\sqrt{Tg(h)\log T}\geq 2(1 + \frac{\pi^2}{3}) g(h)\geq (1 + \frac{\pi^2}{3}) {h g(h)},
\]
where the last inequality holds because $h\leq 2$.
It implies that
\[
4\sqrt{2Tg(h)\log(T)}+(1 + \frac{\pi^2}{3}) hg(h) \leq 8\sqrt{2Tg(h)\log T}.
\]
In conclusion, we have
\[
\regret_T^{(2b)}\leq \min\left\{hT,8\sqrt{2Tg(h)\log T} \right\}.
\]

\QED
\end{proof}

\subsubsection{Targeted set as a subset of the near-optimal Set.}
\label{app:target-set}
Lemma \ref{lemma:candidates} used to bound part (2b) shows that the targeted set is a subset of the near-optimal set when the forced-sample estimation is accurate enough and is proved below.
\proof{Proof of Lemma \ref{lemma:candidates}.}
Since $G(\mathcal{F}_t) = 1$, then
\[
\|\widehat \bB^F(\mathcal{F}_{t-1}) - \bB^*\|_{\infty} \leq \frac{h}{4}\,.
\]
It follows that $|\widehat \bB_{jk}^F(\mathcal{F}_{t-1}) - \bB^*_{jk}|\leq \frac{h}{4}$. Then,
\begin{align*}
   \widehat \bB_{jk} - \widehat \bB_{j'k'} &= \widehat \bB_{jk} - \bB^*_{jk} - \widehat \bB_{j'k'}+ \bB^*_{j'k'} + \bB^*_{jk} - \bB^*_{j'k'}  \\
   &\leq \frac{h}{2} + \bB^*_{jk} - \bB^*_{j'k'}.
\end{align*}
Let $(j,k) = \argmax \widehat \bB_{jk}$ and $(j',k') = (j^*, k^*) = \argmax \bB^*_{jk}$. Since $\bB^*_{jk} - \bB^*_{j^*k^*}\leq 0$, then $\widehat \bB_{jk} - \widehat \bB_{j^*k^*}\leq \frac{h}{2}$. So $(j^*, k^*)\in \mathcal{C}$.

Let $(j',k')\in [d_r]\times [d_c]\setminus\mathcal{S}_{opt}^h$, then $\bB^*_{j'k'}\leq \bB^*_{j^*k^*} - h$.
\begin{align*}
   \widehat \bB_{jk} - \widehat \bB_{j'k'} & \geq \widehat \bB_{j^*k^*} - \widehat \bB_{j'k'}  = \widehat \bB_{j^*k^*} - \bB^*_{j^*k^*} - \widehat \bB_{j'k'} +  \bB^*_{j'k'} + \bB^*_{j^*k^*} - \bB^*_{j'k'} \\
   & \geq -\frac{h}{2} + \bB^*_{j^*k^*}- \bB^*_{j'k'} \geq -\frac{h}{2} + h = \frac{h}{2}.
\end{align*}
So $(j',k')\notin \mathcal{C}$.
\endproof{\QED}
\subsection{Proofs for Theorem \ref{thm:regret} and Theorem \ref{thm:regret-gapind}}
\begin{proof}{Proof of Theorem \ref{thm:regret}.}
Combining Lemmas \ref{lemma:part-a}-\ref{lemma:regretCgap}, and by plugging in $\rho(h;d_r, d_c)=\alpha \rrank^2 d \log^3 (d)h^{-2}$, the cumulative regret is as \eqref{ineq:dep-bd}. Specifically,
\begin{align*}
 \regret_T(h) &\leq \overbrace{2b^*(t_0 + 6\rho\log T)}^{\text{Regret from (1)}} +\overbrace{100b^*}^{\text{Regret from (2a)}} + \overbrace{\min\left\{hT,\frac{8g(h)\log T}{\Delta} + \Bigg(1 + \frac{\pi^2}{3}\Bigg)hg(h)\right\}}^{\text{Regret from (2b)}} \\
&=  b^*\Big[2t_0 + 12\alpha \rrank^2 d \log^3(d) h^{-2}\log T \Big]+ 100b^* + \min\left\{hT,\frac{8g(h)\log T}{\Delta} + \Bigg(1 + \frac{\pi^2}{3}\Bigg)hg(h)\right\}\\
&\leq b^*\Big[2t_0 \alpha \rrank^2 d \log^3 (d) h^{-2}\log T+ 12\alpha \rrank^2 d \log^3 (d) h^{-2}\log T\Big]\\
&\quad \quad +100b^* +\min\left\{ hT, \frac{8g(h)\log T}{\Delta} + \Bigg(1 + \frac{\pi^2}{3}\Bigg)hg(h)\right\}\\
&\leq C_1 \rrank^2 d \log^3 (d) h^{-2}\log T +\min\left\{hT,\frac{8g(h)\log T}{\Delta} + \Bigg(1 + \frac{\pi^2}{3}\Bigg)hg(h)\right\} + C_2,
\end{align*}
where $C_1=b^*\alpha(2t_0 +12)$ and $C_2=100 b^*$. 
\QED
\end{proof}

The proof for Theorem \ref{thm:regret-gapind} follows the same strategy. 
\begin{proof}{Proof of Theorem \ref{thm:regret-gapind}.}
Combining Lemmas \ref{lemma:part-a}-\ref{lemma:regretCgap}, and by plugging in $\rho(h;d_r, d_c)=\alpha \rrank^2 d \log^3 (d)h^{-2}$, the cumulative regret is as \eqref{ineq:dep-bd}. Specifically,
\begin{align*}
 \regret_T(h) &\leq \overbrace{2b^*(t_0 + 6\rho\log T)}^{\text{Regret from (1)}} +\overbrace{100b^*}^{\text{Regret from (2a)}} + \overbrace{\min\left\{hT,8\sigmaepsilon\sqrt{2Tg(h)\log(T)}\right\} }^{\text{Regret from (2b)}} \\
&\leq \underbrace{ C_1 \rrank^2 d\log^3 (d) h^{-2}\log T }_{\phi_1}+  \underbrace{\min\left\{hT, 8\sigmaepsilon\sqrt{2Tg(h)\log(T)}\right\} }_{\phi_2} +C_2
\end{align*}
where $C_1=b^*\alpha(2t_0 +12)$ and $C_2=100 b^*$. 
\QED
\end{proof}

\section{Proofs for Section \ref{sec:balance}}
\label{app:filteringresolution}

\subsection{Proofs for Section \ref{sec:phi1andphi2}}

\begin{proof}{Proof of Lemma~\ref{prop:optimal_h}.} First note that $\phi_1(h)$ is monotonically decreasing with $h$ and $\phi_2(h)$ is monotonically increasing with $h$ and $h$ takes on possible values between $\lowh(T)$ and $2b^*$.

Case \ref{optimal_h_scenario_3}: If $\phi_1(h)<\phi_2(h)$ for any $h\in (\lowh(T), 2b^*]$, we have $\phi_1(\lowh(T))<\phi_2(\lowh(T))\leq \phi_2(h^*)$, which implies that $\overline{\regret}_T(\lowh(T))=\phi_1(\lowh(T))+\phi_2(\lowh(T))\leq 2\phi_2(h^*)\leq 2\overline{\regret}_T(h^*)$.

Case \ref{optimal_h_scenario_2}: Assume $\tilde{h}$ satisfies that $\phi_1(\tilde{h}) = \phi_2(\tilde{h})$. If $h^*> \tilde{h}$, we have $\phi_1(h^*)>\phi_1(\tilde{h})\geq \phi_2(\tilde{h})$, which implies that $\overline{\regret}_T(\tilde{h})=\phi_1(\tilde{h})+\phi_2(\tilde{h})\leq 2\phi_1(h^*)\leq 2\overline{\regret}_T(h^*)$. Instead, if $h^*<\tilde{h}$, we have $\phi_2(h^*)>\phi_2(\tilde{h})\geq \phi_1(\tilde{h})$, which yields $\overline{\regret}_T(\tilde{h})=\phi_1(\tilde{h})+\phi_2(\tilde{h})\leq 2\phi_2(h^*)\leq 2\overline{\regret}_T(h^*)$.

Case \ref{optimal_h_scenario_1}: Assume the optimal $h$ is $h^*$. Then if $\phi_1(h)>\phi_2(h)$ for any $h\in (\lowh(T), 2b^*]$, we have $\phi_2(2b^*)<\phi_1(2b^*)\leq \phi_1(h^*)$, which implies that $\overline{\regret}_T(2b^*)=\phi_1(2b^*)+\phi_2(2b^*)\leq 2\phi_1(h^*)\leq 2\phi_1(h^*)+2\phi_2(h^*) = 2\overline{\regret}_T(h^*)$.
\QED
\end{proof}

\begin{proof}{Proof of Proposition \ref{prop: monotonic h}.}
Recall that 
\[
\phi_1(h;d,t)= C_1 \rrank^2 d\log^3 (d) h^{-2}\log t  \quad \text{and} \quad 
\phi_2(h;d,t) = 2\min\{8\sigmaepsilon\sqrt{2tg(h)\log(t)}, ht\}\,.
\]
For ease of notation, we let $\phi_1(h;d,t)=c_1 \log t/h^2$ and $\phi_2(h;d,t)=\min\{c_2\sqrt{t g(h)\log t},ht\}$, where $c_1$ and $c_2$ are constants that do not depend on $h$ and $t$. We prove that for any $T_1< T_2$, we have $\tilde{h}(T_1)\geq \tilde{h}(T_2)$, where $\tilde{h}(\cdot)$ is defined in Lemma \ref{prop:optimal_h}.

\underline{Scenario 1}: $\phi_1(h;d,T_1)<\phi_2(h;d,T_1)$ for any $h\in [\lowh(T_1), 2b^*]$. In this scenario, we  prove that $\phi_1(h;d,T_2)<\phi_2(h;d,T_2)$ for any $h\in [\lowh(T_1), 2b^*]$.

Fix any $h\in [\lowh(T_1), 2b^*]$. The condition $\phi_1(h;d,T_1)<\phi_2(h;d,T_1)$ implies that $c_1\log T_1/h^2< c_2\sqrt{T_1 g(h)\log T_1}$ and $c_1\log T_1/h^2< h T_1$. Thus, we have
\[
h^2\sqrt{g(h)}>\frac{c_1}{c_2}\sqrt{\frac{\log T_1}{T_1}}>\frac{c_1}{c_2}\sqrt{\frac{\log T_2}{T_2}}\,,
\]
where the last inequality holds because $T\geq 10$ and $T_1<T_2$. Similarly, we have
\[
h^3\geq c_1\frac{\log T_1}{T_1}>c_1\frac{\log T_2}{T_2}.
\]
Therefore, we have proven that $\phi_1(h;d,T_2)<\phi_2(h;d,T_2)$ for any $h\in [\lowh(T_1), 2b^*]$. According to the selection rule specified in Lemma \ref{prop:optimal_h}, we have $\tilde{h}(T_2)\leq \lowh(T_1)$.

\underline{Scenario 2}:  $\phi_1(h;d,T_1)=\phi_2(h;d,T_1)$ for  $h=\tilde{h}(T_1)\in [\lowh(T_1), 2b^*]$. In this scenario, we will show that for any $h>\tilde{h}(T_1)$, it holds that $\phi_1(h;d, T_2)<\phi_2(h;d, T_2)$. First, since $\phi_1(h;d,T)$ monotonically decreases in $h$ and $\phi_2(h;d,T)$ monotonically increases in $h$, we have $\phi_1(h;d, T_1)<\phi_2(h;d, T_1)$ for any $h>\tilde{h}(T_1)$.
Then, fix any $h\in [\tilde{h}(T_1), 2b^*]$. The condition $\phi_1(h;d,T_1)<\phi_2(h;d,T_1)$ implies that $c_1\log T_1/h^2< c_2\sqrt{T_1 g(h)\log T_1}$ and $c_1\log T_1/h^2< h T_1$. Similar to the proof for Scenario 1, we can show that $\phi_1(h;d,T_2)<\phi_2(h;d,T_2)$ for any $h\in [\tilde{h}(T_1), 2b^*]$. According to the selection rule specified in Lemma \ref{prop:optimal_h}, we have $\tilde{h}(T_2)\leq \tilde{h}(T_1)$.

\underline{Scenario 3}: $\phi_1(h;d,T_1)>\phi_2(h;d,T_1)$ for  $h\in [\lowh(T_1), 2b^*]$. In this case, $\tilde{h}(T_1)=2b^*$. Since $2b^*$ is the largest possible value of $\tilde{h}(T_2)$, we must have $\tilde{h}(T_2)\leq 2b^*=\tilde{h}(T_1)$.
\QED    
\end{proof}

\subsection{Proofs for Section \ref{subsec:bound-compare}}
\label{append:proofs-for-bound-comparison}
\begin{proof}{Proof of Proposition \ref{prop: compare i.i.d. bandits}.}
To show \eqref{prop: compare i.i.d. bandits1}, we split into two cases, (i) $T\leq d^2$ and (ii) $T\geq d^2$.

Case (i): When $T\leq d^2$, $\sqrt{T}\leq d$, hence $T\leq d\sqrt{T}$. Our algorithm is no worse than doing random sampling and the regret incurred is thus at most $O(T)$ and it is smaller than $\tilde{O}(d\sqrt{T})$ when $T\leq d^2$.

Case (ii): When $T\geq d^2$, we take $h=\rrank T^{-1/4}$ to show that our bound is no worse than $\tilde{O}(d\sqrt{T})$. Under such a choice of $h$, 
\[
\regret_T(h)=\tilde{O}(d\sqrt{T}), 
\]
since $\phi_1=\tilde{O}(d\sqrt{T})$
and $\phi_2\leq\tilde{O}(d\sqrt{T})$ due to $g(h)\leq d^2$. Note that this upper bound is rather conservative since the structure of $g(h)$ is unknown.

To show \eqref{prop: compare i.i.d. bandits2}, we ignore the order of $\rrank$ (since $\rrank \ll d$) and take $h= T^{-\jmath}$ where $\jmath$ will be specified later. Then $\phi_1$ is of order $\tilde{O}( d T^{2\jmath})$, and $\phi_2$ is bounded by  $\min\{\tilde{O}(\sqrt{Tg (T^{-\jmath})}), O(T^{1-\jmath})\}$. Thus, the regret is at most of order $\max\{ dT^{2\jmath}, T^{1-\jmath}\}$. Note that $T=d^\beta$ where $\beta\geq 1$ since $T\geq d$. Then, our regret is at most of order $\max\{d^{2\jmath\beta+1}, d^{(1-\jmath)\beta}\}$. Note that $\jmath^*= \text{argmin}_{\jmath} \max\{d^{2\jmath\beta+1}, d^{(1-\jmath)\beta}\} = \frac{\beta-1}{3\beta}$ optimizes the regret. Further, $h$ still satisfies the minimum filtering resolution requirement, i.e. $h = T^{-\jmath}\geq O(\sqrt{d/T})$, under the optimized $\jmath$, as long as $\beta\geq 1$. Thus, if we choose $\jmath = \frac{\beta-1}{3\beta}$ and the corresponding $h$, we have 
\[
\regret(h)\leq \tilde{O}(d^{\frac{2}{3} \beta+\frac{1}{3}}).
\]
When $\beta\leq 4$, it satisfies that 
\[
\regret(h)\leq \tilde{O}(d^{\frac{2}{3} \beta+\frac{1}{3}})=\tilde{O}(d^{\frac{1}{3}} T^{\frac{2}{3}})\leq \tilde{O}(d^{1+\frac{\beta}{2}})=\tilde{O}(d\sqrt{T}),
\]
where the order of $d\sqrt{T}$ is achieved if and only if $\beta=4$. When $\beta<4$, the regret order of our algorithm is $o(d\sqrt{T})$. Finally, we note that we analyze under $\beta\geq 2$ so that $d\sqrt{T}\leq T$. 
\QED
\end{proof}

\begin{proof}{Proof of Theorem \ref{theorem: g improve}.}
From Theorem \ref{thm:regret-gapind}, the regret bound with $h$ can be bounded by 
\[
\begin{aligned}
\regret_T(h)&\leq  C_1 \rrank^2 d\log^3 (d) h^{-2}\log T  + \min\{8\sqrt{2Tg(h)\log(T)}, hT\}  +C_2\\
&= C_1 \rrank^2 d\log^3 (d) h^{-2}\log T  + \min\{8\sqrt{2Td^2 h^{\gparam}\log(T)}, hT\}  +C_2.
\end{aligned}
\]
Let $\tilde{\alpha}=2\alpha \rrank^2$. By ignoring the logarithmic term, we select $h$ to minimize 
\[
\min_{h\geq \sqrt{\tilde{\alpha}\frac{d}{T}}}\  C_1 \rrank^2 d h^{-2} + \min\{8\sqrt{2Td^2 h^{\gparam}}, hT\}.
\]
We discuss two scenarios:
\begin{enumerate}
\item 
When $h^{1-\frac{\gparam}{2}}\geq 8\sqrt{\frac{2d^2}{T}}$, it holds that $8\sqrt{2Td^2 h^{\gparam}}\leq hT$, and we select $h$ such that 
\[
\min_{h\geq \sqrt{\tilde{\alpha}\frac{d}{T}}}\  d(C_1 \rrank^2  h^{-2} + 8\sqrt{2T h^{\gparam}})\,.
\]
The optimal solution without the constraint is $\tilde{h}=\left(\frac{C_1 \rrank^2}{2\gparam\sqrt{2T}}\right)^{\frac{2}{\gparam +4}}$. Then, when $\tilde{h}\geq \sqrt{\tilde{\alpha}\frac{d}{T}}$, i.e., 
\[
T\geq \threc d^{\frac{\gparam+4}{\gparam+2}}
\]
where $\threc=(\frac{2\sqrt{2}\gparam \tilde{\alpha}^{\frac{\gparam+4}{4}}}{C_1\rrank^2})^{\frac{4}{\gparam+2}}$, 
the regret is of order $\tilde{O}(d (\rrank^{\gparam} T)^{\frac{2}{\gparam+4}})$. Otherwise, when $T\leq \threc d^{\frac{4+\gparam}{2+\gparam}}$, then $h^*=\sqrt{\tilde{\alpha}\frac{d}{T}}$. In this case, the regret is of order $O(T)$.

\item
When $h^{1-\frac{\gparam}{2}}\leq 8\sqrt{\frac{2d^2}{T}}$, it holds that $hT\leq 8\sqrt{2Td^2 h^{\gparam}}$. We select $h$ to minimize
\[
\min_{h\geq \sqrt{\tilde{\alpha}\frac{d}{T}}}\  C_1 \rrank^2 d h^{-2} + hT.
\]
Without the constraint of $h$, the optimal $h^*$ is $\left(\frac{2C_1 \rrank^2 d}{T}\right)^{1/3}$, which is higher than the order of $\sqrt{\frac{d}{T}}$ under the condition that $T\geq d$. Therefore, the regret is $\tilde{O}(d^{\frac{1}{3}}  T^{\frac{2}{3}})$. 
\end{enumerate}
Therefore, combining two scenarios, when $T\geq \threc d^{\frac{\gparam+4}{\gparam+2}}$, the regret is of order
\[
\min\left\{\tilde{O}(d T^{\frac{2}{\gparam+4}}), \tilde{O}(d^{\frac{1}{3}}T^{\frac{2}{3}})\right\}.
\]
When $T\leq \threc d^{\frac{\gparam+4}{\gparam+2}}$, the regret is of order $\tilde{O}(d^{\frac{1}{3}}T^{\frac{2}{3}})$.

To simplify two scenarios, we conclude that,

-- When $T\geq \threc d^{\frac{\gparam+4}{\gparam+1}}$, the regret is of order $\tilde{O}(d T^{\frac{2}{\gparam+4}})$;

-- When $T\leq \threc d^{\frac{\gparam+4}{\gparam+1}}$, the regret is of order $\tilde{O}(d^{\frac{1}{3}}T^{\frac{2}{3}})$.

    \QED
\end{proof}

\section{Regret Analysis for \textsf{ss-LRB}}
\label{appendix: optimal submatrix}
\begin{proof}{Proof of Theorem \ref{thm: subsampling regret}.}
From Theorem \ref{theorem: g improve}, 

-- When $T\geq \threc d^{\frac{\gparam+4}{\gparam+1}}$, the regret omitting the subsampling cost is of order
$\tilde{O}( d T^{\frac{2}{\gparam+4}})\,.$
Without loss of generality, we ignore the constant term $c_s$ in the regret order. Then,  we choose $\eta$ to optimize
\[
\min_{0\leq \eta\leq 1}\ (1-\eta)^{\gamma_1}\eta^{\gamma_2} d^\iota T +\eta dT^{\frac{2}{\gparam+4}}.
\]
The first term decreases with $\eta$ while the second term increases with $\eta$.
Hence, when $d^\iota T\leq  dT^{\frac{2}{\gparam+4}}$, the optimal $\eta\leq \frac{1}{2}$ and the regret order is at least $d^\iota T$. Thus, we take $\eta$ such that $\eta d T^{\frac{2}{\gparam+4}}=d^\iota T$, which implies that $\eta=d^{\iota-1} T^{\frac{\gparam+2}{\gparam+4}}$ and the regret is of order $\tilde{O}(d^{\iota} T)$.

On the other hand, when $d^\iota T\geq  dT^{\frac{2}{\gparam+4}}$, taking $\eta$ such that $(1-\eta)^{\gamma_1}\eta^{\gamma_2} d^\iota T = d T^{\frac{2}{\gparam+4}}$ gives the regret order $\tilde{O}(dT^{\frac{2}{\gparam+4}})$.

Hence, when $T\geq \threc d^{\frac{\gparam+4}{\gparam+1}}$, the regret is of order
\[
 \tilde{O}(\min\{d^{\iota}T, dT^{\frac{2}{\gparam+4}}\}).
\]

-- When $T\leq \threc d^{\frac{\gparam+4}{\gparam+1}}$, the regret is of order
\[
\min_{0\leq \eta\leq 1}\ (1-\eta)^{\gamma_1}\eta^{\gamma_2} d^\iota T +(\eta d)^{\frac{1}{3}}T^{\frac{2}{3}}\,.
\]
To get the optimal value, we solve the optimization problem
\begin{equation}\label{eq: eta choose 2}
\min_{0\leq \eta\leq 1}\ (1-\eta)^{\gamma_1}\eta^{\gamma_2} d^\iota T +(\eta d)^{\frac{1}{3}} T^{\frac{2}{3}}\,.
\end{equation}

\begin{enumerate}
    \item When $d^\iota T \geq d^{\frac{1}{3}}T^{\frac{2}{3}}$, the optimal $\eta\geq 1/2$. We choose $\eta$ such that
    \[
    (1-\eta)^{\gamma_1}\eta^{\gamma_2} d^\iota T=d^{\frac{1}{3}} T^{\frac{2}{3}}.
    \]
    \item When $d^\iota T \leq d^{\frac{1}{3}}T^{\frac{2}{3}}$, the optimal $\eta\leq 1/2$. We choose $\eta$ such that
    \[
    (\eta d)^{\frac{1}{3}} T^{\frac{2}{3}}=d^{\iota} T.
    \]
\end{enumerate}

Therefore, we can select $\eta$ to achieve the regret order of $\min\{\tilde{O}(d^\iota T), \tilde{O}(d^{\frac{1}{3}} T^{\frac{2}{3}})\}$.

To summarize, when $T\leq \threc d^{\frac{\gparam+4}{\gparam+1}}$, the regret order is 
\[
\min \left\{ \tilde{O}(d^\iota T), \tilde{O}(d^{\frac{1}{3}} T^{\frac{2}{3}})\right\}.
\]

\QED
\end{proof}

\section{Empirical Evidence for Lemma~\ref{prop:optimal_h}}
\label{app:phi1andphi2}

We have mentioned in Section \ref{sec:phi1andphi2} that even though Lemma \ref{prop:optimal_h} analyzes the upper bounds on the regret from the two phases, the actual regret quantities from the two phases behave similarly. The regrets from the two phases are denoted as part (1) and part (2) in Appendix~\ref{SS. Regret Analysis}. We conduct the synthetic experiment with parameters $d_r = d_c = 100 = d$, $\rrank{} = 3$, $T = 2000$ (experiment parameter set b) described in Section \ref{sec:synth}) under 50 runs. From Figure \ref{fig:final_vs_h}, we can see that \textsf{LRB}(225, 1) with number of forced samples $f=225$ and filtering resolution $h=1$ gives the lowest regret 480. If we single out the curve with $f=225$ and draw part (1) and part (2) of the regret separately, we end up with the two curves in Figure \ref{fig:225_part_1_2_vs_h}. In this figure, we can see that the two curves intersect at around $h =1.4$. When $h=1.4$, the total regret is 650. It is within factor of two of the lowest regret 480, which is what Lemma \ref{prop:optimal_h} Case 2 entails.

\begin{figure}[ht]
\begin{subfigure}[b]{0.47\textwidth}
\centering
 \includegraphics[width=\textwidth]{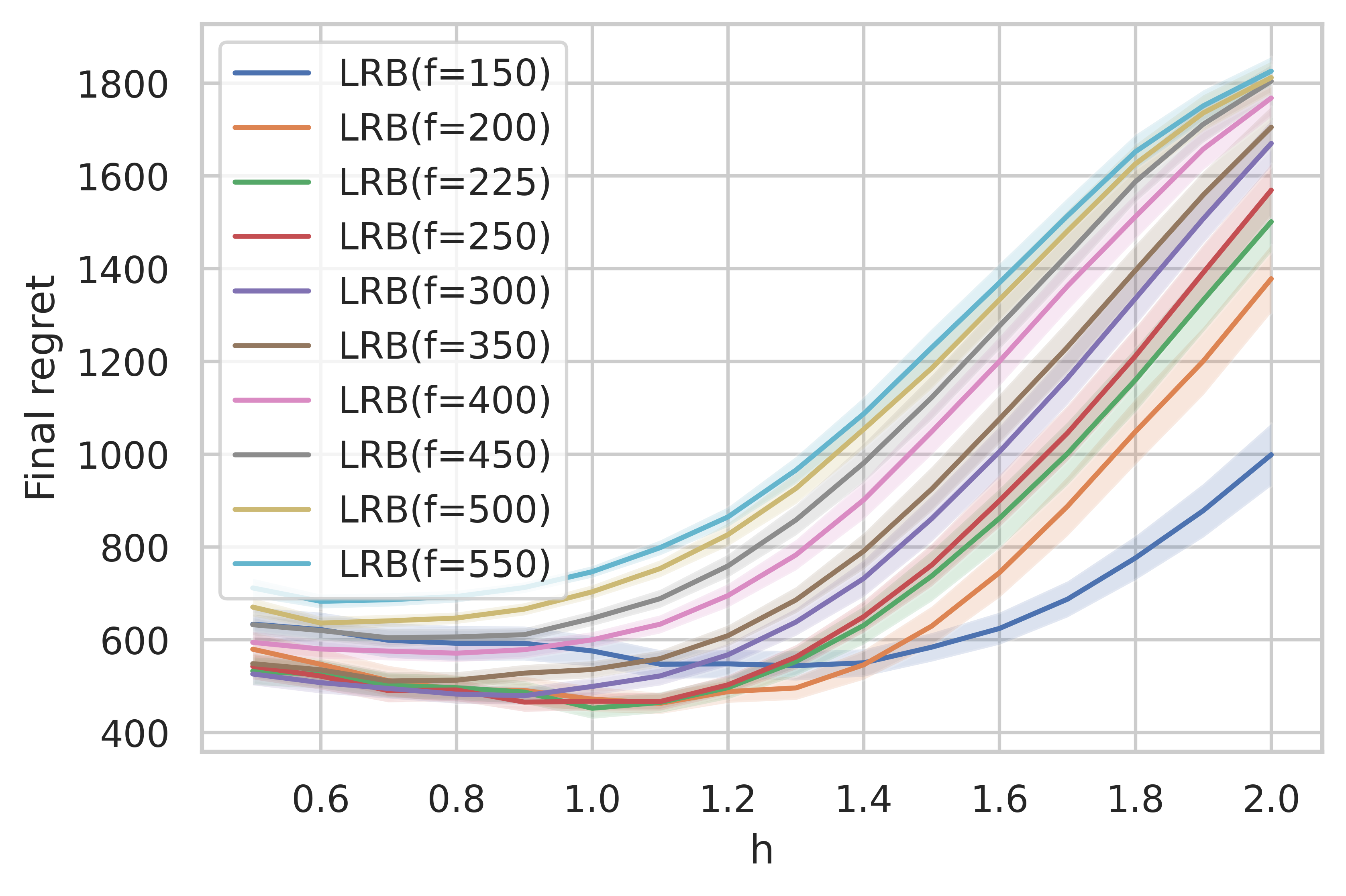}
\caption{}
 \label{fig:final_vs_h}
\end{subfigure}     
\hfill
\begin{subfigure}[b]{0.47\textwidth}
\centering
 \includegraphics[width=\textwidth]{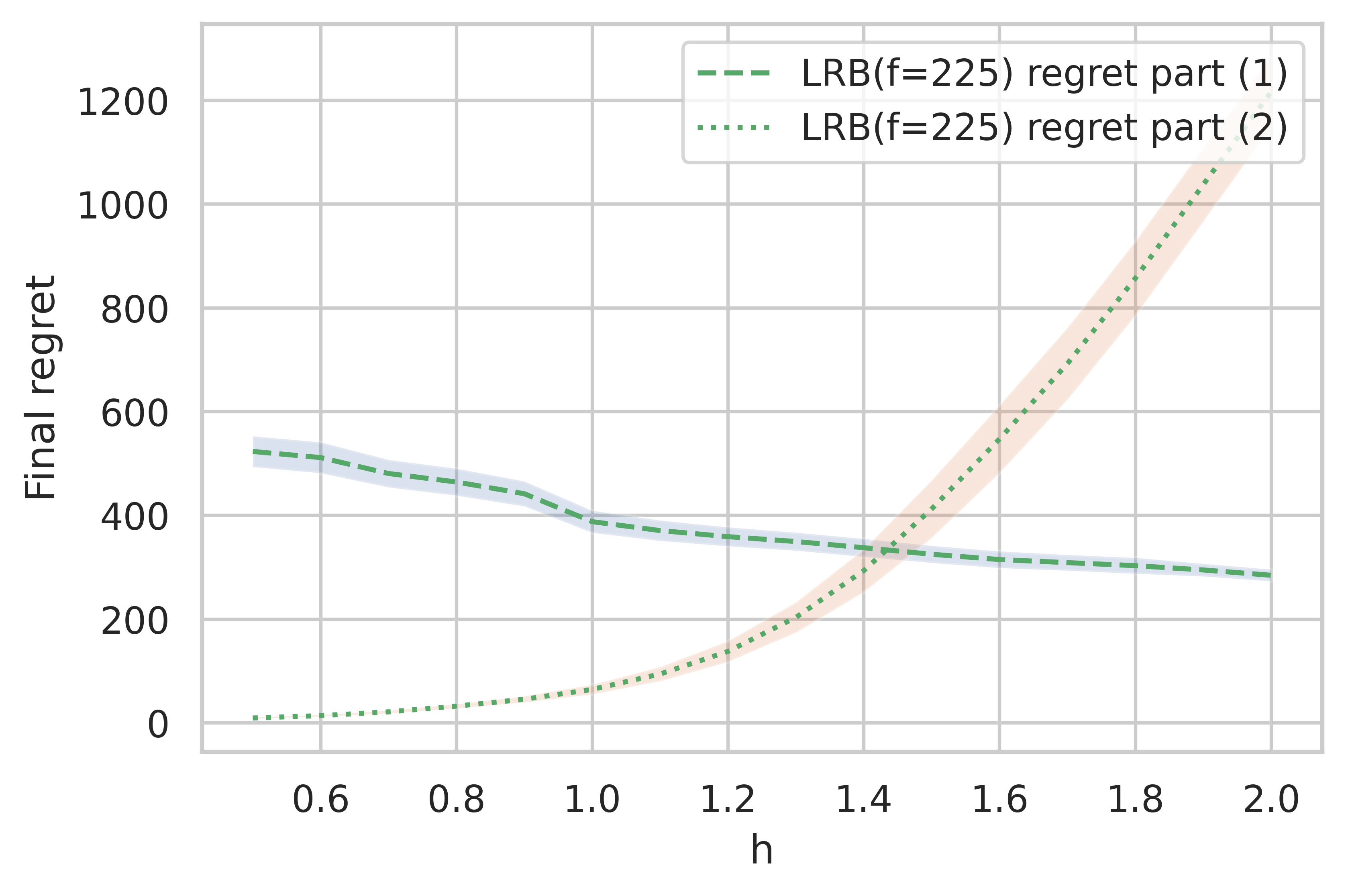}
\caption{}
\label{fig:225_part_1_2_vs_h}
\end{subfigure}
\caption{The total regret under the $h$ where the regrets from the two parts intersect is within factor of two of the minimum regret. Panel \ref{fig:final_vs_h} shows regrets of \textsf{LRB} under different combinations of number of forced samples $f$ and filtering resolution $h$; Panel \ref{fig:225_part_1_2_vs_h} shows regret part (1) and regret part (2) of \textsf{LRB}($f$=225) at different $h$.}
        \label{fig:phi1andphi2}
\end{figure}

\section{$g(h)$ and $\psi$ under Uniform Distributions}
\label{app:near-opt}

\begin{example}[Near-optimal function under uniform distribution]
\label{ex:near-opt}
If entries of $\bB^*$ are i.i.d. samples from uniform distribution over $[0,1]$, conditioning on the maximum value, the number of entries that are within $h$ of the maximum follows a Binomial distribution. We can therefore obtain the expected value of the near-optimal function by an integration which yields 
\[
\mathbb{E}[g(h)] = 1 + hd_rd_c - h^{d_rd_c}\,.
\]
\end{example}

We detail the derivation below.

Conditioned on the maximum value $z$, the number of entries that are within $h$ of the maximum follows a Binomial distribution with $h/z$ probability of success and $d_rd_c$ trials if $h<z$. If $h > z$, then all the entries will be within $h$ of the maximum. Since the probability that the maximum value is no bigger than $z$ is $z^{d_rd_c}$, then the probability density function for $z$ is $d_rd_c z^{d_rd_c-1}$. The expectation is calculated the following way:

\begin{align*}
  \mathbb{E}[g(h)] = &\int_{0}^{h} d_rd_c \cdot d_rd_c \cdot z^{d_rd_c -1}dz + \int_{h}^1 [1+ (n-1)\frac{h}{z}]\cdot d_rd_c \cdot z^{d_rd_c -1}dz \\
  =& 1+ hd_rd_c - h^{d_rd_c}.
\end{align*}

We provide closed form of the subsampling cost function when the arm reward means are independent and identically generated from common distributions such as exponential, uniform and gaussian. 

\begin{example}[Exponential distribution]\label{ex: exponential} 
For a matrix whose entry is sampled from an exponential distribution with mean 1, the median of the maximum value is $\log(d_r d_c)$. Therefore, the subsampling cost is
\[
\psi(\eta d_r, \eta d_c; d_r, d_c)= \log\left(\frac{d_r d_c}{\eta^2 d_r d_c}\right) = -2 \log(\eta).
\]
\end{example}

\begin{example}[uniform distribution]\label{ex:uniform-subsampling-cost}
For a matrix whose entry is sampled uniformly from [0,1], the expectation of the maximum value is $\frac{d_rd_c}{1+d_r d_c}$. Therefore, the cost of subsampling $\eta$-proportion of rows and columns, where $\eta$ is between (0,1], is
\[
\begin{aligned}
&\psi(\eta d_r, \eta d_c; d_r, d_c)=  \frac{d_r d_c}{1+ d_r d_c} - \frac{\eta^2 d_r d_c}{1+ \eta^2 d_r d_c} \\
\approx &(\eta^2 d_r d_c)^{-1} - (d_r d_c)^{-1} = (\eta^2 d_r d_c)^{-1}(1-\eta^2 ).
\end{aligned}
\]
\end{example}
Note that the approximation takes place when the submatrix is very close to the whole matrix.

\begin{example}[Gaussian distribution]
For a matrix whose entry is sampled from a standard Gaussian distribution, the expectation of the maximum value is $\log(d_r d_c)$. Therefore, the subsampling cost is
\[
\psi(\eta d_r, \eta d_c; d_r, d_c)=\sqrt{\log(d_r d_c)} - \sqrt{\log(\eta^2 d_r d_c)}.
\]
\end{example}

\section{Empirical $g(h)$ and Empirical $\psi$ for Low-Rank Matrices}
\label{sec:ghandpsi}
We generate the low-rank matrices $\bB =\bU\bV^{\top}$ where the entries of $\bU, \bV\in\mathbb{R}^{d\times \rrank}$ follow three different distributions: (i)\, standard Gaussian: $\mathcal{N}(0,1)$, (ii)\,Uniform: $U[0,1]$, (iii)\,Exponential: $\text{exp}(1)$. Here, we take $d=100$ and $\rrank = 3$.

\textbf{Empirical Near-Optimal Function $g(h)$.} We plot the square root of the near-optimal function $g(h)$ and fit an upper bound for it to show Definition~\ref{def:gh} holds in a log-log plot in Figure \ref{fig:g}. To illustrate, we take entries of $\bU, \bV$ following standard Gaussian in Figure \ref{fig:g_gaussian} as an example. When $d=100$, we plot $\log h$ versus $\log\widehat{\mathbb{E}}\sqrt{g(h)}$ with the blue solid line. The dashed red line is $\zeta/2\cdot \log(h) + \log d$ where $\zeta=3.4025$, which shows that $\log\widehat{\mathbb{E}}[\sqrt{g(h)}]\leq \zeta/2\cdot \log(h) + \log d$, i.e., $\widehat{\mathbb{E}}[\sqrt{g(h)}]\leq dh^{\zeta/2}$, which shows that Definition \ref{def:gh} holds. Likewise, when $m =70$, we take a submatrix of size $m\times m$ and then plot $\log h$ versus $\log\widehat{\mathbb{E}}_{|\mathcal{I}_r| = |\mathcal{I}_c|=m}\sqrt{g(h;\mathcal{I}_r, \mathcal{I}_c)}$ with the yellow solid line. The yellow dashed line is $\zeta/2\cdot \log(h) + \log m$ where $\zeta = 3.1906$, which shows that $\log\sqrt{g(h)}\leq \zeta/2\cdot \log(h) + \log m$, i.e., $\widehat{\mathbb{E}}_{|\mathcal{I}_r| = |\mathcal{I}_c|=m}\sqrt{g(h;\mathcal{I}_r, \mathcal{I}_c)}\leq mh^{\zeta/2}$. The lines are averaged over 100 trials. We summarize different $\zeta$ under different distributions for $d=100$ and different submatrix sizes $m = 30, 50, 70$ in Table \ref{tb:zeta}.

\begin{figure}[h] 
    \centering 
    
    \begin{subfigure}[b]{0.28\textwidth} 
        \includegraphics[width=\textwidth]{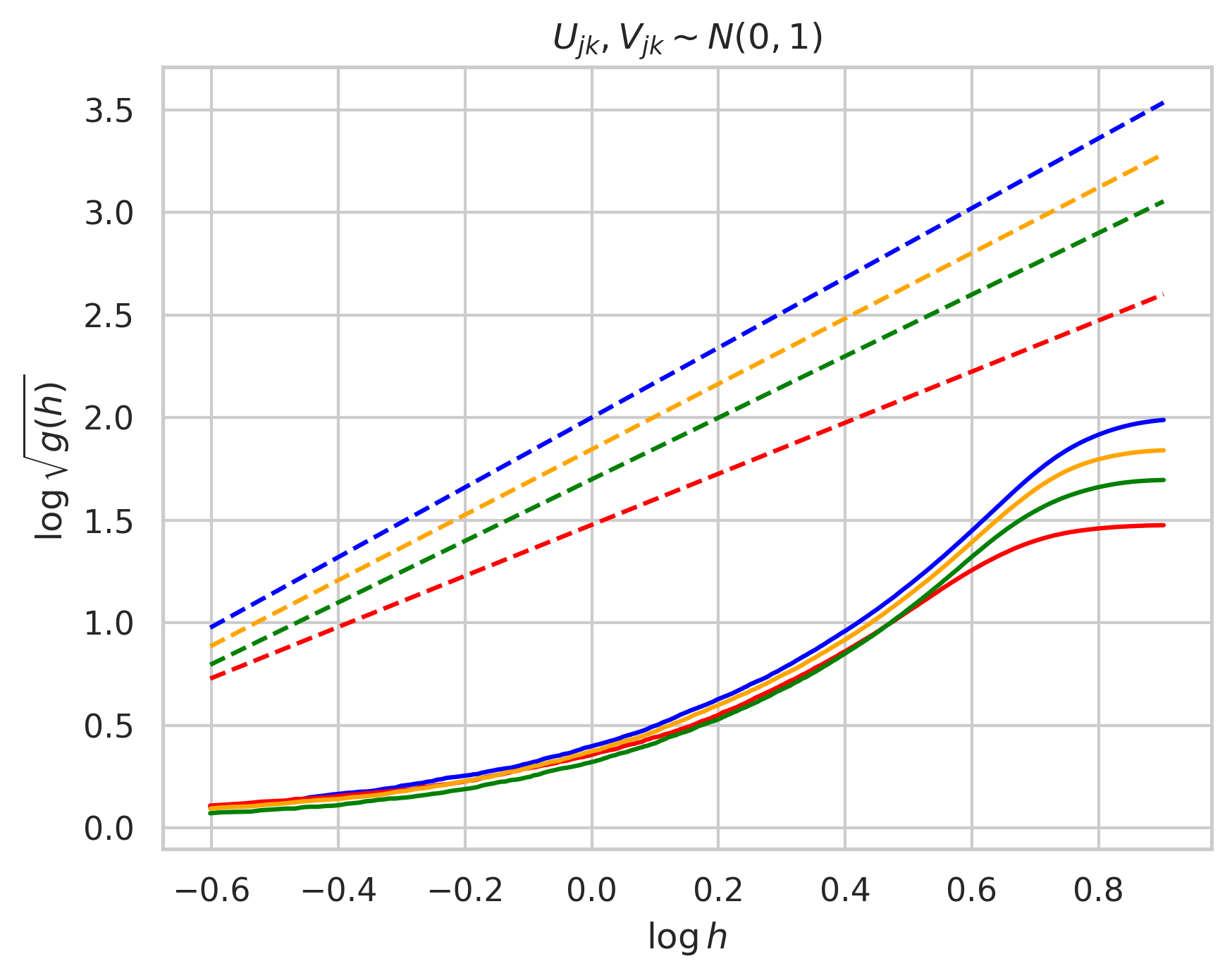}
        \caption{$\bU_{jk}, \bV_{jk}$ follow $\mathcal{N}(0,1)$}
        \label{fig:g_gaussian}
    \end{subfigure}
    \hfill 
    \begin{subfigure}[b]{0.28\textwidth}
        \includegraphics[width=\textwidth]{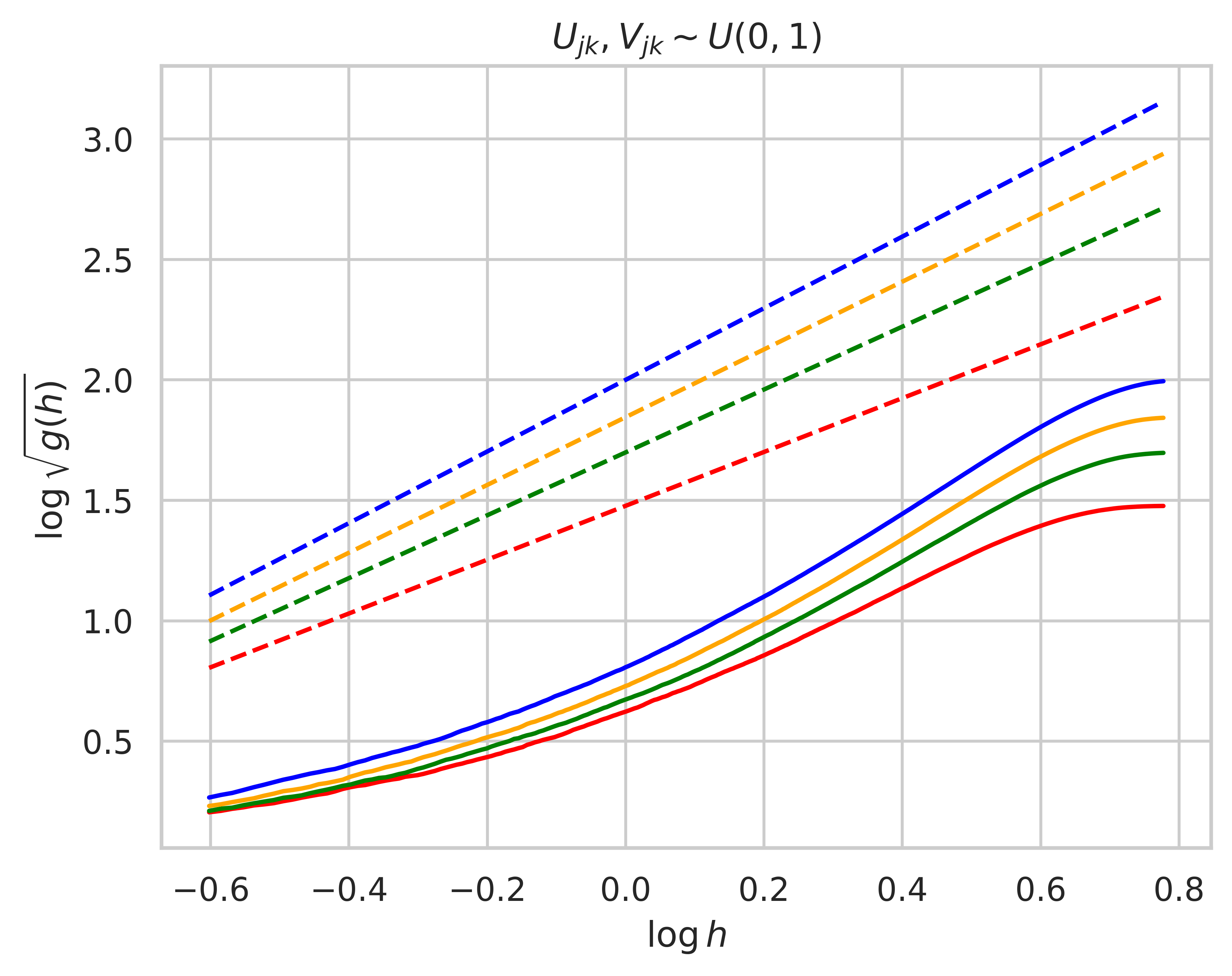}
        \caption{$\bU_{jk}, \bV_{jk}$ follow $U[0,1]$}
        \label{fig:g_uniform}
    \end{subfigure}
    \hfill 
    \begin{subfigure}[b]{0.4\textwidth}
        \includegraphics[width=\textwidth]{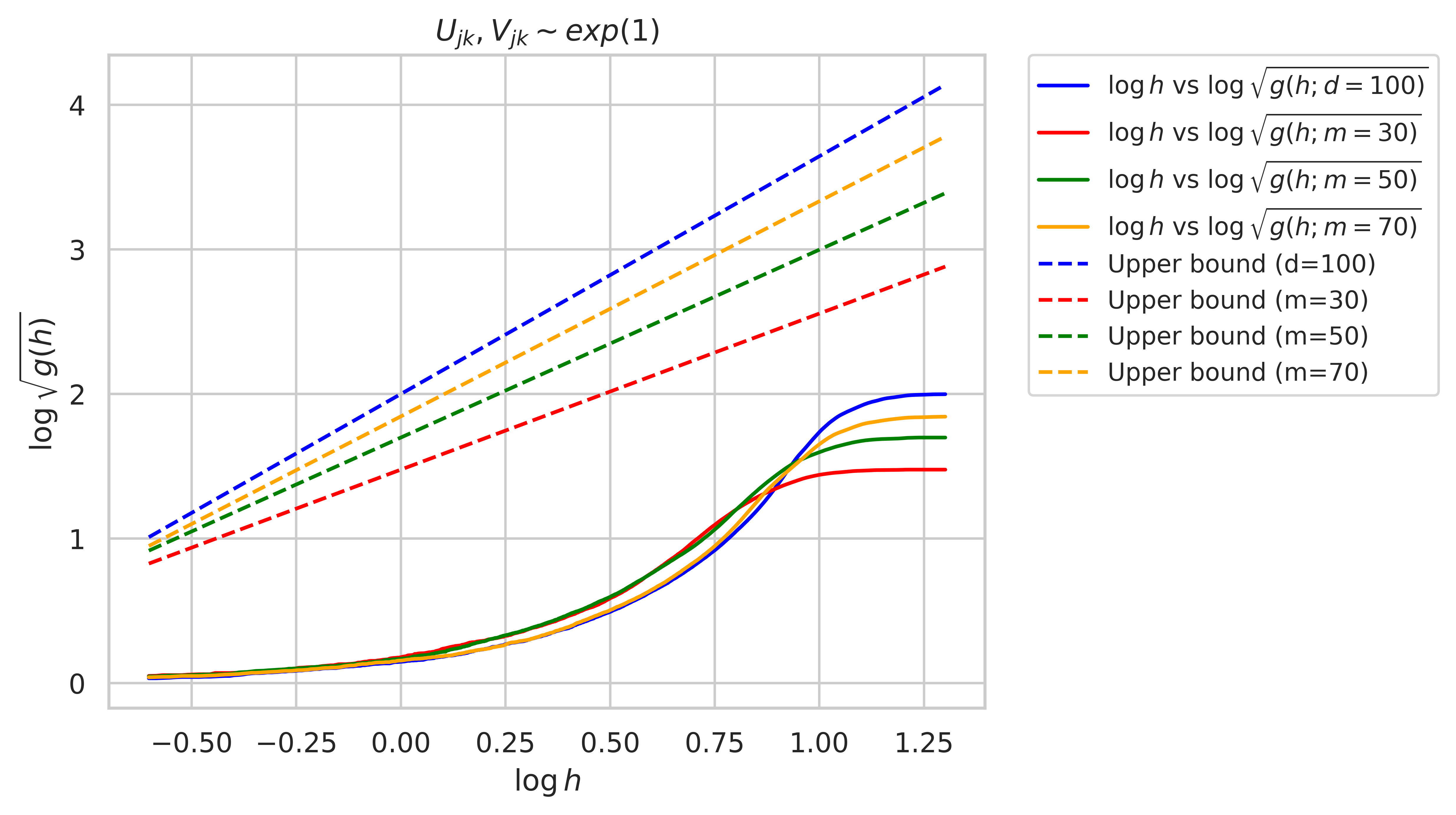}
        \caption{$\bU_{jk}, \bV_{jk}$ follow $\text{exp}(0,1)$}
        \label{fig:g_exp}
    \end{subfigure}

    \caption{Log-log plots of $\sqrt{g(h)}$ against $h$ in solid lines, and the upper bound $dh^{\zeta/2}$ described in Definition~\ref{def:gh} in dashed lines, where values of $\zeta$ can be found in Table \ref{tb:zeta}. We consider a matrix of size $d_r\times d_c$ and rank $\rrank$, where $d=100, \rrank=3.$} 
    \label{fig:g} 
\end{figure}

\begin{table}[ht]
\centering
\begin{tabular}{|c|c|c|c|c|}
\hline
 & $d=100$ & $m=70$ & $m=50$ & $m=30$ \\ \hline
$U_{jk}, V_{jk}\sim \mathcal{N}(0,1)$ & 3.4025 & 3.1906  & 3.0020 & 2.4897  \\ \hline
$U_{jk}, V_{jk}\sim U[0,1]$ &  2.9731 & 2.8125  & 2.6097 & 2.2344\\ \hline
$U_{jk}, V_{jk}\sim \text{exp}(1)$ & 3.2887 & 2.9764 & 2.5981 & 2.1589 \\ \hline
\end{tabular}
\caption{$\zeta$ in Definition \ref{def:gh} for different families of low-rank matrices $\bB = \bU\bV^{\top}$ where $\bU, \bV\in \mathbb{R}^{100\times 3}$.}
\label{tb:zeta}
\end{table}

\textbf{Empirical Subsampling Cost $\psi$.} We plot the subsampling cost function $\psi$ against the subsampling ratio $\eta$ and matrix dimension $d$, and fit a function to show that Assumption~\ref{assumption: subsampling} holds in Figure~\ref{fig:psi}. To illustrate, we take entries of $\bU, \bV$ following standard Gaussian in Figure \ref{fig:psi_gaussian} as an example. The subsampling cost $\psi$ is plotted as a function of subsampling ratio $\eta$ and matrix dimension $d$ in the blue scatter plot. We then regress $\log \psi$ on $\log(1-\eta)$, $\log\eta$ and $\log d$ to get estimates for $c_s$, $\gamma_1$, $\gamma_2$ and $\iota$. The $R^2$ of the regression is very close to 1. We then plot $c_s (1-\eta)^{\gamma_1} \eta^{\gamma_2} d^\iota$ with the red scatter plot. We can see that the blue and red scatter plots align fairly well. The dots are average of 10000 trials. We summarize different $c_s$, $\gamma_1$, $\gamma_2$ and $\iota$ under different distributions in Table~\ref{tb:psi_param}.

\begin{figure}[h] 
    \centering 
    
    \begin{subfigure}[b]{0.3\textwidth} 
        \includegraphics[width=\textwidth]{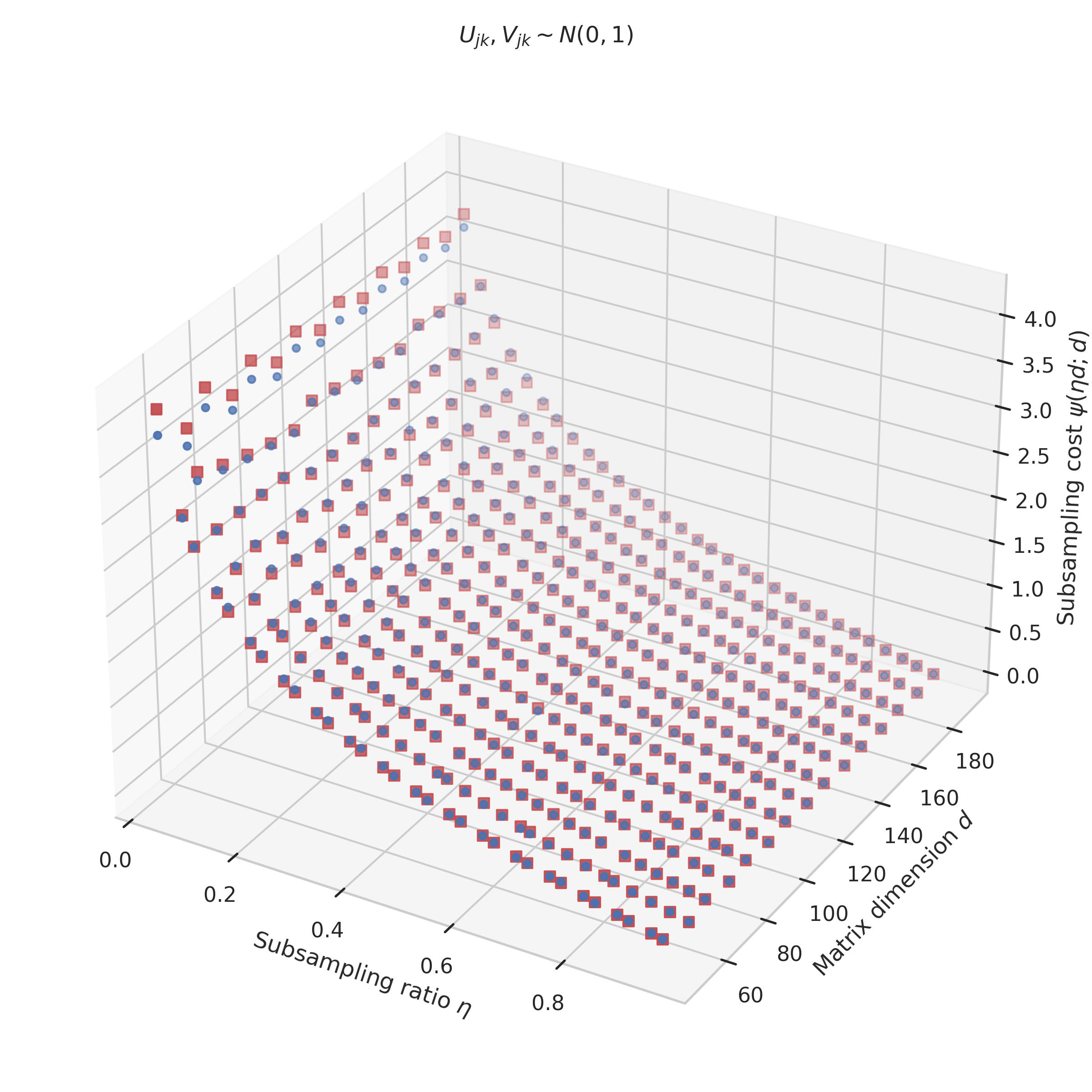}
        \caption{$\bU_{jk}, \bV_{jk}$ follow $\mathcal{N}(0,1)$}
        \label{fig:psi_gaussian}
    \end{subfigure}
    \hfill 
    \begin{subfigure}[b]{0.3\textwidth}
    \includegraphics[width=\textwidth]{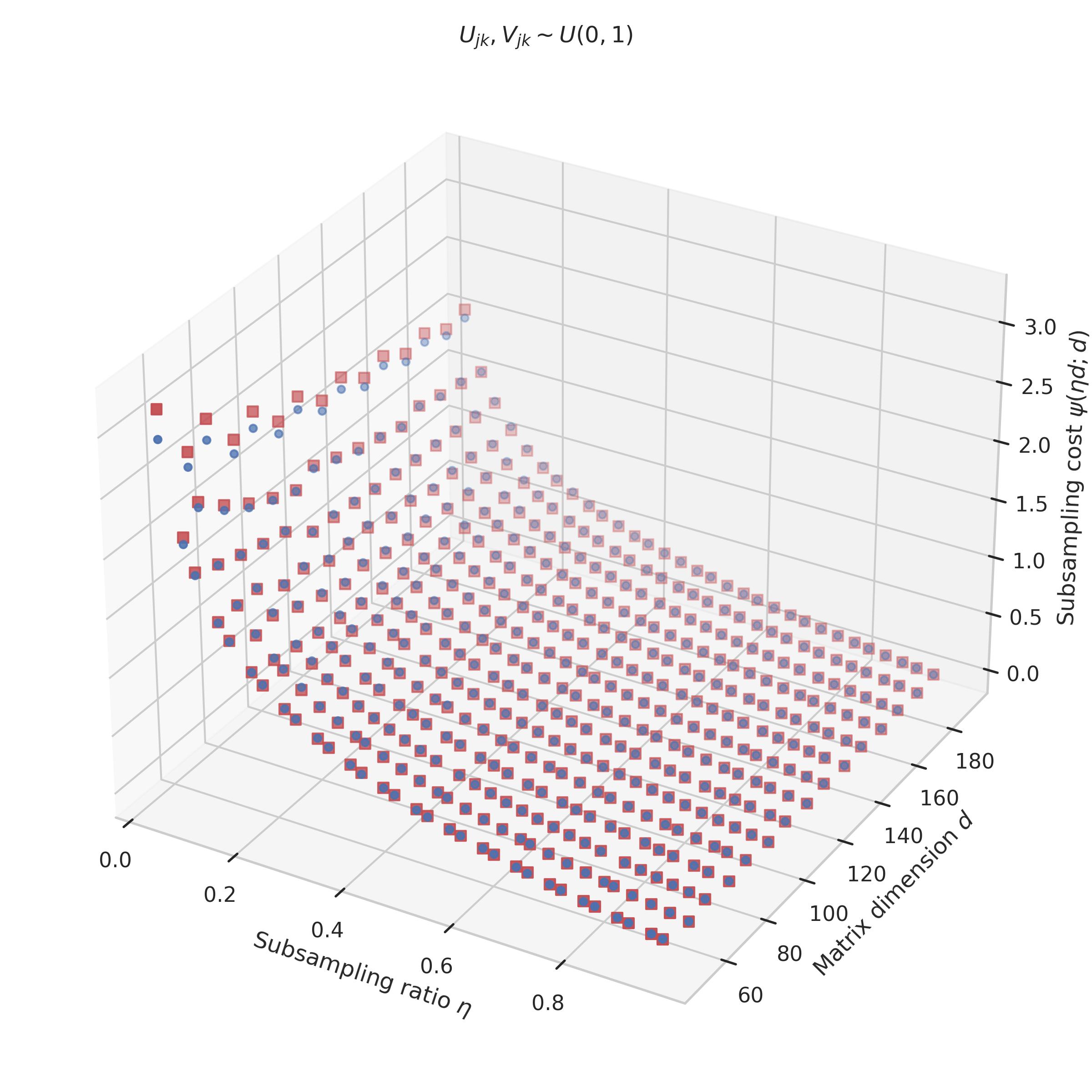}
    \caption{$\bU_{jk}, \bV_{jk}$ follow $U[0,1]$}
    \label{fig:psi_uniform}
    \end{subfigure}
    \hfill 
    \begin{subfigure}[b]{0.37\textwidth}
\includegraphics[width=\textwidth]{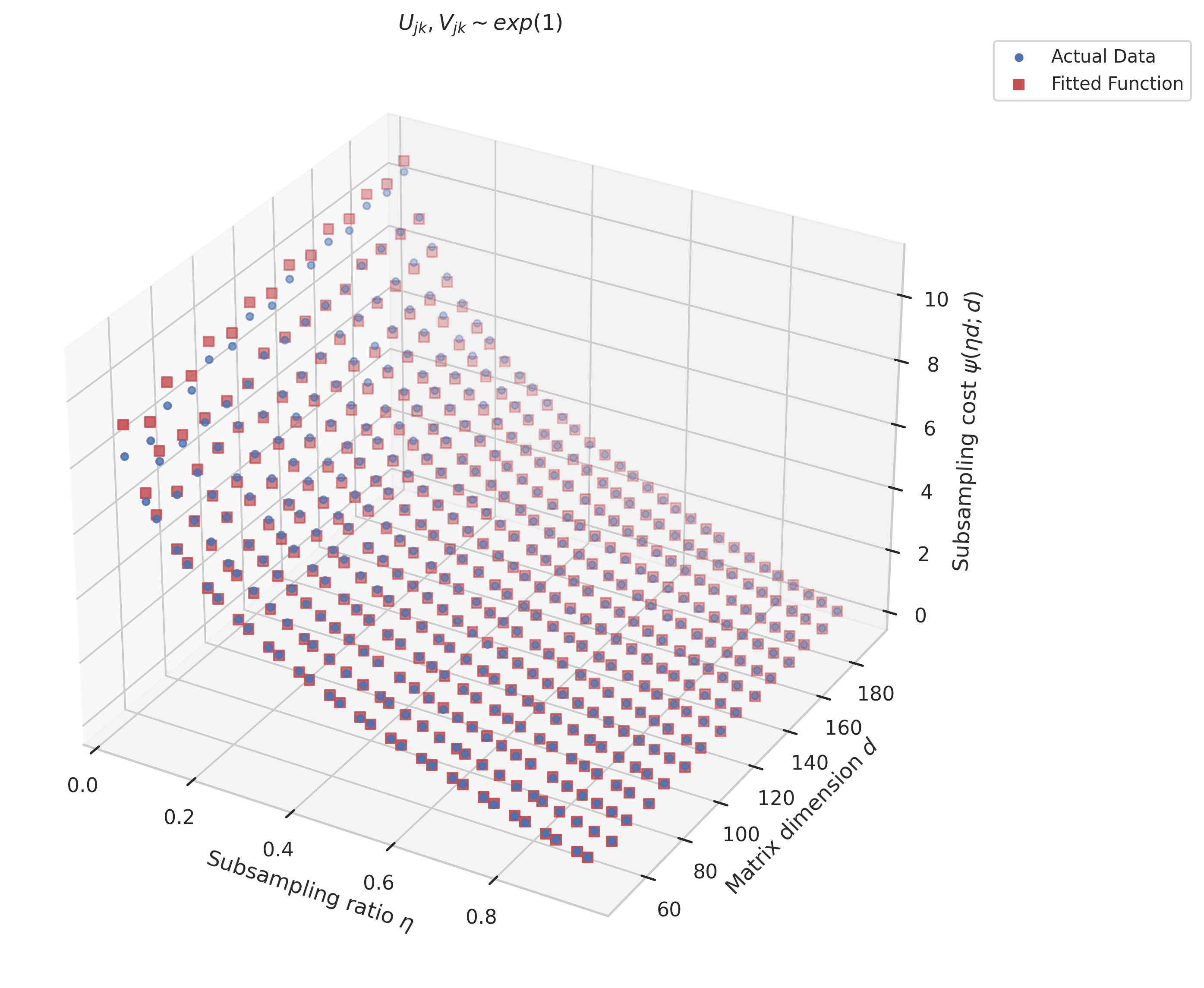}

        \caption{$\bU_{jk}, \bV_{jk}$ follow $\text{exp}(0,1)$}
        \label{fig:psi_exp}
    \end{subfigure}
\caption{Scatter plot of $\psi$ against $\eta$ and $d$ in blue dots, and the fitted function $c_s (1-\eta)^{\gamma_1} \eta^{\gamma_2} d^\iota$ presumed in Assumption~\ref{assumption: subsampling} in red dots, where values of $c_s$, $\gamma_1$, $\gamma_2$ and $\iota$ can be found in Table~\ref{tb:psi_param}. We consider a matrix of size $d_r\times d_c$ and rank $\rrank$, where $d=100, \rrank=3.$} 
    \label{fig:psi} 
\end{figure}
\begin{table}[h]
\centering
\begin{tabular}{|c|c|c|c|c|}
\hline
 & $c_s$ & $\gamma_1$ & $\gamma_2$ & $\iota$ \\ \hline
 $U_{jk}, V_{jk} \sim \mathcal{N}(0,1)$ & 1.3945 & 1.0478 & -0.3585 & -0.0073\\ \hline
 $U_{jk}, V_{jk} \sim U[0,1]$ & 1.9469 & 1.0491 & -0.4800 & -0.2571 \\ \hline
 $U_{jk}, V_{jk} \sim \text{exp}(1)$ & 2.0704 & 1.0513 & -0.2673 & 0.1688 \\ \hline
\end{tabular}
\caption{Estimates of $c_s, \gamma_1, \gamma_2, \iota$ in Assumption~\ref{assumption: subsampling} for different families of low-rank matrices $\bB = \bU\bV^{\top}$ where $\bU, \bV\in \mathbb{R}^{100\times 3}$.}
\label{tb:psi_param}
\end{table}

\section{Experiment Details}
\subsection{Experiment Details in Figure~\ref{fig:upper_bound_comparison} and Additional Experiments}
\label{app:visual_additional}
\paragraph{Data Generation Process.} We set the rank of the underlying reward matrix $\bB^*$ to be $\rrank=3$. We use the underlying model $\bB^* = \bU\bV^{\top}$, where $\bU$ and $\bV$ are random matrices of size 100 by 3 with entries drawn independently from $\mathcal{N}(0,1)$, to calculate the empirical near-optimal function $\mathbb{E}_{|\mathcal{I}_r|=|\mathcal{I}_c|=m}[\sqrt{g(h;\mathcal{I}_r,\mathcal{I}_c)}]$ and the subsampling cost function $\psi(m;d)$. We normalize the entries so that $\|\bB^*\|_{\infty}$ is bounded by 1. We set the smallest $h$ following the recipe in \eqref{eq: lowh value} by taking its dominating terms. The curves are averaged over 1000 trials.

In Figure~\ref{fig:app_upper_bound_comparison}, we plot the regret upper bounds under additional time horizons: $T=500$ and $T=1000$. We can see that the benefits \textsf{LRB} brings in over \textsf{Low-rank ETC} become smaller as the time horizon becomes shorter; eventually, they coincide. In both horizons, \textsf{ss-LRB} no longer touches \textsf{LRB}, showing that it is better to subsample for all range of $h$. 
\begin{figure}[H]
    \centering
    \begin{subfigure}[b]{0.45\textwidth}
    \centering
        \includegraphics[width=\textwidth]{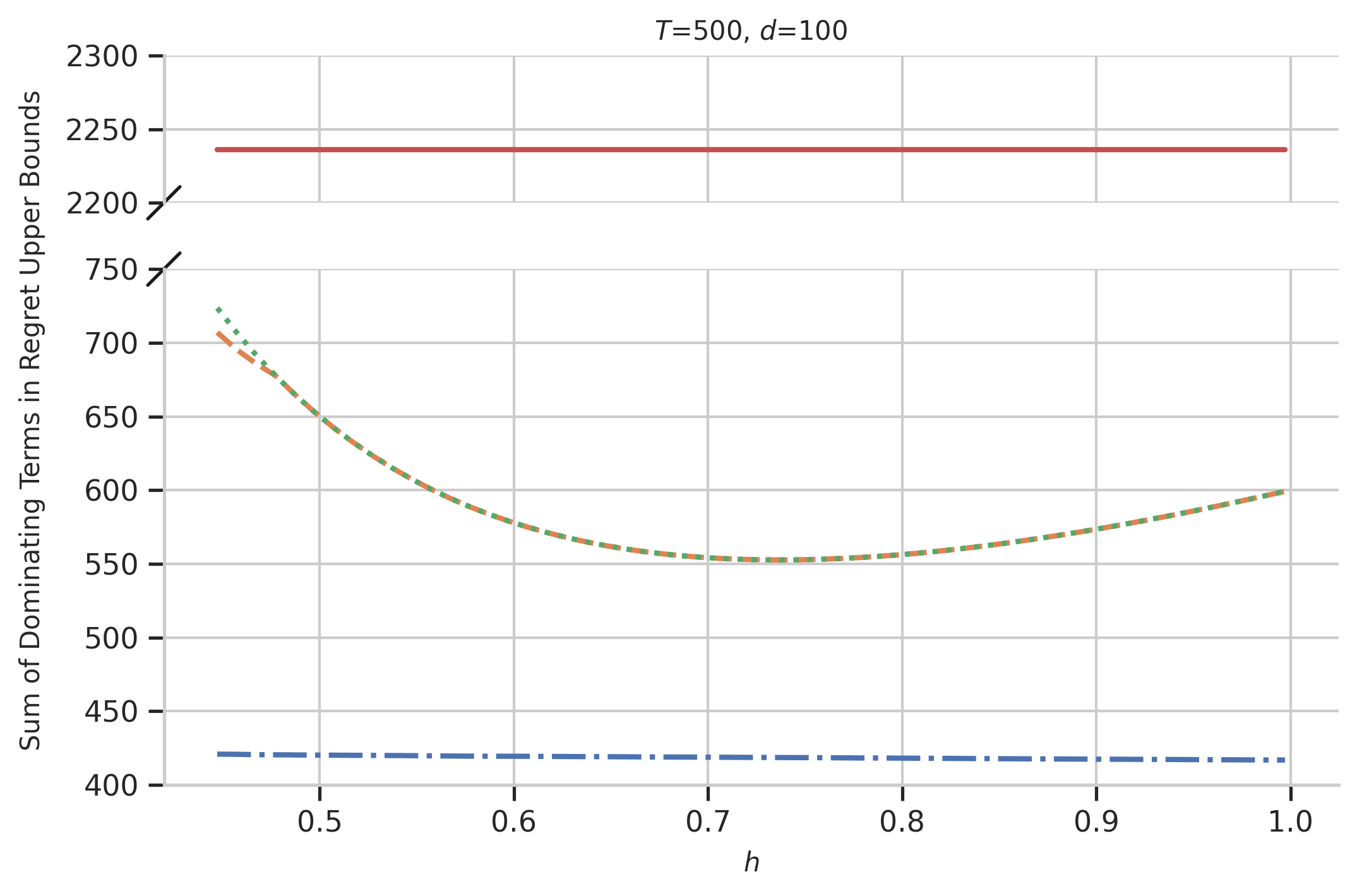}
            \caption{}
    \label{fig:upper_bound_comparison_T500}
    \end{subfigure}
    \hfill
        \begin{subfigure}[b]{0.54\textwidth}
        \centering
        \includegraphics[width=\textwidth]{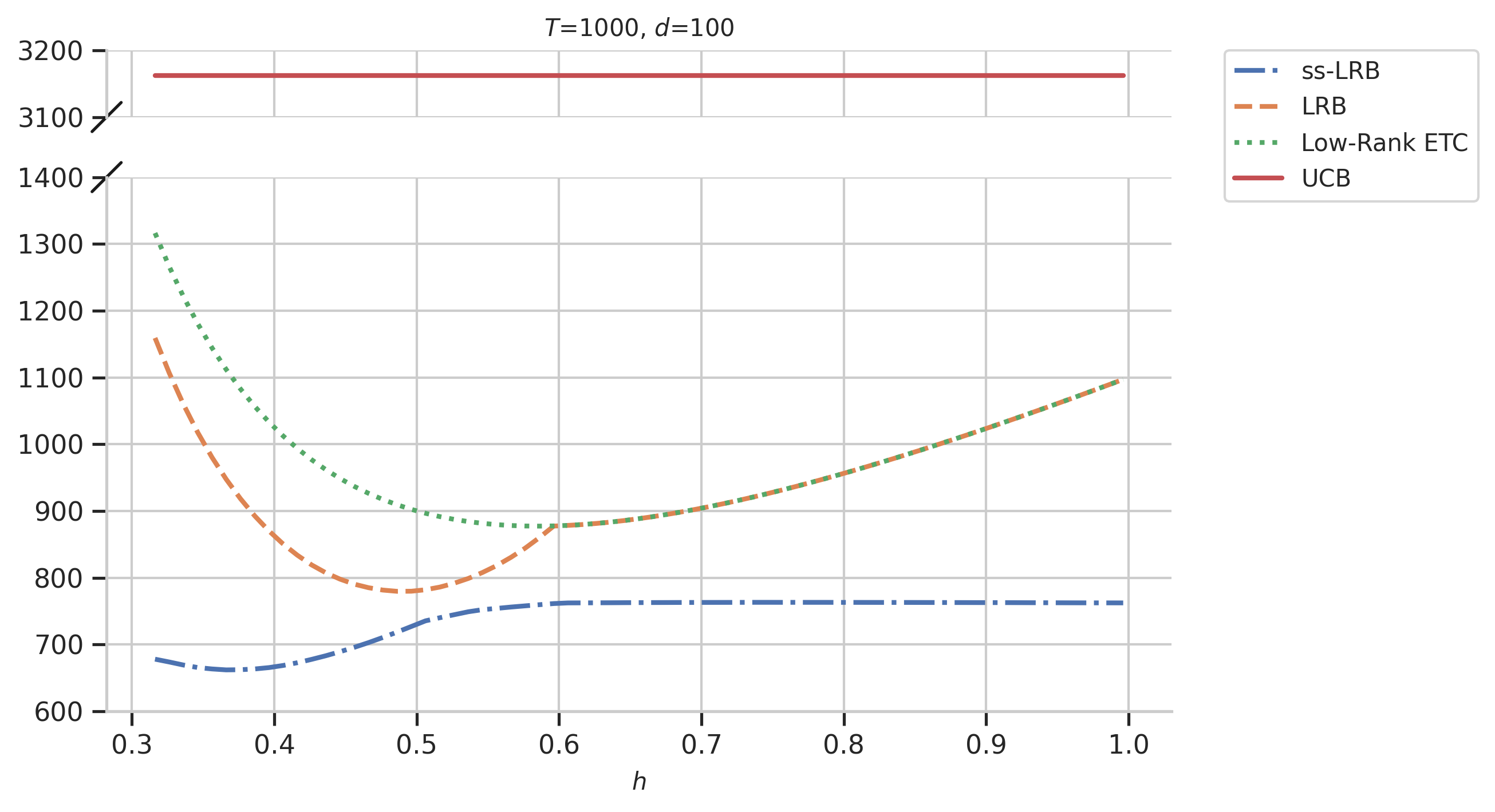}
            \caption{}
    \label{fig:upper_bound_comparison_T1000}
    \end{subfigure}
    \caption{Regret upper bound comparison among \textsf{ss-LRB}, \textsf{LRB}, \textsf{Low-rank ETC}, and \text{UCB} under different values of $h$ for \emph{additional} time horizons (a) $T = 500$ and (b) $T= 1000$.}
    \label{fig:app_upper_bound_comparison}
\end{figure}

\subsection{Experiment Details of Figure \ref{fig:synth_fig}}
\label{app:synth_param}
\paragraph{Additional Data Generation Process Details.} We use the underlying model $\bB^* = \bU\bV^{\top}$, where $\bU$ and $\bV$ are random matrices of size 100 by 3 with entries drawn independently and uniformly from $[0,1]$. We set the observation noise variance to be $ 0.1^2$.

\paragraph{Parameters.} Bandit algorithms require the decision-maker to specify a variety of input parameters that are often unknown in practice. In Figure \ref{fig:synth_fig}, we sweep a set of parameters for our algorithm as shown below (with a slight abuse of notation, we use $f$ to denote the total number of forced samples in the experiment). Note that, in these simulations, the gap between the largest arm and the second largest arm is at most 0.16, with mean 0.034 and standard error 0.003. The magnitude of $h$ is more than an order of magnitude larger, which validates the claims at the opening of Section \ref{sec:alg} about the fact that our targeted set may not necessarily contain just a single arm. For the benchmark \textsf{ss-UCB} algorithm, we use parameters suggested in computational experiments by the authors of \citep{bayati2020unreasonable} on the number of subsampled arms. We report the best regret achieved by the best $(f,h)$ combination in Figure~\ref{fig:synth_fig}.

\begin{table}[H]
    \centering
    \begin{tabular}{|c|c|p{6cm}|c|c|}
    \hline
    Time Horizon & Method & grid of $f$ & grid of $h$ & best $(f, h)$ \\
    \hline
        \multirow{2}{*}{$T=1000$} & \textsf{ss-LRB}-10 & 35, 40, 45, 50& 0.7, 0.8, 0.9, 1, 1.1 & (45, 1) \\ 
         & \textsf{ss-LRB}-20 & 70, 100, 130, 160& 0.7, 0.8, 0.9, 1, 1.1 &(70, 1) \\ 
         & \textsf{ss-LRB}-30 & 50, 100, 150, 200& 0.7, 0.8, 0.9, 1, 1.1 &(100, 0.9) \\ 
         & \textsf{ss-LRB}-40 & 100, 150, 200, 250& 0.7, 0.8, 0.9, 1, 1.1 & (100, 0.9) \\ 
         & \textsf{ss-LRB}-50 & 100, 150, 200, 250, 300&  0.7, 0.8, 0.9, 1, 1.1 & (100, 0.9) \\ 
         & \textsf{ss-LRB}-60 & 100, 150, 200, 250, 300, 350& 0.7, 0.8, 0.9, 1, 1.1 & (100, 1) \\ 
         & \textsf{LRB} & 150, 200, 225, 250, 300, 350, 400, 450, 500, 550 & 0.6, 0.8, 1, 1.2, 1.4, 1.6, 2 & (225, 1) \\
         \hline
        \multirow{2}{*}{$T=2000$} & \textsf{ss-LRB}-10 & 35, 40, 45, 50& 0.7, 0.8, 0.9, 1, 1.1 & (45, 1.1) \\
         & \textsf{ss-LRB}-20 & 70, 100, 130, 160& 0.7, 0.8, 0.9, 1, 1.1 & (100, 0.9) \\
         & \textsf{ss-LRB}-30 & 50, 100, 150, 200 & 0.7, 0.8, 0.9, 1, 1.1 & (150, 1) \\
         & \textsf{ss-LRB}-40 &  100, 150, 200, 250& 0.7, 0.8, 0.9, 1, 1.1 & (150, 0.9) \\
         & \textsf{ss-LRB}-50 &  100, 150, 200, 250, 300&  0.7, 0.8, 0.9, 1, 1.1 & (150, 1) \\
         & \textsf{ss-LRB}-60 & 100, 150, 200, 250, 300, 350& 0.7, 0.8, 0.9, 1, 1.1 & (250, 0.8) \\
         & \textsf{LRB} & 150, 200, 225, 250, 300, 350, 400, 450, 500, 550 & 0.6, 0.8, 1, 1.2, 1.4, 1.6, 2 & (225, 1)\\
        \hline
    \end{tabular}
    \label{tab:synth_fig}
\end{table}

\subsection{Parameters used in Figure \ref{fig:real_figure_final_reg}}
\label{app:real_param}
As described in Section \ref{sec: data-driven heuristics}, the parameters used in Figure~\ref{fig:real_figure_final_reg} are tuned on a pre-specified grid using a related historical dataset. We vary the submatrix size $m$ among the values of 10, 15, 20, 25, 30, 35. We also test the full matrix size. We vary the filtering resolution $h$ among the values of 1, 2, 3, 4, 5. The forced-sampling sizes $f$ tested are as follows:

for $m=10$, we vary the forced-sampling size $f$ among the values of 35, 40, 45, 50, 70;

for $m=15$, we vary the forced-sampling size $f$ among the values of 50, 100, 125, 150, 175;

for $m=20$, we vary the forced-sampling size $f$ among the values of 50, 100, 150, 200, 250;

for $m=25$, we vary the forced-sampling size $f$ among the values of 50, 100, 150, 200, 250, 300;

for $m=30$, we vary the forced-sampling size $f$ among the values of 50, 100, 150, 200, 250, 300;

for $m=35$, we vary the forced-sampling size $f$ among the values of 100, 150, 200, 250, 300, 350.

The selected parameters are listed in the table below:

\begin{table}[H]
    \centering
    \begin{tabular}{|c|c|c|c|}
         \hline
         Time Horizon & $m$ & $f$ & $h$  \\
         \hline
         $T = 2000$ & 25 & 200 & 3 \\
         \hline
        $T = 3000$ & 25 & 150 & 4 \\
         \hline
        $T = 4000$ & 25 & 150 & 4 \\
         \hline
        $T = 5000$ & 25 & 150 & 4 \\
         \hline
        $T = 6000$ & 25 & 150 & 4 \\
         \hline
    \end{tabular}
    \label{tab:real_fig}
\end{table}

\subsection{Real Data on Music Streaming}
\label{app:real-data}

\paragraph{Music Streaming Services.} With the development of fast internet and stable mobile connections, music entertainment industry has been reshaped by streaming services. Apps such as Spotify and Apple Music are very popular nowadays. In fact, according to Recording Industry Association of America (RIAA), the percentage of streaming in 2019 rose to 80\% of the U.S. music market in 2019, while it only accounts for 7\% in 2010. Further, paid streaming subscriptions rocketed to 611 million by the middle of 2019 from 1.5 million. Streaming has also surpassed both digital downloads and physical products and accounts for 80\% of the market \citep{RIAA}. 

\paragraph{Dataset.} We use a publicly available dataset collectively supplied by the Revenue Management and Pricing (RMP) Section of INFORMS and NetEase Cloud Music, one of the largest music streaming companies in China. The dataset contains more than 57 million impressions/displays of music content cards recommended to a random sample of 2 million users from November 1st, 2019 to November 30th, 2019. For each impression, the dataset provides the corresponding user activities, such as clicks, likes, shares, follows, whether the user commented the impression, whether the user viewed the comments or whether the user visited the creator's homepage. The dataset also contains information on each user, such as the location, age, gender, number of months registered, number of followed, and the activity intensity level. It also has information on each creator, such as gender, number of months registered, number of followers and number of followed, anonymized creator type and activity intensity level. More details on the dataset can be found in \cite{zhang2022netease}. 

\textbf{Distinction from a related formulation.}
    Note that a related setting which is often considered in bandit problems is the two-sided market where the goal is to recommend content creators to users, or to find a user cohort to recommend to when a new content creator arrives at the platform. There is usually one-sided uncertainty, such that the uncertain side is viewed as the unknown context, and the well-understood side is viewed as arms. For example, when a new user with unknown context joins, their interactions with well-understood content creators (arms) let us find a suitable content creator for the user. Similarly, when a new creator with unknown context joins, their interaction with well-understood users (arms) let us find a suitable user cohort for the new content creator. Prior papers (e.g. \cite{bastani2022learning}) have studied this problem through the lens of low-rank matrix-estimation. 
    
    In contrast, as we have mentioned in Example \ref{ex:ads}, we consider the case that an online platform wants to target advertisement campaign at one pair of user cohort and content creator segment that interact the most. The amount of interaction is used as the proxy for the rewards the advertiser can collect if the pair is targeted for the campaign.
    
    Our music streaming dataset fits this specific setting as we synthesize the scenario that the advertiser wants to target ads campaign on the music streaming platform. Hence, although this dataset can also be used to analyze the related formulation as aforementioned, our focus is not on recommending music from content creators to users, but rather to target ads campaign at a pair of user cohort and content creator segment.

\textbf{Different modes of advertising campaign.}
    There are different modes of advertising campaigns: a natural and common mode is a one-sided optimization, when advertising is driven by either user targeting (via cookie tracking on the internet) or brand spent on content (via TV advertising); and another is the mode we are considering, namely to jointly optimize over both user cohort and content type. 

In a one-sided optimization, an advertiser can specify a content type that they think is relevant based on prior experiences (e.g., the content keyword), and the platform (e.g. Youtube) decides which user cohort to target the specified content. They cannot target every cohort due to advertising budget and ads position constraints. As another setting, when the user cohort has been determined, for example, a cohort that has geographical location and/or device type (mobile/tablet/pc) specified, the platform needs to optimize for the kind of content they would like to target this cohort with. 

The mode that we have considered, namely to optimize jointly over both user cohort and content type, is needed when the advertiser does not pre-specify user cohort or content type. One industry example is a new type of product called Performance Max Campaign offered by Google. In this product, advertisers provide their budget and conversion objectives, and it is up to Google AI to decide for which user cohort (e.g., clustered by locations, activity intensity levels, interests, search keywords etc.) and content type (e.g., clustered by content creator activity intensity levels, platforms, media formats) to deliver the ads. This automatic way of doing advertising campaign will help unlock new audience for the ads and add new experience for the users. Such a joint mode has also been mentioned as a motivation in \cite{kveton2017stochastic}, where it tries to help a marketer design a campaign that finds a pair of product and user segment which maximizes the click-through-rate (CTR) of an ads.

We can also transform one-sided optimization into joint optimization. For example, we consider the scenario when the advertiser has specified a keyword for a particular content type. We can sub-divide content that belong to this type into smaller clusters and jointly optimize for the smaller clusters and the user cohorts to achieve more precise targeting. 

The full advertising campaign cycle is more intricate and sophisticated. It involves bidding, budget optimization, user clustering, content grouping and so on. We focus on one component of the process, which is when we have pre-defined user cohorts and content types, and treat the amount of interaction (e.g. CTR) as a proxy for ads revenue, as discussed in Example \ref{ex:ads}.

\subsection{Experiment Details of Section~\ref{subsec:contextual-model}}
\label{app:context}
\paragraph{Additional Data Generation Process Details.} Entries in $\bU$ and $\bV_k$ follow normal distribution $\mathcal{N}(0,1)$ independently. The observation noise $\epsilon_{t}$ is independently drawn from $\mathcal{N}(0, 0.1)$. Arguments in the context vector $X$ also follow $\mathcal{N}(0,1)$ independently.

\paragraph{Regrets of \textsf{LRB} under Different Experiment Parameters.} We tried many different combinations of the number of forced samples $f$ and the filtering resolution $h$, whose cumulative regrets (at $T=500$) are displayed in Figure \ref{fig:heatmap}. It can be seen that there is at most a factor of 2 impact on the regret of \textsf{LRB} comparing to the best performant one shown in figure \ref{fig:cumulative} with ($h$,$f$) = ($20$,$35$), as parameters $h$ and $f$ vary. In comparison to \textsf{OFUL}, such impact is negligible, which shows that our \textsf{LRB} Algorithm is robust under a wide range of parameters in terms of performing better than the benchmark \textsf{OFUL}.

\begin{figure}
     \centering
         \includegraphics[scale=0.6]{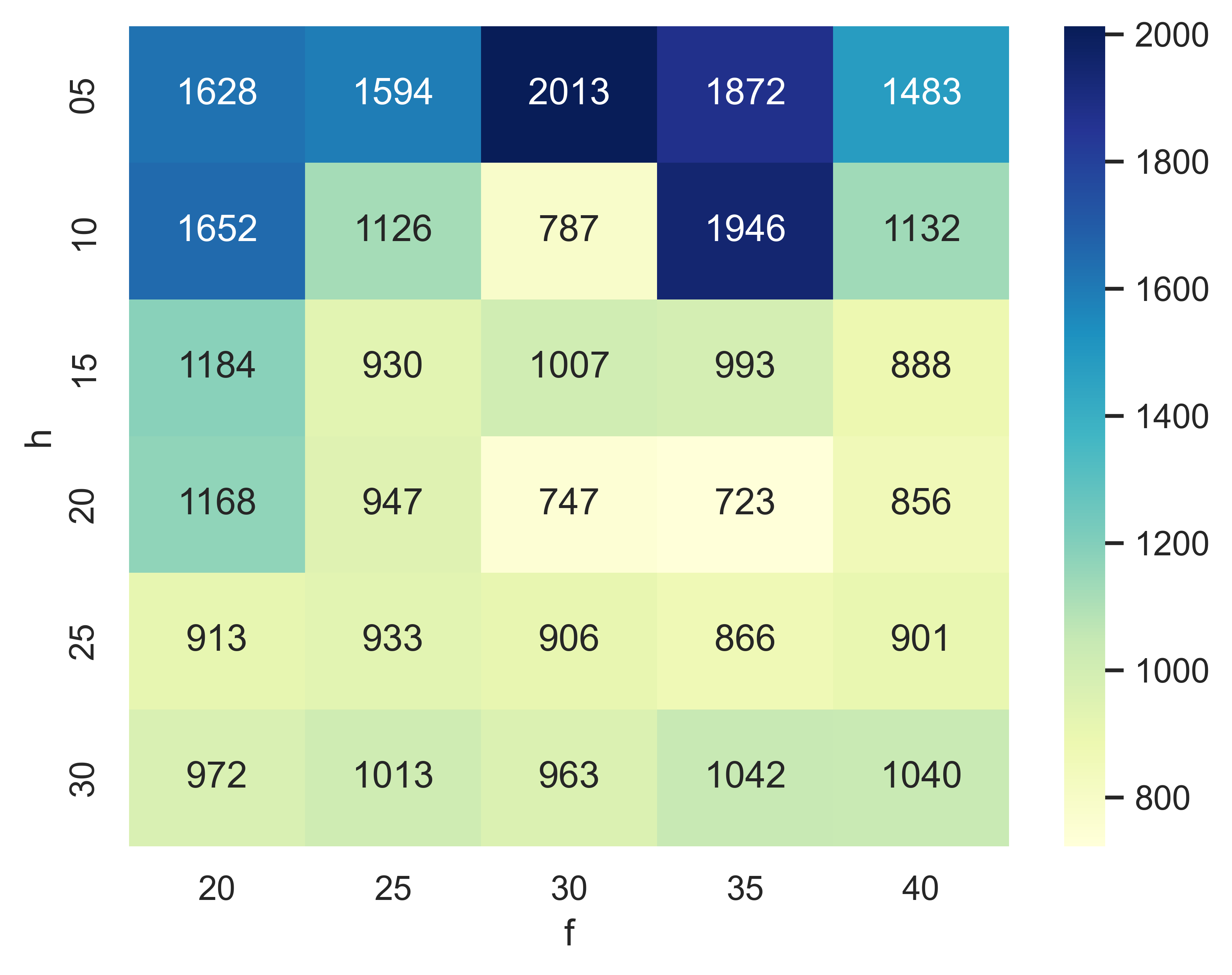}
        \caption{Average cumulative regrets of the \textsf{LRB} Algorithm \ref{alg:low-rank} at different combinations of number of forced samples $f$ and \resolution{} $h$.}
        \label{fig:heatmap}
\end{figure}

\section{Detailed Comparison against \cite{jun2019bilinear} and \cite{lu2021low}}
\label{app:comparison}

In this section, we discuss our work versus \cite{jun2019bilinear} and \cite{lu2021low}. We focus primarily on discussing \cite{jun2019bilinear}, and the comparison to \cite{lu2021low} will be analogous and discussed briefly at the end of this section. 

As an overview, \cite{jun2019bilinear} propose a two-stage algorithm ``Explore-Subspace-Then-Refine" (\textsf{ESTR}) to solve the so-called ``bilinear bandit" problem where the underlying unknown parameters of dimension $p = d^2$ form a low-rank matrix. This is a linear bandit problem that involves infinite number of contextual arms.

In a nutshell, as we have mentioned in Section \ref{sec:lit}, \cite{jun2019bilinear} consider a more general setting and their algorithm can be adapted to solve our problems, by taking their $\mathcal{X}$ and $\mathcal{Z}$ to be the set of canonical basis vectors. However, when narrowing down to the specific non-contextual setting we focus on, our algorithm has better theoretical and empirical performance. Specifically, when adapting to our non-contextual setting, their theoretical bound is $O(d^{3/2}\sqrt{T})$, which is already bigger than that of our standard non-contextual bandit benchmark, $O(d\sqrt{T})$. We have shown no worse performance against the standard non-contextual bandit benchmark for all time horizons and improvement when the time horizon is short even if we do not know the structure of the near-optimal function $g(h)$. We show the empirical comparison below where we talk about the difficulty of adapting \cite{jun2019bilinear} in the short horizons under our setting, given that their focus is on longer horizons. We leave it as future work to explore how our algorithm can be adapted to leveraging contextual information and how such an adaptation compares with \cite{jun2019bilinear} under the linear bandit setting.

\textbf{Difficulty of adapting \cite{jun2019bilinear} to short horizons.} From the algorithm described in Figure 1 in \cite{jun2019bilinear}, they consider a long-enough horizon such that they can at least pull each arm the same number of times. So they are considering a regime with $T$ at least bigger than $d^2$. Indeed, \cite{jun2019bilinear} and \cite{lu2021low} both require the number of exploration periods to be of the order $\Theta(d^{3/2}\sqrt{T})$ (Corollary 2 in the former and Theorem 4 in the latter), which exceeds the order of $T$ when $T \leq d^3$. That is, their algorithm is equivalent to random sampling and thus incur linear regret when $T\leq d^3$. In contrast, our two-phase algorithm can be fairly effective and improves from linear regret when $T$ is small as shown in the aforementioned Proposition \ref{prop: compare i.i.d. bandits} and our discussion on the subsampling version in Section \ref{sec:subsamp}.

\textbf{Empirical performance.} We show empirically in Figure~\ref{fig:synth_fig_jun} that the algorithm \textsf{ESTR-BM} proposed by \cite{jun2019bilinear} cannot emulate the performance of our algorithms in short horizons $T=1000$ and $T=2000$ under the same synthetic experiment set-up described in Section~\ref{sec:synth}. In fact, Figure~\ref{fig:jun_cumulative_regret} shows that \textsf{ESTR-BM} incurs linear regret for the short horizons we consider. We only show results averaged over 20 trials since \textsf{ESTR-BM} takes a long time to run when the matrix dimension is large.

\begin{figure}[h]    \centering
    \includegraphics[scale = 0.6]{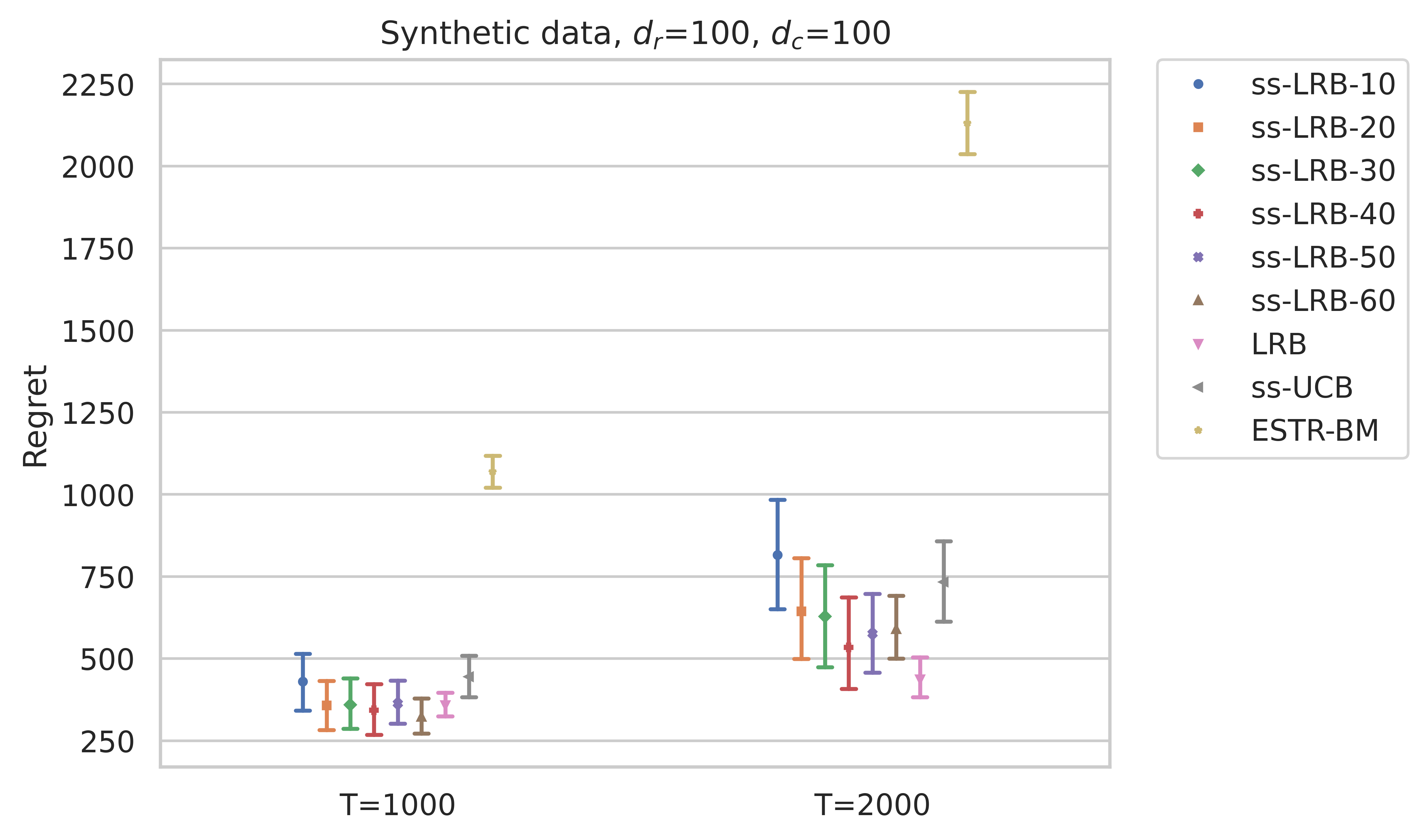}
    \caption{20 trials of the synthetic experiment described in Section \ref{sec:synth} with comparison against the \textsf{ESTR-BM} algorithm proposed in \cite{jun2019bilinear}. This experiment is previously shown in Figure \ref{fig:synth_fig} with 200 trials.}
    \label{fig:synth_fig_jun}
\end{figure}

\begin{figure}[h]
\begin{subfigure}[b]{0.4\textwidth}
\centering
 \includegraphics[width=\textwidth]{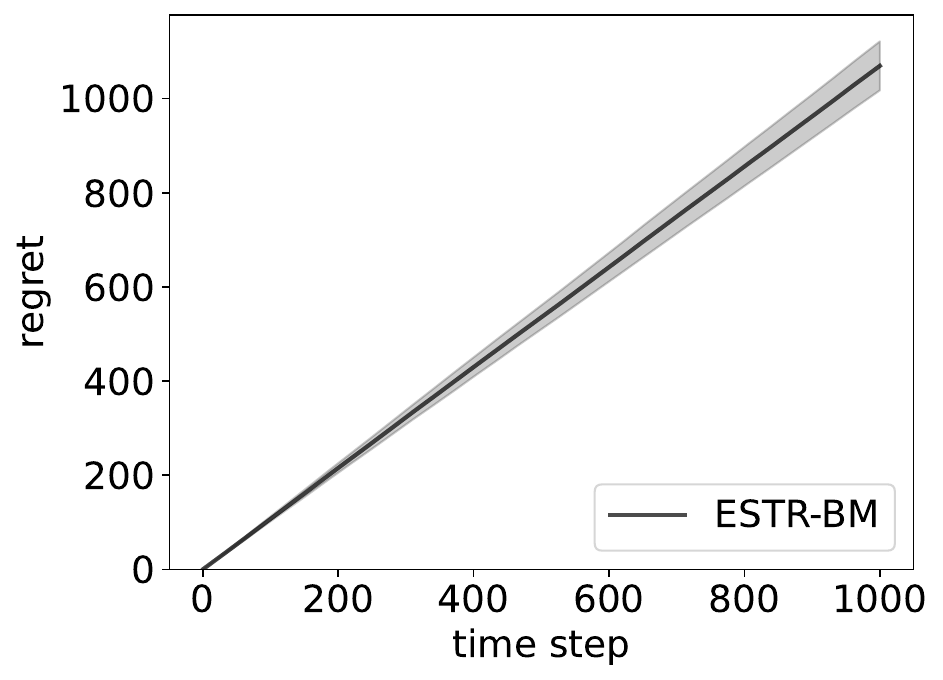}
\caption{}
 \label{fig:Jun-T1000}
\end{subfigure}     
\hfill
\begin{subfigure}[b]{0.4\textwidth}
\centering
 \includegraphics[width=\textwidth]{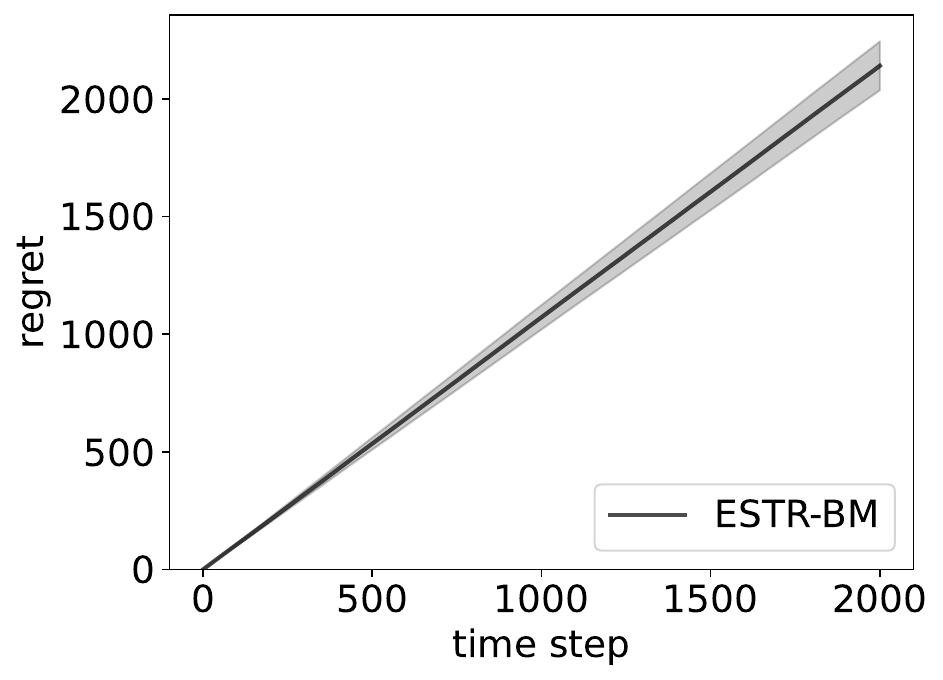}
\caption{}
\label{fig:Jun-T2000}
\end{subfigure}
\caption{Cumulative regret of the \textsf{ESTR-BM} algorithm proposed in \cite{jun2019bilinear} under total experiment time horizon $T=1000$ and $T=2000$ respectively.}
        \label{fig:jun_cumulative_regret}
\end{figure}

\textbf{Our \textsf{LRB} is a more straightforward and interpretable design for non-contextual settings.} Our way of dealing directly with observed entries helps preserve more information of the arms. To see this, we describe in more details of how their algorithm proceeds first. In their algorithm, a low-rank estimation is performed in the first stage to learn the left and right subspace spanned by the learned left and right latent feature vectors of the unknown matrix. The left and right arm sets are then projected onto the left and right subspaces respectively to obtain new arm sets, which is fed into their second-stage linear bandit algorithm called \textsf{lowOFUL} (a variant of \textsf{OFUL} that utilizes a ridge regression regularizing each coordinate of the estimation differently). Such a projection of the arm set helps reduce the dimension of the learning problem faced by \textsf{lowOFUL} from $p = d^2$ to $k = O(d\rrank)$ to speed up the learning process. The radius of the confidence ellipsoid is informed by the matrix estimation error bound and will be discussed in more details later.

Our arm elimination process is more straightforward and interpretable such that the arms in the targeted sets are not altered or rotated. Furthermore, inaccurate matrix estimation will give rise to a not-so-ideal rotated armset in their algorithm, whereas our filtering mechanism makes sure that after $O(\log T)$ forced-sample periods, the best arm is in the targeted set. The estimation error will not directly propagate to another phase since our ``targeted exploration + exploitation" phase has its own estimation subroutine. The filtering resolution $h$ allows us to accommodate low-rank estimation error by controlling the targeted set size.

\textbf{Comparison with \cite{lu2021low}.} Their work builds on \cite{jun2019bilinear} and analyzes the generalized linear bandit problem. Their result applies to more general action sets and relaxes incoherence and bounded eigenvalue assumptions made in \cite{jun2019bilinear}. However, when casting our problem as a special case of their set-up, we again run into the same argument we made when comparing to \cite{jun2019bilinear}. In particular, their regret bound is also $O(d^{3/2}\sqrt{T})$, which cannot emulate the benchmark bounds in our non-contextual setting.

\section{Benefit of Targeted Exploration + Exploitation Shown Empirically}
\label{app:benefit_filtering}

We have compared with \textsf{Low-rank ETC} in terms of the theoretical upper bounds in Remark~\ref{rmk:benefit_ucb} and Section~\ref{sec:visual} to show the benefit of targeted exploration + exploitation. To recall, \textsf{Low-rank ETC} is a naive explore-then-commit style algorithm that only undergoes the ``pure exploration" phases and uses the forced samples to construct a low-rank estimator; it then commits to the best estimated entry without targeted exploration + exploitation. We empirically show that the design of our \textsf{LRB} and \textsf{ss-LRB} is better  in Figure~\ref{fig:synth_fig_no_filtering}, by testing a wide range of ``pure exploration" phase lengths for \textsf{Low-rank ETC} to make sure that its under-performance is not due to insufficient exploration
 (that is, we let $f$ range from 100 to 1925, using a step size of 25, and we report the lowest regret).

\begin{figure}[H]    \centering
    \includegraphics[scale = 0.6]{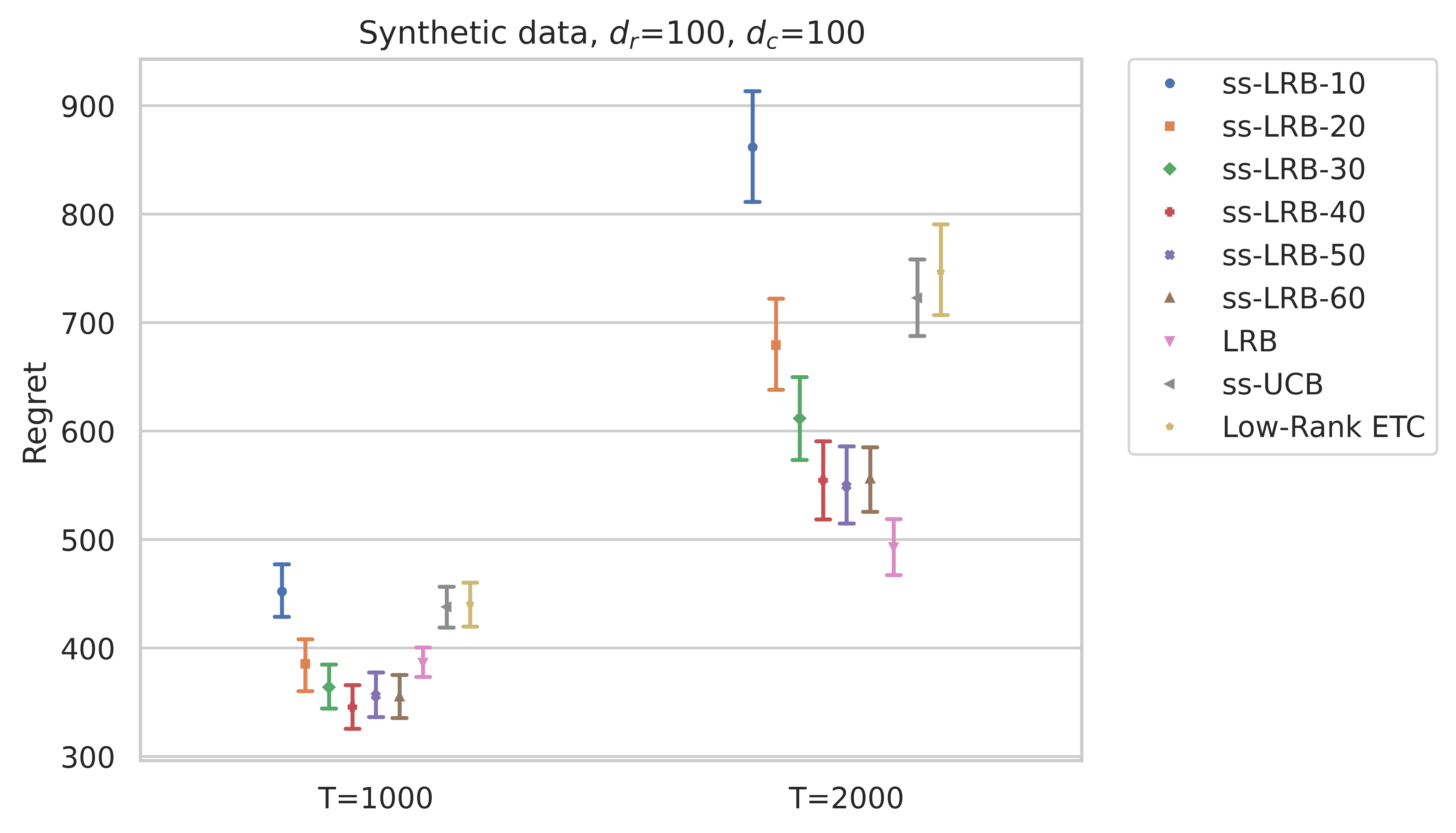}
    \caption{Synthetic experiment described in Section \ref{sec:synth} with comparison against a naive explore-then-commit algorithm \textsf{Low-rank ETC} that commits to the biggest estimated arm based on a low-rank estimation without the filtering mechanism nor the targeted exploration + exploitation. Part of this experiment is previously shown in Figure \ref{fig:synth_fig}.}
    \label{fig:synth_fig_no_filtering}
\end{figure}

Intuitively, the targeted exploration + exploitation brings in benefits since we utilize the additional information we have: the best arm is inside our targeted set parameterized by the filtering resolution $h$ (see Lemma~\ref{lemma:candidates}), but it is not necessarily estimated to be the best. As aforementioned, $h$ allows us to accommodate low-rank estimation error by controlling the targeted set size. Hence, we need to work with the targeted set given by our filtering mechanism, and not the entire arm set nor the estimated best arm based on the low-rank estimator.

\end{APPENDICES}

\end{document}